\newtheorem{assumption}{Assumption}
\newcommand{\vx}{{\bf x}}
\newcommand{\valpha}{{\bm \alpha}}
\newcommand{\vbeta}{{\bm \beta}}
\newcommand{\vsigma}{{\bm \sigma}}
\newcommand{\vTheta}{{\bm \Theta}}
\newcommand{\vtheta}{{\bm \theta}}
\newcommand{\ve}{{\bf e}}
\newcommand{\vv}{{\bf v}}
\newcommand{\vz}{{\bf z}}
\newcommand{\vw}{{\bf w}}
\newcommand{\vt}{{\bf t}}
\newcommand{\vu}{{\bf u}}
\newcommand{\vr}{{\bf r}}
\newcommand{\vC}{{\bf C}}
\newcommand{\vB}{{\bf B}}
\newcommand{\vD}{{\bf D}}
\newcommand{\vA}{{\bf A}}
\newcommand{\vY}{{\bf Y}}
\newcommand{\vX}{{\bf X}}
\newcommand{\vI}{{\bf I}}
\newcommand{\vK}{{\bf K}}
\begin{document}

\title{Learning Analysis of Kernel Ridgeless Regression with Asymmetric Kernel Learning}
\author{Authors}

\author{\name Fan He 
        \email fan.he@esat.kuleuven.be\\
        \addr Department of Electrical Engineering (ESAT), STADIUS Center for Dynamical Systems, \\
       Signal Processing and Data Analytics, KU Leuven, 
       Leuven, Belgium
       \AND
       \name Mingzhen He \email mingzhen\_he@sjtu.edu.cn \\
       \addr MOE Key Laboratory of System Control and Information Processing\\
       Institute of Image Processing and Pattern Recognition,
       Shanghai Jiao Tong University\\
       Shanghai, P.R. China
       \AND
       \AND
       \name Lei Shi
       \email leishi@fudan.edu.cn \\
       \addr Shanghai Key Laboratory for Contemporary Applied Mathematics\\
       School of Mathematical Sciences, Fudan University, 200433, Shanghai, P.R. China\\
       Shanghai Artificial Intelligence Laboratory, 200232, Shanghai, P.R. China
       \AND
       \name Xiaolin Huang \email xiaolinhuang@sjtu.edu.cn \\
       \addr MOE Key Laboratory of System Control and Information Processing\\
       Institute of Image Processing and Pattern Recognition\\
       Institute of Medical Robotics,
       Shanghai Jiao Tong University,
       200240, Shanghai, P.R. China
       \AND
       \name Johan A.K. Suykens \email johan.suykens@esat.kuleuven.be \\
       \addr Department of Electrical Engineering (ESAT), STADIUS Center for Dynamical Systems, \\
       Signal Processing and Data Analytics, KU Leuven,
       Leuven, Belgium
       }

\editor{My editor}

\maketitle

\begin{abstract}
Ridgeless regression has garnered attention among researchers, particularly in light of the ``Benign Overfitting'' phenomenon, where models interpolating noisy samples demonstrate robust generalization. However,  kernel ridgeless regression does not always perform well due to the lack of flexibility. This paper enhances kernel ridgeless regression with Locally-Adaptive-Bandwidths (LAB) RBF kernels, incorporating kernel learning techniques to improve performance in both experiments and theory. 
For the first time, we demonstrate that functions learned from LAB RBF kernels belong to an integral space of Reproducible Kernel Hilbert Spaces (RKHSs). Despite the absence of explicit regularization in the proposed model, its optimization is equivalent to solving an $\ell_0$-regularized problem in the integral space of RKHSs, elucidating the origin of its generalization ability.
Taking an approximation analysis viewpoint, we introduce an $l_q$-norm analysis technique (with $0<q<1$) to derive the learning rate for the proposed model under mild conditions. This result deepens our theoretical understanding, explaining that our algorithm's robust approximation ability arises from the large capacity of the integral space of RKHSs, while its generalization ability is ensured by sparsity, controlled by the number of support vectors.
Experimental results on both synthetic and real datasets validate our theoretical conclusions.

\end{abstract}

\begin{keywords}
  kernel ridgeless regression, approximation analysis, LAB RBF kernel, the integral space of RKHSs, $\ell_0$ regularization
\end{keywords}

\section{Introduction}
Kernel methods play a foundational role within the machine learning community, and maintain their importance thanks to their interpretability, strong theoretical foundations, and versatility in handling diverse data types \citep{ghorbani2020neural, bach2022information, jerbi2023quantum}. However, as newer techniques like deep learning gain prominence, kernel methods reveal a shortcoming: the learned function's flexibility often falls short of expectations.
A sufficiently flexible model, often characterized by over-parameterization \citep{allen2019convergence, zhou2024learning}, has attracted researchers' attention due to the phenomenon of \textquotedblleft Benign Overfitting\textquotedblright.
This phenomenon, supported by extensive empirical evidence, particularly in deep learning models, suggests that over-parameterized models have the capacity to interpolate noisy training data and yet exhibit effective generalization on test data \citep{Ma2017ThePO, Montanari2020TheIP, cao2022benign, tsigler2023benign}. 

The identification of benign overfitting has motivated the exploration of ridgeless regression \citep{bartlett2020benign, tsigler2023benign}, particularly  kernel ridgeless regression  \citep{liang2020just}, because the analysis of kernel interpolation methods proves more tractable and provides valuable insights for understanding the behavior of deep neural networks \citep{jacot2018neural, belkin2018understand}.
Let the data space $\mathcal{X}$ be a compact subset of $\mathbb{R}^d$.
We call a kernel a Mercer kernel \citep{aronszajn1950theory} if it is continuous, symmetric and positive semi-definite on $\mathcal{X}\times \mathcal{X}$.
We denote a Mercer kernel by $\mathcal{K}(\cdot, \cdot):\mathcal{X}\times \mathcal{X}\mapsto \mathbb{R}$, and it is defined via $\mathcal{K}(\vx,\vx') = \langle \phi(\vx), \phi(\vx')\rangle,\forall \vx,\vx'\in\mathcal{X}$. 
Its generated RKHS is denoted as $(\mathcal{H},\langle \cdot, \cdot\rangle_{\mathcal{K}})$, where
$
    \mathcal{H}_\mathcal{K} = \overline{\mathrm{span}}\{\mathcal{K}(\vx,\cdot): \vx\in\mathcal{X}\}
$ with $\langle \mathcal{K}(\vx,\cdot), \mathcal{K}(\vx',\cdot)\rangle_{\mathcal{K}}=\mathcal{K}(\vx,\vx')$.
Let $\mathcal{Y}\subset\mathbb{R}$ and $\mathcal{Z}=\mathcal{X}\times\mathcal{Y}$.
Denote observations $\vz=\{\vz_i =(\vx_i,y_i)\}_{i=1}^N\subset\mathcal{Z}^N$, which are independently drawn from some Borel probability distribution $\rho$ on $\mathcal{Z}$.
Then by adding a regularization to the least-squares loss function, a classical regression model is obtained as follows,
\begin{equation}\label{equ: t problem}
    f_{\mathrm{ridge}} = \arg\min_{f\in\mathcal{H}}\;\;
    \sum_{(\vx_i,y_i)\in\vz}\|f(\vx_i)-y_i\|_2^2+\lambda\|f\|^2_{\mathcal{H}},
\end{equation}
where $\lambda>0$ is a pre-given trade-off parameter and $\|\cdot\|_{\mathcal{H}}$ is the norm induced by the inner product $\langle \cdot,\cdot \rangle_{\mathcal{K}}$. 
The model described in Equation (\ref{equ: t problem}) is referred to as kernel ridge regression \citep{vovk2013kernel}. According to the Representer theorem, its solution can be  represented as a linear combination of function evaluations on the training dataset, i.e.,
\begin{equation}\label{equ: solution formulation}
    f(\vt) = \sum_i\alpha_i\mathcal{K}(\vt, \vx_i),
\end{equation}
where $\valpha\in\mathbb{R}^N$ denotes the combination coefficients.
Then kernel interpolation is achieved via the kernel ridgeless regression model by setting $\lambda=0$ in (\ref{equ: t problem}). That is,
\begin{equation}\label{equ: ridgeless problem}
    f_{\mathrm{ridgeless}} = \lim_{\lambda\to 0}\left\{\arg\min_{f\in\mathcal{H}}\;\;
    \sum_{(\vx_i,y_i)\in\vz}\|f(\vx_i)-y_i\|_2^2+\lambda\|f\|^2_{\mathcal{H}}\right\},
\end{equation}
of which the solution is not unique, but one of them takes the same form as Equation (\ref{equ: solution formulation}) \citep{rakhlin2019consistency, lin2024kernel}.

However, kernel ridgeless regression does not always performs well. In theoretical analysis, current investigations show that ridgeless regression only exhibits the benign overfitting phenomenon under the assumption of a high-dimensional regime \citep{hastie2022surprises, mei2022generalization}. In low-dimensional scenarios \citep{buchholz2022kernel} or fixed-dimensional setups \citep{beaglehole2023inconsistency}, the phenomenon is not valid for interpolating kernel machines with popular kernels, such as Gaussian, Laplace, and Cauchy kernels.

This coincides with our practical observation that the performance of kernel ridgeless regression can be unsatisfactory.
As shown in Figure~\ref{fig: example} (a), the traditional kernel interpolation model using a single RBF kernel is not robust to noisy data and fails to fit signals with varying frequency.
As the key insight of benign overfitting or the double descent phenomenon is to leverage over-parameterized models for sample interpolation \citep{allen2019learning,chatterji2021finite, tsigler2023benign}, the imperfect interpolation observed in kernel machines can be attributed to its inherent lack of flexibility.
In the context of kernel ridgeless regression models, as shown in Equation (\ref{equ: solution formulation}), the resulting interpolation function only has only $N$ free parameters, making its flexibility considerably less than that of over-parameterized deep models.   Due to this challenge, current experimental investigations of over-parameterized kernel machines often resort to techniques such as random feature \citep{liu2022double} or neural tangent kernels \citep{adlam2020neural}.


\begin{figure*}[tb]
\begin{center}
\centerline{\includegraphics[width=0.95\textwidth]{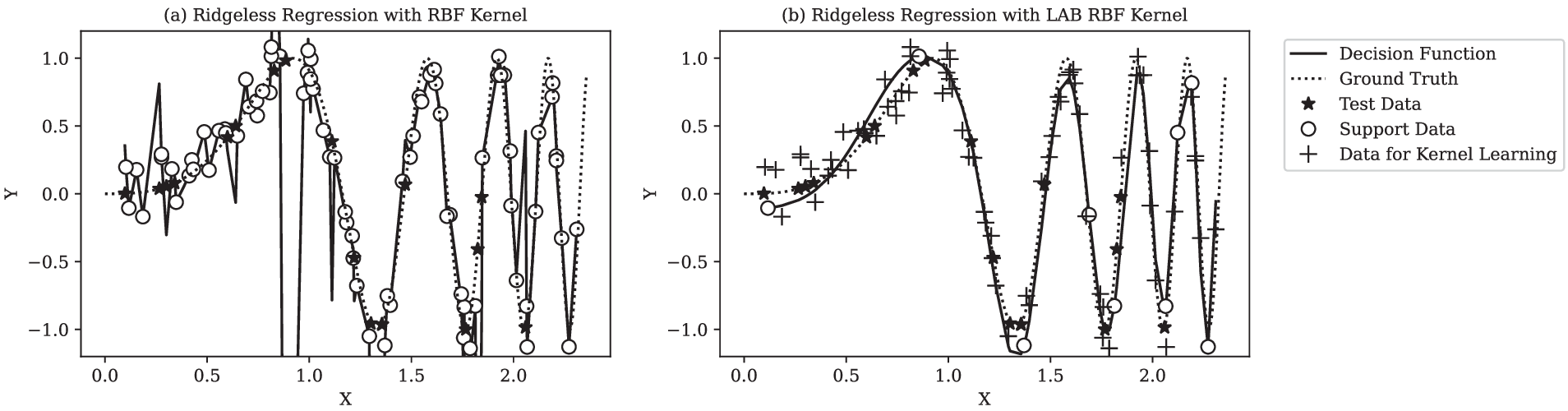}}
\caption{A toy example illustrating kernel ridgeless regression applied to a one-dimensional signal $y=\sin(2x^3)$. In (a), the traditional RBF kernel is utilized, directly interpolating all data points. In (b), asymmetric kernel learning is applied, where a small subset is used as support data and the LAB RBF kernel is learnt from the remaining data.}
\label{fig: example}
\end{center}
\end{figure*}

Recognizing this problem, this paper introduces a solution by enhancing the model with an asymmetric kernel leaning technique.
Specifically, we propose to utilize an asymmetric RBF kernel incorporating with locally adaptive bandwidths as follows,
\begin{equation}\label{equ: kernel function}
    \mathcal{K}(\vx,\vx_i) 
    = \exp\left\{- \|\theta_i\odot(\vx-\vx_i)\|_2^2\right\},\qquad \forall \vx_i\in\vX_{tr}.
\end{equation}
We name the above kernel function as the Local-Adaptive-Bandwidth RBF (LAB RBF) kernel. The distinguishing feature of LAB RBF kernels, in comparison to conventional RBF kernels, is the assignment of distinct bandwidths to each sample $\vx_i$ rather than utilizing a uniform bandwidth across all data points. In this approach, we discretely define the bandwidth for each training data point individually.

By incorporating asymmetric kernel learning, a new framework for kernel ridgeless regression is proposed in this paper.
As illustrated in Figure~\ref{fig: framework}, our approach not only learns the coefficient $\valpha$, but also estimates the specific values of $\theta_i\in \mathbb{R}^d,\forall i$ from the training data.
The inclusion of data-dependent $\theta_i$ greatly enhances flexibility, thereby expanding the hypothesis space significantly, as we will explore in subsequent sections. Leveraging this expanded hypothesis space, it becomes feasible to search for an interpolation function with fewer support data, facilitating a discrete optimization of support data. As shown in Figure~\ref{fig: example} (b), this method provides an estimator with varying bandwidths, enabling it to accurately approximate different frequency components of the signal. Furthermore, it demonstrates good generalization ability in the presence of noise, despite the absence of an explicit regularization term in our approach.

\begin{figure*}[tb]
\begin{center}
\centerline{\includegraphics[width=0.5\textwidth]{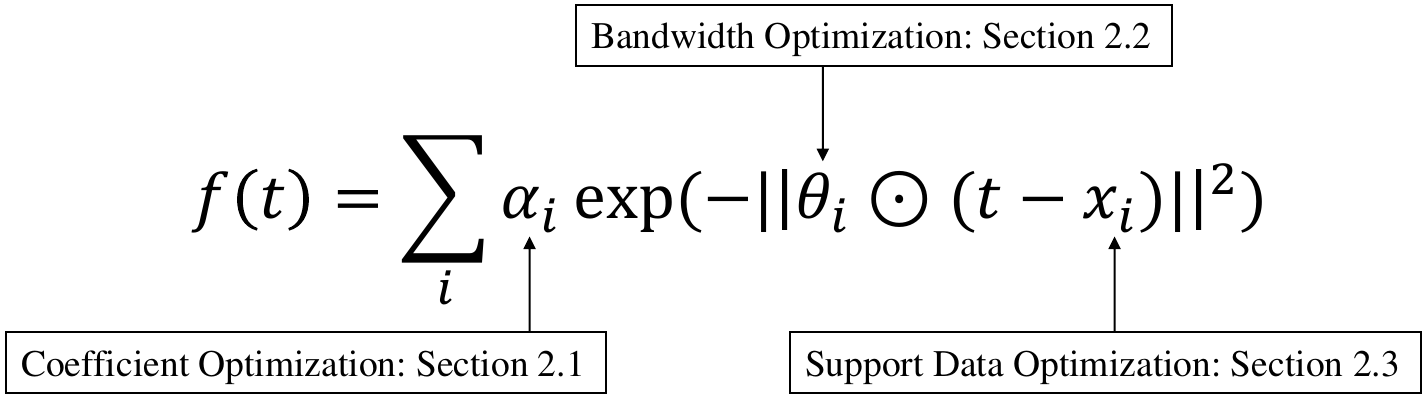}}
\caption{Optimization for evaluating $f$ in our kernel ridgeless regression framework. To enhance the model's flexibility, we introduce trainable bandwidths, which further enable the reduce of required number of support data.}
\label{fig: framework}
\end{center}
\end{figure*}

However, the absence of regularization term and the inherent asymmetry of LAB RBF kernels bring challenges to corresponding theoretical analysis. Specifically, $\theta_i$ and $\theta_j$ may differ, causing $\mathcal{K}(\vx_i,\vx_j)$ to potentially differ from $\mathcal{K}(\vx_j,\vx_i)$.
The loss of symmetry precludes the direct application of traditional analysis tools within the scope of Reproducing Kernel Hilbert Spaces (RKHS, \cite{Cucker2007Learning}), and even more general cases such as Reproducing Kernel Kre{\u\i}n Spaces (RKKS, \cite{oglic2018learning}) and Reproducing Kernel Banach Spaces (RKBS, \cite{zhang2009reproducing}).

In this paper, we overcome these challenges and successfully establish the generalization analysis for the proposing algorithm that addresses kernel ridgeless regression using LAB RBF kernel learning; see Theorem~\ref{the: main result}. In particular,

\begin{enumerate}
    \item We demonstrate a novel approach to analyze the asymmetric LAB RBF kernels within the existing framework of approximation theory \citep{Cucker2007Learning} by introducing the integral space of Reproducing Kernel Hilbert Spaces (RKHSs). This integral space can be viewed as a non-trivial extension from the direct sum of Hilbert space, a method previously employed for analyzing Multiple Kernel Learning (MKL).
    To the best of our knowledge, this marks the first effort to introduce the integral space of RKHSs to machine learning. 
    \item 
    We uncover the inherent sparsity of the estimator produced from LAB RBF kernels. Subsequently, we establish an equivalent $\ell_0$-related model within the integral space of RKHSs. This exploration addresses the origins of generalization ability and sheds light on the implicit regularization mechanisms at play.
\end{enumerate}

Our key insights are twofold: (i) the trainable bandwidths effectively enrich the expansive functional spaces, enhancing the representation ability of LAB RBF kernels. This enhancement allows our algorithm to interpolate the training dataset with only a few support data. (ii) Simultaneously, the inherent sparsity of LAB RBF kernels, controlled by the number of support vectors, ensures their robust generalization ability, as evidenced in the analysis of sample error. Notably, the number of support vectors plays a pivotal role in balancing the approximation ability within the training data and the generalization ability within the test data, a observation validated by our experimental results.


The remainder of this paper is organized as follows. In Section \ref{Chat:LAB_RBF}, we first establish the framework of asymmetric kernel ridgeless regression. Subsequently, we incorporate LAB RBF kernels into this framework and introduce a solving algorithm for learning local bandwidths and the regression function.
In Section \ref{sec: space}, we define the function space corresponding to LAB RBF kernels, which is an integral space of RKHSs. We determine the corresponding learning model, setting the foundations for the subsequent analysis.
In Section \ref{Chapt:Analysis}, we derive theoretical results on the error analysis of kernel ridgeless regression with LAB RBF kernels. 
In Section \ref{sec: exp}, we substantiate our theoretical findings with experimental results, demonstrating the practical implications of our proposed approach.
Related works are discussed in Section~\ref{sec: Link}.
Finally, a conclusion is provided in Section \ref{Chapt:conclusion}.

\section{Kernel Ridgeless Regression with LAB RBF Kernels}\label{Chat:LAB_RBF}
In this paper, we use calligraphic letters to denote datasets like $\mathcal{X} = \{\vx_1,\cdots,\vx_N\}\subset\mathbb{R}^d, \mathcal{Y}= \{y_1,\cdots,y_N\}\subset\mathbb{R}$.
We use captain letters in bold to denote data matrix, i.e., $\vX=[\vx_1,\vx_2,\cdots,\vx_N]\in\mathbb{R}^{M\times N}, \vY=[y_1,y_2,\cdots,y_N]^\top\in\mathbb{R}^N$.
We use $\vK(\vX_1,\vX_2)$ to denote the kernel matrix computed on datasets $\vX_1$ and $\vX_2$. That is, $[\vK(\vX_1,\vX_2)]_{ij}=\mathcal{K}(\vx_i,\vx_j)$, $\forall \vx_i\in \mathcal{X}_1,\vx_j \in \mathcal{X}_2$.
The task is to find a linear function in a high dimensional feature space, denoted as $\mathbb{R}^F$, which models the dependencies between the features $\phi(\vx_i), \forall\vx_i\in\mathcal{X}$ of input and
response variables $y_i, \forall y_i\in\mathcal{Y}$.
Throughout this paper, RBF kernels are considered.
In order to distinguish LAB RBF kernels from the conventional RBF kernels, we use $\theta\in \mathbb{R}^M_{+}$ to denote trainable bandwidths and use $\vsigma\in \mathbb{R}^M_{+}$ for fixed bandwidths.

\subsection{Asymmetric Kernel Ridgeless Model: Coefficient Optimization}
In this paper, we propose the utilization of LAB RBF kernels (\ref{equ: kernel function}) in the kernel ridgeless regression model (\ref{equ: ridgeless problem}). However, determining the solution of the kernel ridgeless regression model with asymmetric kernels remains unresolved, as the conventional kernel trick is no longer applicable. In this section, we derive the solution using the asymmetric kernel trick, beginning with a brief review of kernel ridge regression.

Kernel ridge regression \citep{vovk2013kernel} is one of the most elementary kernelized algorithms.
The task is to find a linear function in a high dimensional feature space, denoted as $\mathbb{R}^F$, which models the dependencies between the features $\phi(\vx_i), \forall\vx_i\in\mathcal{X}$ of input and
response variables $y_i, \forall y_i\in\mathcal{Y}$.
Here, $\phi: \mathbb{R}^M\to \mathbb{R}^F$ denotes the feature mapping from the data space to the feature space.
Define $\phi(\vX) = [\phi(\vx_1),\phi(\vx_2),\cdots,\phi(\vx_N)]$, then the classical optimization model is as follow:
\begin{equation}\label{equ: skrr}
    \min_{\vw}\;\; \frac{\lambda}{2}\vw^\top \vw + \frac{1}{2}\|\vY - \phi(\vX)^\top \vw\|_2^2,
\end{equation}
where $\lambda>0$ is a trade-off hyper-parameter.
By utilizing the following well-known matrix inversion lemma (see \cite{petersen2008matrix} for more information),
\begin{equation}\label{equ: eq}
        (\vA+\vB \vD^{-1}\vC)^{-1}\vB \vD^{-1} = \vA^{-1}\vB(\vC\vA^{-1}\vB +\vD)^{-1},
    \end{equation}
one can obtain the solution of KRR as follow
\begin{equation*}
\begin{aligned}    
    \vw^* &= (\phi(\vX)\phi(\vX)^\top+\lambda\vI_F)^{-1}\phi(\vX)\vY  \overset{(a)}= \phi(\vX)(\lambda\vI_N + \phi(\vX)^\top\phi(\vX))^{-1}\vY,
\end{aligned}
\end{equation*}
where (\ref{equ: eq}) is applied in (a) with $\vA=\lambda\vI_F, \; \vB=\phi(\vX),\; \vC =\phi^\top(\vX),\; \vD=\vI_N$. 

Next, we consider applying asymmetric kernels in KRR framework. In recent research on asymmetric kernel-based learning, asymmetric kernels are commonly assumed to be the inner product of two distinct feature mappings (see \cite{suykens2016svd, he2023learning, chen2024primal} for reference). 
That is, $\mathcal{K}(\vt,\vx) = \langle \phi(\vt), \psi(\vx)\rangle, \;\;\forall \vx,\vt \in \mathbb{R}^M.$
Then, imitating the model (\ref{equ: skrr}), we can formulate the asymmetric kernel ridge regression as follows,
\begin{equation}\label{equ: asy krr}
\begin{aligned}
    & \min_{\vw,\vv} \; \lambda \vw^\top \vv + (\phi^\top(\vX)\vw-\vY)^\top(\psi^\top(\vX)\vv-\vY)\\
    \Longleftrightarrow&
    \min_{\vw,\vv} \; \lambda \vw^\top \vv + \frac{1}{2}\|\phi^\top(\vX)\vw-\vY\|_2^2
    +\frac{1}{2}\|\psi^\top(\vX)\vv-\vY\|_2^2 - \frac{1}{2}\|\psi^\top(\vX) \vv - \phi^\top(\vX) \vw\|_2^2.
\end{aligned}
\end{equation}
Here, $\lambda>0$ serves as a trade-off hyper-parameter between the regularization term $\vw^\top \vv$ and the error term $(\phi^\top(\vX)\vw-\vY)^\top(\psi^\top(\vX)\vv-\vY)$. Given the existence of two feature mappings, we have two regressors in $\mathbb{R}^F$: $f_1(\vt) = \phi^\top(\vt)\vw$ and $f_2(\vt) = \psi^\top(\vt)\vv$.
To enhance clarity regarding the meaning of the error term, we decompose it into the sum of three terms, as shown in the second line. The terms $\frac{1}{2}\|\phi^\top(\vX)\vw-\vY\|_2^2+\frac{1}{2}\|\psi^\top(\vX)\vv-\vY\|_2^2$ are employed to minimize the regression error. Additionally, the term $\lambda \vw^\top \vv- \frac{1}{2}\|\psi^\top(\vX) \vv - \phi^\top(\vX) \vw\|_2^2$ aims to emphasize the substantial distinction between the two regressors.

As a bilinear optimization problem, the one presented in Equation (\ref{equ: asy krr}) is non-convex. Therefore, our attention shifts to its stationary points, leading to the following result.

\begin{theorem}\label{the: solution}
One of the stationary points of (\ref{equ: asy krr}) is
\begin{equation}
\begin{aligned}    
    &\vw^* = \psi(\vX)(\phi^\top(\vX)\psi(\vX)+\lambda \vI_N)^{-1}\vY, \qquad\qquad
    \vv^* = \phi(\vX)(\psi^\top(\vX)\phi(\vX)+\lambda \vI_N)^{-1}\vY.
\end{aligned}
\end{equation}
\end{theorem}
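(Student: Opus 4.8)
The plan is to locate a stationary point in the usual way: set the gradients of the objective in (\ref{equ: asy krr}) to zero, observe that the bilinear structure decouples the resulting conditions, and then convert each feature-space ($\mathbb{R}^F$) linear system into its $N$-dimensional form via the matrix inversion lemma (\ref{equ: eq}), precisely as was done for the symmetric KRR solution above.

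First I would expand the objective $J(\vw,\vv) = \lambda \vw^\top \vv + (\phi^\top(\vX)\vw - \vY)^\top(\psi^\top(\vX)\vv - \vY)$ into its bilinear, linear, and constant pieces. The key observation is that the coupling term $\vw^\top \phi(\vX)\psi^\top(\vX)\vv$ is bilinear, so differentiation decouples the two variables: the gradient in $\vw$ yields a linear equation involving only $\vv$, and the gradient in $\vv$ one involving only $\vw$. Concretely I expect $\nabla_\vw J = \lambda \vv + \phi(\vX)\psi^\top(\vX)\vv - \phi(\vX)\vY$ and $\nabla_\vv J = \lambda \vw + \psi(\vX)\phi^\top(\vX)\vw - \psi(\vX)\vY$, so that the stationarity conditions read $(\lambda \vI_F + \phi(\vX)\psi^\top(\vX))\vv = \phi(\vX)\vY$ and $(\lambda \vI_F + \psi(\vX)\phi^\top(\vX))\vw = \psi(\vX)\vY$.

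Next I would solve these two systems. The quickest route is the push-through identity packaged in (\ref{equ: eq}): taking $\vA = \lambda\vI_F$, $\vB = \phi(\vX)$, $\vC = \psi^\top(\vX)$, $\vD = \vI_N$ gives $(\lambda\vI_F + \phi(\vX)\psi^\top(\vX))^{-1}\phi(\vX) = \phi(\vX)(\psi^\top(\vX)\phi(\vX) + \lambda\vI_N)^{-1}$, and the symmetric substitution ($\vB=\psi(\vX)$, $\vC=\phi^\top(\vX)$) handles the $\vw$-equation; applying each to $\vY$ reproduces exactly the claimed $\vv^*$ and $\vw^*$. Alternatively, and more cleanly, I would substitute the claimed expressions directly into the stationarity equations and factor, using for instance
\[
(\lambda\vI_F + \psi(\vX)\phi^\top(\vX))\,\psi(\vX)(\phi^\top(\vX)\psi(\vX)+\lambda\vI_N)^{-1} = \psi(\vX)(\phi^\top(\vX)\psi(\vX)+\lambda\vI_N)(\phi^\top(\vX)\psi(\vX)+\lambda\vI_N)^{-1} = \psi(\vX),
\]
which confirms stationarity by inspection.

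The one point requiring care is invertibility. Because the LAB RBF kernel is asymmetric, $\phi(\vX)\psi^\top(\vX)$ need not be symmetric or positive semi-definite, so the $F\times F$ operators $\lambda\vI_F + \phi(\vX)\psi^\top(\vX)$ are not automatically invertible and no convexity/minimality argument is available. This is why I favor the direct-substitution verification as the rigorous core: it requires only that the $N\times N$ matrices $\phi^\top(\vX)\psi(\vX) + \lambda\vI_N$ and $\psi^\top(\vX)\phi(\vX) + \lambda\vI_N$ be invertible, and it establishes that the pair is \emph{a} stationary point without asserting uniqueness or optimality, consistent with the statement claiming only ``one of the stationary points'' of this non-convex problem.
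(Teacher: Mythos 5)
Your proposal is correct and takes essentially the same route as the paper's proof in Appendix~A: the paper likewise forms the gradients $\nabla_\vw J=(\lambda\vI_F+\phi(\vX)\psi^\top(\vX))\vv-\phi(\vX)\vY$ and $\nabla_\vv J=(\lambda\vI_F+\psi(\vX)\phi^\top(\vX))\vw-\psi(\vX)\vY$ and invokes the push-through identity (\ref{equ: eq}) to pass between the $F$-dimensional and $N$-dimensional forms. Your direct-substitution variant is a mild refinement worth noting: the paper rewrites $\vw^*,\vv^*$ in the resolvent form $(\lambda\vI_F+\psi(\vX)\phi^\top(\vX))^{-1}\psi(\vX)\vY$, implicitly assuming the $F\times F$ operator is invertible, whereas your factorization $(\lambda\vI_F+\psi(\vX)\phi^\top(\vX))\psi(\vX)=\psi(\vX)(\phi^\top(\vX)\psi(\vX)+\lambda\vI_N)$ needs only the $N\times N$ inverses already posited by the statement --- though the two conditions are in fact equivalent, since the nonzero spectra of $\psi(\vX)\phi^\top(\vX)$ and $\phi^\top(\vX)\psi(\vX)$ coincide.
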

The proof is presented in Appendix~\ref{apdx: 0}.
Theorem~\ref{the: solution} establishes a crucial result, demonstrating that the stationary points can still be represented as a linear combination of function evaluations on the training dataset. This validates the practical feasibility of the proposed framework. 
Theorem~\ref{the: solution} indicates the proposed asymmetric KRR framework includes the symmetric one. That is, model~(\ref{equ: asy krr}) and (\ref{equ: skrr}) share the same stationary points when the two feature mappings are equivalent, as shown in the following corollary.

\begin{corollary}
If the two feature mappings $\phi$ and $\psi$ are equivalent, i.e. $\phi(\vx) = \psi(\vx), \forall \vx\in\mathbb{R}^M$, then stationary conditions of the asymmetric KRR model (\ref{equ: asy krr}) and the symmetric KRR model (\ref{equ: skrr}) are equivalent. And the stationary point is
$\vw^* =\vv^*=\phi(\vX)(\lambda\vI_N + \phi(\vX)^\top\phi(\vX))^{-1}\vY.$
\end{corollary}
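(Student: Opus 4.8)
The plan is to argue in two stages: first, that the explicit formulas of Theorem~\ref{the: solution} collapse to the asserted symmetric solution once $\phi=\psi$, and second, that the first-order (stationarity) systems of models (\ref{equ: asy krr}) and (\ref{equ: skrr}) literally coincide in this case, so the two notions of stationary point agree.

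For the first stage I would simply substitute $\psi=\phi$ into the stationary point supplied by Theorem~\ref{the: solution}. The expression for $\vw^*$ becomes $\phi(\vX)(\phi^\top(\vX)\phi(\vX)+\lambda\vI_N)^{-1}\vY$, and the expression for $\vv^*$ becomes the identical quantity; hence $\vw^*=\vv^*$ is immediate, and the common value matches the claimed solution. The only thing to check is that the inverse is well-defined, which holds because $\lambda>0$ makes $\phi^\top(\vX)\phi(\vX)+\lambda\vI_N$ positive definite.

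For the second stage I would write out the gradients of the asymmetric objective in (\ref{equ: asy krr}). Differentiating with respect to $\vw$ and $\vv$ gives the coupled pair
\begin{equation*}
\lambda\vv+\phi(\vX)\bigl(\psi^\top(\vX)\vv-\vY\bigr)=0,
\qquad
\lambda\vw+\psi(\vX)\bigl(\phi^\top(\vX)\vw-\vY\bigr)=0.
\end{equation*}
Setting $\psi=\phi$ decouples this system into two independent copies of the single equation $\lambda\vu+\phi(\vX)(\phi^\top(\vX)\vu-\vY)=0$, once with $\vu=\vv$ and once with $\vu=\vw$. Differentiating the symmetric objective (\ref{equ: skrr}) with respect to $\vw$ yields exactly this same equation. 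Thus the stationarity condition of the symmetric model is identical to each half of the (decoupled) asymmetric stationarity system, establishing the claimed equivalence; solving the common equation, either directly or via the matrix inversion lemma (\ref{equ: eq}), recovers $\vw^*=\vv^*=\phi(\vX)(\lambda\vI_N+\phi^\top(\vX)\phi(\vX))^{-1}\vY$.

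There is no genuine obstacle here, only bookkeeping: one must track which gradient constrains $\vw$ versus $\vv$, and confirm invertibility of $\phi^\top(\vX)\phi(\vX)+\lambda\vI_N$ so that the stationary point is both well-defined and unique. The result is essentially a specialization of Theorem~\ref{the: solution}, and I expect it to follow in a few lines once the stationarity equations are written down.
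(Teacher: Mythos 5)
Your proposal is correct and follows essentially the same route as the paper: the corollary is an immediate specialization of Theorem~\ref{the: solution}, and your gradient computations for (\ref{equ: asy krr}) mirror the stationarity verification in the paper's Appendix~\ref{apdx: 0}, with the cross-coupled equations (the $\vw$-gradient constraining $\vv$ and vice versa) correctly decoupling into the symmetric condition when $\psi=\phi$. The invertibility remark for $\phi^\top(\vX)\phi(\vX)+\lambda\vI_N$ with $\lambda>0$ is also the right justification for uniqueness in the symmetric case.
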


With the conclusion in Theorem~\ref{the: solution}, we can easily apply asymmetric kernel trick $\mathcal{K}(\vt,\vx) = \langle \phi(\vt), \psi(\vx)\rangle$ and  obtain two regression functions.
By denoting a kernel matrix $[\vK(\vX,\vX)]_{ij}=\mathcal{K}(\vx_i,\vx_j)=\langle \phi(\vx_i), \psi(\vx_j)\rangle$, $\forall \vx_i,\vx_j \in \mathcal{X}$, we have:
\begin{equation}\label{equ: regressors}
    \begin{aligned}
        &f_1(\vt) = \phi(\vt)^\top\vw^* = \vK(\vt,\vX)(\vK(\vX,\vX)+\lambda\vI_N)^{-1}\vY,\\
        &f_2(\vt) = \psi(\vt)^\top\vv^* = \vK^\top(\vX,\vt)(\vK^\top(\vX,\vX)+\lambda\vI_N)^{-1}\vY.
    \end{aligned}
\end{equation}

The scenario of obtaining two regressors does not occur in a symmetric setting and the existance of the second regressor is often overlooked in prior works on asymmetric kernel regression. Consequently, the relationship between these regressors remains unclear. We discuss this question in Appendix~\ref{apdx: f explanation} from a primal-dual perspective. Our analysis reveals that the approximation error of these two regressors can be computed analytically. Notably, if $\vK(\vX,\vX)$ is asymmetric, the errors generally differ, leading to a significant observation: the two regressors represent distinct functions converging toward the ground truth from divergent directions.

When LAB RBF kernels are utilized, the computation of 
$\vK(\vX,\vt)$  necessitates a bandwidth that is dependent on the testing data  $\vt$. Given the impracticality of estimating bandwidths for testing data, we restrict our computations to $\vK(\vt,\vX)$. As a result, only  $f_1$  in Equation~(\ref{equ: regressors}) is applicable for our algorithm.
From Mercer's theorem we know that for traditional RBF kernels, there exists a feature mapping function $\phi(\cdot):\mathcal{X}\to \mathcal{F}$ satisfying that  $\mathcal{K}_\sigma(\vt,\vx) = \langle \phi_\sigma(\vt), \phi_\sigma(\vx)\rangle$.
Recall the definition of the proposed LAB RBF kernel function over dataset $\mathcal{X}$ in Equation~(\ref{equ: kernel function}), we can define
\begin{equation*}
\begin{aligned}
    &\phi(\vt) = [\phi^\top_{\theta_1}(\vt)\;\; \phi^\top_{\theta_2}(\vt)\;\; \cdots\;\;\phi^\top_{\theta_N}(\vt)]^\top,\\
    &\psi(\vx) = [\phi^\top_{\theta_1}(\vx)\delta(\vx-\vx_1)\; \;\phi^\top_{\theta_2}(\vx)\delta(\vx-\vx_2)\; \;\cdots\;\;\phi^\top_{\theta_N}(\vx)\delta(\vx-\vx_N)]^\top,
\end{aligned}
\end{equation*}
where $\theta_i$ is the corresponding bandwidth for data $\vx_i$ and $\delta(\cdot)$ denotes the Dirac delta function.
Then we can decompose the asymmetric LAB RBF kernels defined over a dataset $\mathcal{X}$ as the inner product of $\phi$ and $\psi$. That is, given dataset $\mathcal{X}$ and corresponding bandwidth set $\vTheta=\{\theta_1, \cdots, \theta_N\}$, the LAB RBF kernel defined over $\mathcal{X}$ and $\vTheta$ satisfies
\begin{equation}\label{equ: k-inner product}
    \mathcal{K}_\vTheta(\vt,\vx_i) 
    = \exp\{-\|\theta_i\odot(\vt-\vx_i)\|^2)\}= \langle \phi(\vt), \psi(\vx_i)\rangle,\qquad \forall \vt\in\mathbb{R}^d, \vx_i\in\mathcal{X}.
\end{equation}
Finally, substituting Equation~(\ref{equ: k-inner product}) into $f_1$, we obtain the solution of kernel ridgeless regression model with LAB RBF kernels by setting regularization coefficient $\lambda$ in (\ref{equ: regressors}) equals to zero: 
\begin{equation}\label{equ: LAB-interpolation}
\begin{aligned}
    f_{\mathcal{Z},\vTheta}(\vt) 
    &\triangleq \phi(\vt)^\top\vw  = \phi(\vt)^\top\psi(\vX)\valpha
    = \sum_{i=1}^N \alpha_i \exp\left\{- \|\theta_i\odot(\vt-\vx_i)\|_2^2\right\},\\
        \valpha &= \lim_{\lambda\rightarrow0}\left\{(\vK_\vTheta(\vX,\vX) +\lambda\vI_N)^{-1}\vY\right\},
\end{aligned}
\end{equation}
where $\mathcal{Z}\triangleq\{\mathcal{X},\mathcal{Y}\}$ and $[\vK_\vTheta(\vX,\vX)]_{ij} \triangleq \exp\left\{- \|\theta_j\odot(\vx_i-\vx_j)\|_2^2\right\}, \forall \vx_i,\vx_j\in\mathcal{X}$.

\subsection{LAB RBF Kernel Learning: Bandwidth Optimization}
The solutions presented in (\ref{equ: LAB-interpolation}) represent simple interpolation functions that may be susceptible to noise in the data. To enhance generalization ability, we employ kernel learning techniques to augment model flexibility and subsequently reduce model complexity.
It is essential to note that the solution in (\ref{equ: LAB-interpolation}) corresponds to a stationary point, necessitating additional data for optimizing the bandwidths $\vTheta$. Our algorithm thus divides the available data into two parts: (i) a subset of the available data serves as support data, used for constructing the regression function according to Equation~(\ref{equ: LAB-interpolation}), and (ii) the remaining data, termed training data, is utilized for the optimization of bandwidths.

Assume a support dataset $\mathcal{Z}_{sv}=\{\mathcal{X}_{sv},\mathcal{Y}_{sv}\}$ and a training dataset $\mathcal{Z}_{tr}=\{\mathcal{X}_{tr},\mathcal{Y}_{tr}\}$ are pre-given, then according to Equation~(\ref{equ: LAB-interpolation}), the optimization model for bandwidths $\vTheta$ is,
\begin{equation}\label{equ: interpolation f on sv}
\begin{aligned}
     \vTheta^* &= \arg\min_{\vTheta} \sum_{\{\vx_i,y_i\}\in\mathcal{Z}_{tr}
    }\left(y_i - f_{\mathcal{Z}_{sv},\vTheta}(\vx_i) \right)^2 \\
    &= \arg\min_{\vTheta} \sum_{\{\vx_i,y_i\}\in\mathcal{Z}_{tr}
    }\left(y_i - \vK_\vTheta(\vx_i,\vX_{sv})\vK^{-1}_\vTheta(\vX_{sv},\vX_{sv})\vY_{sv}\right)^2.
\end{aligned}
\end{equation}
Being a function that interpolates a small dataset without any regularization, it is apparent that the generalization performance of $f_{\mathcal{Z}_{sv},\vTheta}$ in Equation~\ref{equ: LAB-interpolation} does not meet expectations. Nevertheless, through the training of bandwidths $\vTheta$ with additional data, we can significantly enhance the generalization capacity of $f_{\mathcal{Z}_{sv},\vTheta}$.

\subsection{Dynamic Strategy: Support Data Optimization}
While in kernel methods we can always achieve perfect interpolation of training data when all data points are used as support data, the resulting interpolation function often lacks robust generalization ability. In our approach, integrating asymmetric kernel learning techniques, i.e., the optimization in (\ref{equ: interpolation f on sv}), enables us to achieve a good fit with fewer support data points. Drawing from traditional regularization scenarios, we aim to minimize the support data while effectively approximating the training data. However, this strategy introduces a discrete optimization problem, posing challenges for accurate solution finding.
\begin{equation}
    \begin{aligned}
        \mathcal{Z}_{sv}=\min_{\mathcal{Z}\subset\mathcal{Z}_{tr}} &|\mathcal{Z}|\\
        \mathrm{s.t.}\;\;& y_i = f_{\mathcal{Z},\vTheta}(\vx_i), \quad\forall \{\vx_i, y_i\}\in\mathcal{Z}_{tr},
    \end{aligned}
\end{equation}
where $|\mathcal{Z}|$ denotes the cardinality of set $\mathcal{Z}$, i.e. the number of data in $\mathcal{Z}$.

Though this discrete optimization presents challenges for direct optimization, numerous existing strategies for data selection can be employed. For instance, it resembles the selection of centers in Nystr\"om approximation (see \cite{williams2000using, rudi2017falkon} for details). Consequently, the subset selection strategies utilized in these existing works are applicable to the proposed algorithm. Additional experiments evaluating the performance of the proposed algorithm with various reasonable strategies are elaborated in Appendix~\ref{apdx: sv}.

In this paper, to facilitate theoretical analysis, we apply a \textit{dynamic strategy} for selecting support data. Initially, we uniformly select $N_0$ support data points according to their labels, and then: 
(i) Optimize (\ref{equ: interpolation f on sv}) to obtain $f_{\mathcal{Z}_{sv},\vTheta}$.
(ii) Compute approximation error $e_i=(f_{\mathcal{Z}_{sv},\vTheta}(\vx_i) - y_i)^2, \forall \{\vx_i,y_i\}\in\mathcal{Z}_{tr}$.
(iii) Add data with first $k$ largest error to support dataset. 
Repeat the above process until all approximation error is less than a pre-given threshold $B$.
The overall algorithm is presented in Algorithm~\ref{alg: AKL}.

By dynamic strategy, we actually obtain an important property of the resulting estimator, i.e.,
\begin{equation}\label{equ: strategy}
    (f_{\mathcal{Z}_{sv},\vTheta}(\vx_i) - y_i)^2\leq B,\qquad \forall \{\vx_i,y_i\}\in\mathcal{Z}_{sv}\cup\mathcal{Z}_{tr},
\end{equation}
which essentially stands the accuracy of the interpolation of $f_{\mathcal{Z}_{sv},\vTheta}$ on the training dataset.
And in the next section, we will shown it helps when analyzing the approximation behavior of LAB RBF kernels.

\begin{algorithm}[tb]
    \begin{algorithmic} [1]
    \caption{Learning LAB RBF kernels with SGD and dynamic strategy. }
    \label{alg: AKL}
    \State \textbf{Input:} Data $\mathcal{Z}=\{\mathcal{X},\mathcal{Y}\}$.  \State Initialization: Error tolerance $B>0$, initial bandwidth $\vTheta^{(0)}>0$, learning rate for gradient descent method $\eta>0$, $k$ for the dynamic strategy, and uniformly sampled support dataset $\mathcal{Z}_{sv}^{(0)}=\{\mathcal{X}_{sv}^{(0)},\mathcal{Y}_{sv}^{(0)}\}\subset\mathcal{Z}$.  
    \State t=0.
    \Repeat
        \State $\tilde{\vTheta}^{(0)} = \vTheta^{(t)}$.
        \For{$l = 1,\cdots, L$}
            \Comment{\textbf{Optimize $\vTheta$ via SGD}}
    	    \State Randomly sample a subset $\{\mathcal{X}_{s},\mathcal{Y}_{s}\}\subset\mathcal{Z}\setminus\mathcal{Z}_{sv}$.  
    	    \State Compute $\tilde{\vTheta}^{(l)}=\tilde{\vTheta}^{(l-1)}-\eta\frac{\partial}{\partial\vTheta}\|f_{\mathcal{Z}_{sv}^{(t)},\tilde{\vTheta}^{(l)}}(\vX_{s})-\vY_{s}\|^2$ according to (\ref{equ: interpolation f on sv}).
        \EndFor
        \State $ \vTheta^{(t+1)} = \tilde{\vTheta}^{(L)}$.
        \State Compute error $e_i=(f_{\mathcal{Z}_{sv}^{(t)},\vTheta^{(t)}}(\vx_i)-y_i)^2$ for all data $\{\vx_i,y_i\}\in\mathcal{Z}\setminus\mathcal{Z}_{sv}$.
        \If{$\max_i e_i\leq B$}
            \Comment{\textbf{Dynamically adding support data}}
        \State break.
        \Else
        \State Select the first $k$ samples with the highest errors and include them in the support dataset, resulting in $\mathcal{Z}_{sv}^{(t+1)}$.
        \EndIf           
        \State t=t+1.
	\Until{the maximal number of iteration is exceeded.}
    \State Compute the $\valpha = \vK^{-1}_{\vTheta^{(t)}}(\vX^{(t)}_{sv},\vX_{sv}^{(t)})\vY_{sv}^{(t)} $.   \Comment{\textbf{Compute the final function}}
    \State \textbf{Return} $\valpha,\mathcal{Z}^{(t)}_{sv}$ and $\vTheta^{(t)}$.
    \end{algorithmic}
\end{algorithm}

\section{Theoretical Interpretation}\label{sec: space}
\subsection{Enlarged hypothesis space: Integral Space of RKHSs }
To comprehend the learning dynamics of Algorithm~\ref{alg: AKL} and LAB RBF kernels, it is imperative to clarify the underlying function spaces. The LAB RBF kernel, defined in Equation~(\ref{equ: kernel function}), employs distinct bandwidths for individual samples, thus associating itself with multiple RKHSs. Unlike Multiple Kernel Learning (MKL, \cite{gonen2011multiple}), which explores a search space comprised of a finite number of RKHSs with a discrete domain of bandwidths (i.e., the kernel dictionary, as discussed in \cite{suzuki2011unifying}), LAB RBF kernels exhibit a continuous feasible domain of bandwidths. This characteristic results in a function space that surpasses a direct sum of RKHSs. To enhance the understanding of LAB RBF kernels, this paper introduces the concept of the \textit{integral space of RKHSs} as a novel hypothesis space.

Given a continuous bandwidth candidate set $\Omega\subset \mathbb{R}_+^M$, a traditional RBF kernel with a fixed uniform bandwidth $\vsigma\in\Omega$ has a form of $\mathcal{K}_\vsigma(\vx_i,\vx_j)=\exp\{-\|\vsigma\odot(\vx_i-\vx_j)\|_2^2\}$.
The RKHS introduced by $\mathcal{K}_\vsigma$ is denoted as $\mathcal{H}_\vsigma$. That is, $\forall \vx\in\mathcal{X}$, $\mathcal{K}_\vsigma(\cdot,\vx)\in\mathcal{H}_\vsigma$ and we use $f_\vsigma$ to denote functions belonging to $\mathcal{H}_\vsigma$.
Then the integral space of RKHSs defined over $\Omega$ takes the following form,
\begin{equation*}
    \begin{aligned}
    \mathcal{H}_{\Omega} &= \int_{\vsigma\in\Omega} \mathcal{H}_\vsigma d\mu(\vsigma) 
    = \left\{f=(f_\vsigma)_{\vsigma\in\Omega} :\left.\int_{\vsigma\in\Omega}\|f_\vsigma\|^2_{\mathcal{H}_\vsigma} d\mu(\vsigma)<\infty\right.\right\},
    \end{aligned}
\end{equation*}
where $(f_\vsigma)_{\vsigma\in\Omega}$ is a measurable cross-section and $\mu(\vsigma)$ denotes a probability distribution of $\vsigma$.
For more theoretical discussion of integral spaces of RKHSs, one can refer to \cite{Wils1970DirectIO,Hotz2012integrating}.
It has been proved that $\mathcal{H}_{\Omega}$ is again a Hilbert space, where the inner product between $f=(f_\vsigma)_{\vsigma\in\Omega} ,\;g=(g_\vsigma)_{\vsigma\in\Omega} \in \mathcal{H}_{\Omega}$ is defined as
\begin{equation*}
    \langle f,g\rangle_{\mathcal{H}_{\Omega}}
    := \int_{\vsigma\in\Omega} \langle f_\vsigma, g_\vsigma\rangle_{\mathcal{H}_\vsigma} d\mu(\vsigma).
\end{equation*}
Consequently, it holds that $f(\vx)=\int_{\vsigma\in\Omega} f_\vsigma(\vx) d\mu(\vsigma), \forall \vx\in\mathcal{X}.$
Then the corresponding norm is defined as
\begin{equation*}
\begin{aligned}
    &\|f\|^2_{\mathcal{H}_{\Omega}} := \min\left\{\int_{\vsigma\in\Omega} \|f_\vsigma\|^2_{\mathcal{H}_\vsigma} d\mu(\vsigma) : f=(f_\vsigma)_{\vsigma\in\Omega} \right\},
\end{aligned}
\end{equation*}
where $\|f_\vsigma\|^2_{\mathcal{H}_\vsigma} = \langle f_\vsigma, f_\vsigma\rangle_{\mathcal{H}_\vsigma}$.

Recall that in Algorithm~\ref{alg: AKL}, $\vTheta$ represents a discrete set. Consequently, the estimator $f_{\mathcal{Z}_{sv},\vTheta}$ is constructed from a finite number of kernels, thereby situating it within a sum space of RKHSs associated with these kernels. It is important to note that this inference relies on the assumption of a fixed $\vTheta$. When optimizing $\vTheta$, this sum space also changes according to the variations in bandwidths, as shown in Figure~\ref{fig: space}.
Mathematically, we assume a sum space of RKHSs generated from a bandwidth set $\vTheta$ is denoted as $\mathcal{H}_\vTheta$. Recall that in our approach, a continuous feasible domain of bandwidth is considered, i.e., $\vTheta\subset\Omega$.
Consequently, the hypothesis space involved in our approach is the union of all possible $\mathcal{H}_\vTheta$, i.e.
$$\mathrm{Hypothesis\; Space:} \bigcup_{\vTheta\subset\Omega}\mathcal{H}_\vTheta = \mathcal{H}_\Omega,$$
which indicates that the hypothesis space remains an integral space rather than a fixed sum space.

\begin{figure}[tb]
    \centering
  \begin{minipage}{0.5\textwidth}
    \centering
    \includegraphics[width=\textwidth]{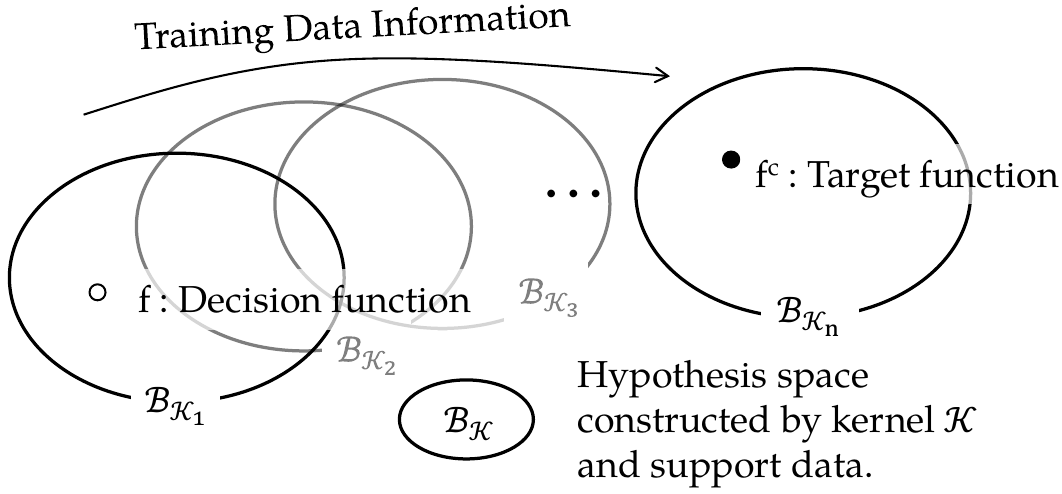}
    \caption{Optimal subspace selection when learning kernels.}
    \label{fig: space}
  \end{minipage}\hfill
  \begin{minipage}{0.45\textwidth}
    \centering    \includegraphics[width=0.6\textwidth]{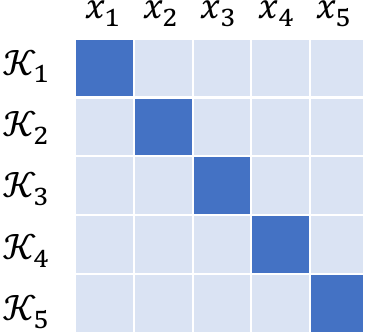}
    \caption{Coefficient matrix of $f_{\mathcal{Z},\vTheta}$, exhibiting sparse property.}
    \label{fig: coefficient}
  \end{minipage}
\end{figure}

\subsection{Sparsity of the Estimator}
With the established hypothesis space $\mathcal{H}_\Omega$, this section delineates the sparse property of the estimator generated by LAB RBF kernels. This characterization aids in our deeper comprehension of the generalization ability of the proposed model.
In Algorithm~\ref{alg: AKL}, two levels of sparsity are observed:
\begin{itemize}
    \item Reduced support data: The number of support data points is significantly lower than the total number of training data points. While this sparsity is artificially determined algorithmically, its essential reason lies in the sufficiently large hypothesis space. This expansive hypothesis space enables us to employ fewer support data points to effectively approximate the entire training dataset. 
    This sparsity leads to a fact that $f_{\mathcal{Z}_{sv},\vTheta}$ belongs to a small subspace $\mathcal{H}_\vTheta$ of $\mathcal{H}_\Omega$.
    \item Inherent sparsity of LAB RBF kernels: $f_{\mathcal{Z}_{sv},\vTheta}$ demonstrate sparsity within the hypothesis space $\mathcal{H}_\vTheta$, contributing to a more efficient representation of the data.
\end{itemize}
In the following, we show the latter sparsity of $f_{\mathcal{Z}_{sv},\vTheta}$ mathematically by comparing with general function in $\mathcal{H}_\vTheta$.
Given dataset $\{\vx_i,y_i\}_{i=1}^N$, the hypothesis space considered here is taken to be the linear span of the set $ \{\mathcal{K}_\vsigma(\cdot, \vx_i)\}$, $\forall i=1,\cdots,N,\;\forall \vsigma\in\Omega$.  This space forms a subspace of $\mathcal{H}_\Omega$.
In $f_{\mathcal{Z}_{sv},\vTheta}$, only bandwidths in $\vTheta$ are valid, therefore we constrain ${\mu}(\sigma) = \sum_{\theta_i\in\vTheta} \delta(\sigma - \theta_i)$.
Then function in this hypothesis space takes a formulation as
$f(\cdot) = \sum_{\sigma\in\Theta} f_\sigma(\cdot) = \sum_{\sigma\in\Theta} \sum_{i=1}^{N_{sv}}\alpha_{\sigma,i}\mathcal{K}_\sigma(\cdot,{\bf x}_i),$
where the coefficients $\valpha_\vsigma\in\mathbb{R}^{{N_{sv}}}$.

Let $\|\valpha\|_0=\sum_{i=1}^N\mathbb{I}(\alpha_i\neq 0)$, where $\mathbb{I}$ is a indicator function.
We can define a $\ell_0$-related sparse regularization penalty as below,

\begin{equation}\label{equ: def R0}
\begin{aligned}
\mathcal{R}_0(f)&:=\min\left\{ \int_{\vsigma
\in \Omega} \|\valpha_\vsigma\|_0 d\mu(\vsigma): f=(f_\vsigma)_{\vsigma\in\Omega},  f_\vsigma(\cdot) =\sum_{i=1}^N\alpha_{\vsigma,i}\mathcal{K}_\vsigma(\cdot,\vx_i) \right\}.
\end{aligned}
\end{equation}
Without sparsity, a general function in $\mathcal{H}_\vTheta$ typically results in  $\mathcal{R}_0(f)$ being approximately equal to  $|\Theta|\times {N_{sv}}$.
However, recall the function estimated by LAB RBF kernels in  (\ref{equ: LAB-interpolation}), generated from the same kernels and data, takes a formulation like:
$$f_{\mathcal{Z}_{sv},\vTheta}(\cdot) = \sum_{i=1}^{N_{sv}} \hat{\alpha}_{i} \mathcal{K}_{\theta_i}(\cdot, x_i) = \sum_{\sigma\in\Theta} \sum_{i=1}^{N_{sv}} \alpha_{\sigma, i} \mathcal{K}_\sigma(\cdot, x_i).$$ 
Comparing their coefficients, we say $f_{\mathcal{Z}_{sv},\vTheta}$ exhibits sparsity because 
$$\alpha_{\sigma, i}=\begin{cases}
\hat{\alpha}_i,\;&\mathrm{if}\;\sigma=\theta_i,\\
    0,\;&\mathrm{otherwise}.
    \end{cases}$$
This sparsity can be quantified by the measurement $\mathcal{R}_0(f_{\mathcal{Z}{sv},\vTheta}) = N_{sv}$, or visually depicted in Figure~\ref{fig: coefficient}.
In this regard, $f_{\mathcal{Z}_{sv},\vTheta}$ demonstrates enhanced sparsity compared to a typical function within the sum space $\mathcal{H}_{\Theta}$, not to mention functions within the integral space $\mathcal{H}_\Omega$. 

\subsection{Equivalence to a $\ell_0$-related Model}
Utilizing this sparse property, we can gain deeper insights into $f_{\mathcal{Z}_{sv},\vTheta}$ by formulating a sparse optimization model, leveraging the $\mathcal{R}_0$ regularization term.
Let us define the empirical error $\mathcal{E}_z(f)$ and the generalization error $\mathcal{E}_\rho(f)$ as follows,
\begin{equation*}
    \mathcal{E}_\vz(f)  = \frac{1}{N}\sum_{{\vx_i,y_i}\in\mathcal{Z}_{tr}}(f(\vx_i)-y_i)^2,
\quad\quad\quad
    \mathcal{E}_\rho(f) = \int_{\mathcal{Z}}(f(\vx)-y)^2 d\rho.
\end{equation*}
Then we have
\begin{equation}\label{equ: unconstrained}
    \begin{aligned}     &f_{\vz,\lambda} = \arg\min_{\substack{f\in\mathcal{H}_\Omega\\ \{\vtheta_i\}\subset\Omega}}
    \mathcal{E}_\vz(f) + \lambda \mathcal{R}_0(f)\\
    &\;\;\;
    \mathrm{s.t.} \quad {\mu}(\vsigma) = \sum_{i=1}^{N_{sv}} \delta(\vsigma - \theta_i),
    \end{aligned}
\end{equation}
where $\lambda, N_{sv}>0$ are pre-given parameters, and $\delta(\cdot)$ denotes the Dirac delta function. 

Optimizations involving $\ell_0$ norm are generally challenging to solve directly as they often give rise to an NP-hard discrete optimization \citep{natarajan1995sparse}, and the problem in (\ref{equ: unconstrained}) is no exception. However, as demonstrated by the following proposition, Algorithm~\ref{alg: AKL} for learning LAB RBF kernel yields an estimator that closely approximates the optimal solution of (\ref{equ: unconstrained}).

\begin{proposition}\label{lem: alg err}
Let $f_{\mathcal{Z}_{sv},\vTheta}$ denotes the regressor produced by Algorithm~\ref{alg: AKL} with dynamics strategy (i.e., $f_{\mathcal{Z}_{sv},\vTheta}$ satisfies (\ref{equ: strategy})) on dataset $\mathcal{Z}=\{\vx_i,y_i\}_{i=1}^N$. Then there exists a $\lambda>0$ such that the optimal solution $f_{\vz,\lambda}$ of (\ref{equ: unconstrained}) satisfies that 
$\mathcal{R}_0(f_{\mathcal{Z}_{sv},\vTheta}) = \mathcal{R}_0({f}_{\vz,\lambda})$, and 
$0<\mathcal{E}_\vz(f_{\mathcal{Z}_{sv},\vTheta})-\mathcal{E}_\vz({f}_{\vz,\lambda})\leq B.$
    
\end{proposition}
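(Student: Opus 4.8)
The plan is to lean on three facts that are already in place before the statement. First, $f_{\mathcal{Z}_{sv},\vTheta}$ is itself a feasible point of (\ref{equ: unconstrained}): taking $\mu(\vsigma)=\sum_{i=1}^{N_{sv}}\delta(\vsigma-\theta_i)$ supported on its own $N_{sv}$ bandwidths places it in the constrained search space. Second, by the diagonal coefficient structure derived in the Sparsity subsection, $\mathcal{R}_0(f_{\mathcal{Z}_{sv},\vTheta})=N_{sv}$. Third, the dynamic-strategy guarantee (\ref{equ: strategy}) bounds every residual on $\mathcal{Z}_{sv}\cup\mathcal{Z}_{tr}$ by $B$; combined with the $1/N$ normalization this gives $\mathcal{E}_\vz(f_{\mathcal{Z}_{sv},\vTheta})=\frac1N\sum_{(\vx_i,y_i)\in\mathcal{Z}_{tr}}(f_{\mathcal{Z}_{sv},\vTheta}(\vx_i)-y_i)^2\le B$. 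These three observations are the raw material; the work is to produce the matching penalty parameter.

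Next I would set up the error–sparsity trade-off. For each integer $k$ define $E(k)=\inf\{\mathcal{E}_\vz(f): f \text{ feasible},\ \mathcal{R}_0(f)=k\}$, which is non-increasing in $k$ and satisfies $E(N_{sv})\le\mathcal{E}_\vz(f_{\mathcal{Z}_{sv},\vTheta})\le B$. For penalty $\lambda$ the optimal objective is $\min_k\{E(k)+\lambda k\}$, so $\mathcal{R}_0(f_{\vz,\lambda})=N_{sv}$ holds exactly when $\lambda$ lies in the interval $[\max_{k>N_{sv}}\frac{E(N_{sv})-E(k)}{k-N_{sv}},\ \min_{k<N_{sv}}\frac{E(k)-E(N_{sv})}{N_{sv}-k}]$, i.e. when $N_{sv}$ is a vertex of the lower convex envelope of $\{(k,E(k))\}$. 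I would argue this interval is nonempty by pairing the monotone dependence of the optimal sparsity on $\lambda$ (as $\lambda$ increases from $0$ to $\infty$ the optimal $\mathcal{R}_0$ descends from its interpolation value to $0$) with the dynamic-strategy threshold: the stopping rule at tolerance $B$ identifies $N_{sv}$ as the sparsity at which the attainable error first drops to $B$, which I would convert into the statement that the left endpoint of the interval does not exceed the right one. Fix such a $\lambda$.

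With $\mathcal{R}_0(f_{\vz,\lambda})=N_{sv}$ the $\ell_0$ terms cancel in the optimality comparison, so $\mathcal{E}_\vz(f_{\vz,\lambda})=E(N_{sv})\le\mathcal{E}_\vz(f_{\mathcal{Z}_{sv},\vTheta})$; together with $\mathcal{E}_\vz(f_{\vz,\lambda})\ge0$ and $\mathcal{E}_\vz(f_{\mathcal{Z}_{sv},\vTheta})\le B$ this yields the upper bound $\mathcal{E}_\vz(f_{\mathcal{Z}_{sv},\vTheta})-\mathcal{E}_\vz(f_{\vz,\lambda})\le B$. For the strict lower bound I would fix the support and bandwidths of $f_{\mathcal{Z}_{sv},\vTheta}$ and note that over those $N_{sv}$ atoms the map $\valpha\mapsto\mathcal{E}_\vz(\sum_i\alpha_i\mathcal{K}_{\theta_i}(\cdot,\vx_i))$ is a convex quadratic whose minimizer solves the least-squares problem on $\mathcal{Z}_{tr}$, whereas the coefficients of $f_{\mathcal{Z}_{sv},\vTheta}$ are $\vK_\vTheta^{-1}\vY_{sv}$, which solve an \emph{interpolation} problem on $\mathcal{Z}_{sv}$. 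Unless these two coefficient vectors coincide (a degenerate event, ruled out once the training residual does not already sit at the quadratic's minimizer), the least-squares choice is a same-sparsity function with strictly smaller empirical error, so $E(N_{sv})<\mathcal{E}_\vz(f_{\mathcal{Z}_{sv},\vTheta})$ and the inequality is strict.

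The step I expect to be the main obstacle is the nonemptiness claim in the second paragraph, equivalently that $N_{sv}$ sits on the lower convex envelope of the trade-off curve rather than being skipped as $\lambda$ sweeps. This is exactly where the dynamic-strategy guarantee must do genuine work: without it the $\ell_0$ optimal sparsity could jump from below $N_{sv}$ to above it without ever equaling $N_{sv}$, and the honest difficulty is the quantitative link between the greedy stopping threshold $B$ and the marginal error decrements $E(k)-E(k+1)$ of the \emph{global} $\ell_0$ problem, which are defined by the optimal atoms rather than the greedily added support data. The strict-inequality argument is secondary but still needs the nondegeneracy remark above to be made precise.
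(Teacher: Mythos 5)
Your opening steps coincide with the paper's own proof in Appendix~\ref{apdx: 5}: there, $f_{\mathcal{Z}_{sv},\vTheta}$ is shown to lie in $\mathcal{H}_\Omega$ with $\mathcal{R}_0(f_{\mathcal{Z}_{sv},\vTheta})=N_{sv}$ via the one-nonzero-coefficient-per-atom representation, it is observed to be feasible for a constrained reformulation of (\ref{equ: unconstrained}) carrying the equality constraint $\mathcal{R}_0(f)=N_{sv}$, and the bound $0<\mathcal{E}_\vz(f_{\mathcal{Z}_{sv},\vTheta})-\mathcal{E}_\vz(f_{\vz,\lambda})\leq B$ is read off from the stopping rule (\ref{equ: strategy}) together with the optimality of the constrained minimizer. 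Where you diverge is the passage from the constrained to the penalized problem: the paper disposes of it in a single sentence (``as all constraints are equalities, we can always find a suitable $\lambda>0$ such that the two problems share the same optimizer''), whereas you try to actually prove it through the trade-off curve $E(k)=\inf\{\mathcal{E}_\vz(f):\mathcal{R}_0(f)=k\}$. Your characterization is the correct one: some $\lambda>0$ yields $\mathcal{R}_0(f_{\vz,\lambda})=N_{sv}$ exactly when $N_{sv}$ lies on the lower convex envelope of $\{(k,E(k))\}$, i.e.\ when your interval of slopes is nonempty.

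That nonemptiness is the genuine gap in your proposal, and you are right that you have not closed it: the dynamic strategy's guarantee (\ref{equ: strategy}) bounds residuals of the \emph{greedy} iterates on the algorithm's own support set, and provides no control over the decrements $E(k)-E(k+1)$ of the global $\ell_0$ problem, whose optimal atoms and centers need not resemble the greedily chosen ones; a priori the optimal sparsity could jump past $N_{sv}$ as $\lambda$ sweeps. You should be aware, however, that this is precisely the step the published proof does not establish either: exact Lagrangian penalization of a discrete equality constraint is not automatic (nonconvex problems generically have a duality gap), and its validity here is \emph{equivalent} to your envelope condition, so your analysis makes explicit a weakness the paper papers over by assertion. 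The same comment applies to the strict inequality: the paper simply asserts $0<\mathcal{E}_\vz(f_{\mathcal{Z}_{sv},\vTheta})-\mathcal{E}_\vz(f_{\vz,\lambda})$, while your interpolation-versus-least-squares comparison is a sensible route to it but, as you concede, requires a nondegeneracy hypothesis — indeed, since $E(N_{sv})$ is an infimum over all admissible atoms and centers rather than the algorithm's, the difference can in degenerate cases be zero, so strict positivity cannot hold unconditionally under either argument. In short: your proof matches the paper wherever the paper is rigorous, and is honestly incomplete exactly where the paper's proof is an unproven claim; to finish it one would need a quantitative link between the threshold $B$ and the envelope of $E(\cdot)$, which neither you nor the paper supplies.
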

The proof is presented in Appendix~\ref{apdx: 5}. 
This proposition shows that  $f_{\mathcal{Z}_{sv},\vTheta}$ is a $B$-optimal solution of model (\ref{equ: unconstrained}) with some $\lambda$. It establishes a link between a well-trained function derived by kernel ridgeless regression with a LAB RBF kernel — exhibiting good interpolation performance on the training dataset — and the optimal solution of an $\ell_0$-related model within the integral space of RKHSs. 
This relationship effectively highlights the superiority of our method's strategy for enhancing model flexibility through the learning of the LAB kernel function. Specifically,
\begin{itemize}
    \item The model's enhanced flexibility is primarily achieved through the expansion of the hypothesis space, where the estimator is optimized from an integral space of RKHSs $\mathcal{H}_{\Omega}$. This expansion is enabled by optimizing ${\vtheta_i}$, which selects the optimal subspace from $\mathcal{H}_{\Omega}$, as illustrated in Figure~\ref{fig: space}. Consequently, the algorithm efficiently minimizes the distance to the underlying function, leading to a small bias.
    \item The large capacity of hypothesis space also raises the probability of interpolating training data with fewer support data, evidenced by the sparse coefficients of $f_{\mathcal{Z}_{sv},\vTheta}$. This sparsity characteristic clarifies the origin of the model's generalization capability: with an implicit $\ell_0$-related term in effect, controlled by the number of support data,  our algorithm effectively reduces variance.
\end{itemize}

It is worth noting that, despite the absence of a regularization term in the kernel ridgeless regression framework, our approach effectively maintains a balance between bias and variance through our dynamic strategy.
From model (\ref{equ: unconstrained}) and Proposition~\ref{lem: alg err}, it is determined that a smaller number of support data implies a stronger regularization effect. Note that kernel machines can always interpolate all training data; that is, the value of 
$B$ can be arbitrarily close to 0 as the number of support data increases.
Therefore, our proposed dynamic strategy proves effective as it seeks a balance between the number of support data and the empirical approximation error. The hyper-parameter $B$, which varies with datasets, essentially serves as a trade-off parameter between bias and variance, resembling most regularization schemes.

\section{Approximation Analysis}\label{Chapt:Analysis}
In the preceding sections, we introduced our kernel learning algorithm and corresponding theoretical explanation. The extensive capacity of the integral space of RKHSs and the utilization of the $\ell_0$ norm contribute to the exceptional performance of LAB RBF kernels, as evidenced by the experiments in Section \ref{sec: exp}. Nevertheless, these characteristics also pose significant challenges for approximation analysis.
To derive the learning rate of ${f}_{\vz, \lambda}$, this section employs three key techniques:
\begin{itemize}\setlength\itemsep{0em}
    \item Addressing the discrete nature of the $\ell_0$ norm, we use the optimal estimator of a $\ell_q\;(0<q<1)$-regularized model as a stepping-stone function.
    \item The Rademacher chaos complexity is employed to establish the upper bound of the sample error in the integral space of RKHSs, leveraging the properties of the optimal solution ${f}_{\vz, \lambda}$. 
    \item  A refined iteration technique is applied to obtain an accurate upper bound of $\mathcal{R}_0({f}_{\vz, \lambda})$.
\end{itemize}
The main result is presented in Theorem~\ref{the: main result}.

\subsection{Assumptions and Main Result}

We prepare some notations and assumptions for the following analysis.
Let $\rho$ be a Borel probability distribution on $\mathcal{Z}$.
Then from Proposition 1.8 in \cite{Cucker2007Learning}, the target \textit{regression function } can be expressed as
$
    f_\rho= \int_\mathcal{Y}y d\rho(y|\vx),\; \vx\in\mathcal{X},
$
where $\rho(\cdot|\vx)$ is the conditional probability measure induced by $\rho$ at $\vx$.
Thoughout this paper, we assume that $f_\rho$ belongs to a Sobelve space $\mathcal{H}^s(\mathbb{R}^d)$ with some $s>0$, and $\rho(\cdot|\vx)$ is support on $[-M,M]$. That is,
\begin{assumption}\label{asmp: gt bound}
For some constant $1\leq M<\infty$, there hold $|f_{\rho}(\vx)|\leq M$ and $|y|\leq M$.
\end{assumption}
Such uniformly boundedness assumptions of the output has been widely used in learning theory e.g., \cite{Zhou2003CapacityOR, Wu2006LearningRO, Smale2007LearningTE}.
And it also indicates a bounded noise level that $|y-f_\rho(\vx)|\leq 2M$.
Based on this assumption, we can apply the following projection operator to our analysis for better estimates.

\begin{definition}
For $M>0$, the projection operator $\pi_M$ is defined as
\begin{equation*}
\begin{aligned}
\pi_M(t)=\begin{cases}
	-M &\mathrm{if\;} t\leq -M,\\
	t &\mathrm{if\;} -M< t\leq M,\\
	M &\mathrm{if\;} t> M.\\
	\end{cases}
\end{aligned}
\end{equation*}
The projection of a function $f:X\to \mathbb{R}$ is defined by $\pi_M(f)(\vx)=\pi_M(f(\vx)), \forall \vx \in \mathcal{X}.$
\end{definition}
Such projection operator is introduced in \cite{Chen2004Support} and is helpful in estimating the $\|\cdot\|_\infty$ bound in the following analysis. 
Under Assumption~\ref{asmp: gt bound}, it is natural to project the estimator $f$ into the same interval as $f_\rho$.
Thus, we shall consider the error $\|\pi_M(f)-f_\rho\|_{\mathcal{L}^2_{\rho_\vX}}$.

In the previous section, we point out that LAB RBF kernels are actually thed combination of RBF kernels with trainable bandwidths belonging to a pre-given closed interval $\Omega$.
Theoretical properties of RBF kernels have been well investigated before. One can refer to \cite{ye2008learning,eberts2011optimal} for RBF kernels with fixed bandwidths, and \cite{ying2007learnability} for RBF kernels with flexible bandwidths.
Consider $\mathcal{K}_\sigma(\vx,\vx') = \exp\{-\sigma\|\vx-\vx'\|^2_2\}, \forall \sigma\in\Omega.$
It has been proved that $\mathcal{K}_\sigma\in\mathcal{C}^\infty(\mathcal{X}\times \mathcal{X})$ and there exists a bound 
$ \|\mathcal{K}_\sigma\|_{C^\infty(\mathcal{X}\times \mathcal{X})} <\infty,\;\; \forall \sigma\in\Omega.$
Therefore we can define
$$\kappa := \sup_{\sigma\in\Omega} \|\mathcal{K}_\sigma\|_{C^\infty(\mathcal{X}\times \mathcal{X})}<\infty.$$
Given a continuous kernel $\mathcal{K}_\sigma$, it can define an integral operator on $\mathcal{L}_{\rho_\vX}^2(\mathcal{X})$ as follows
\begin{equation}\label{equ: int oper}
    \mathcal{L}_{\mathcal{K}} f(\vx) = \int_\mathcal{X} \mathcal{K}_{\sigma^*}(\vx,\vt)f(\vt)d\rho_\vX(\vt),\quad \vx\in\mathcal{X},\quad \forall f\in\mathcal{L}_{\rho_\vX}^2(\mathcal{X}).
\end{equation}
And a Mercer kernel can be defined as $
    \tilde{\mathcal{K}}_{\sigma}(\vt,\vx) = \int_{\mathcal{X}}\mathcal{K}_\sigma(\vt,\vu)\mathcal{K}_\sigma(\vx,\vu)d\rho_\vX(\vu).
$
In this paper, we use  the RKHS $\mathcal{H}_{\tilde{K}_\sigma}$ satisfying $\sigma\in\Omega$ to approximate $f_\rho$.
Following the definition in \cite{ying2007learnability}, the regularization error associated with a flexible $\mathcal{H}_{\tilde{K}_\sigma}$ is defined as follows,
\begin{equation} \label{equ: D-gamma}
    \mathcal{D}(\gamma) = \inf_{\sigma\in\Omega}\inf_{f\in\mathcal{H}_{\tilde{K}_\sigma}}\left\{\mathcal{E}_\rho(f) - \mathcal{E}_\rho(f_\rho) + \gamma\|f\|_{\mathcal{H}_{\tilde{K}_\sigma}}\right\}.
\end{equation}
As $\gamma\rightarrow 0$, the decay rate of $\mathcal{D}(\gamma)$ measures the lower bound of the approximation ability of the hypothesis space, which in the literature of learning theory are generally assumed as follows (e.g.,  \cite{Cucker2007Learning,steinwart2008support}).
\begin{assumption}\label{asmp: D}
For some constant $0<\beta\leq 1$ and $c_{\beta} \geq 1$, it holds
\begin{equation*}
    \mathcal{D}(\gamma)\leq c_{\beta}\gamma^{\beta},\;\;\forall \gamma>0.
\end{equation*}
\end{assumption}

Then we present the main result.

\begin{theorem}\label{the: main result}
Assume the regression $f_\rho\in\mathcal{H}^s(\mathcal{X})$ with some $s>0$.
Given $\Omega$, suppose Assumption~\ref{asmp: gt bound} and Assumption~\ref{asmp: D} hold with $M\in[1,+\infty)$ and $\beta\in(0,1]$. 
If $\xi\in(0,\frac{\beta}{4})$ (which can be arbitrarily small), $\lambda = (\frac{\xi}{\kappa^2})^\frac{1-\xi}{1+\xi} N^{-\tilde{\tau}}$ with $\tilde{\tau} = \beta/2(1+\xi)(\xi+\beta-\xi\beta)$ and $ \delta\in(0,1)$, then with at least $1-\delta$ confidence, it holds that

\begin{equation*}
    \begin{aligned}
    \|\pi_M(f_{\vz,\lambda})&-f_\rho\|_{L_{\rho_\vX}}
    \leq \tilde{C_3}\log^2\left(\frac{36}{\delta}\right)\left(\log\left(\frac{72}{\delta}\right)+\log(J(d,s,\beta,\xi)+1)\right)^{2J(d,s,\beta,\xi)}N^{-(\frac{\beta}{4}-\xi)},
    \end{aligned}
\end{equation*}
and 
\begin{equation*}
    \mathcal{R}_0(f_{\vz,\lambda}) \leq C_3(\log J(d,s,\beta,\xi))^{3}\left(\log\left(\frac{24}{\delta}\right) + \log(J(d,s,\beta,\xi)+1)\right)^{2J(d,s,\beta,\xi)} N^{\tilde{\tau}+\xi-\frac{\beta}{4}},
\end{equation*}
where $C_3$ and $\tilde{C_3}$ are some positive constant independent of $N$ and $\delta$ and  $J(d,s,\beta,\xi)$ is defined as
\begin{equation*}
    J(d,s,\beta,\xi) = \max\left\{\frac{\log\frac{2s\beta+2s\xi(1-\beta)-2d\beta}{6s\beta+8s\xi(1-\beta)-d\beta}}{\log\frac{d+2s}{2d}},\frac{\beta}{(1-\beta)\xi^2+\xi}\right\}.
\end{equation*}

\end{theorem}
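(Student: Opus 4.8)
The plan is to adapt the classical error-decomposition framework of learning theory (in the style of \cite{Cucker2007Learning}) to the integral space $\mathcal{H}_\Omega$ equipped with the $\ell_0$ penalty $\mathcal{R}_0$, and to resolve the circular dependence between the sample error and $\mathcal{R}_0(f_{\vz,\lambda})$ by an iterative bootstrap. First I would write
\begin{equation*}
\|\pi_M(f_{\vz,\lambda})-f_\rho\|^2_{L^2_{\rho_\vX}} = \mathcal{E}_\rho(\pi_M(f_{\vz,\lambda})) - \mathcal{E}_\rho(f_\rho),
\end{equation*}
choose a comparison function $f_\gamma\in\mathcal{H}_{\tilde{\mathcal{K}}_{\sigma^*}}$ nearly attaining the infimum defining $\mathcal{D}(\gamma)$ in (\ref{equ: D-gamma}), and split the right-hand side into a \emph{sample error} (the gap between empirical and expected excess risks evaluated at $\pi_M(f_{\vz,\lambda})$ and at $f_\gamma$) plus the \emph{regularization error} $\mathcal{D}(\lambda)$, the latter controlled directly by Assumption~\ref{asmp: D}. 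The optimality of $f_{\vz,\lambda}$ for $\mathcal{E}_\vz(\cdot)+\lambda\mathcal{R}_0(\cdot)$ is used here to absorb the penalty when comparing against $f_\gamma$.

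The difficulty is that $\mathcal{R}_0$ is discontinuous, so the sublevel sets $\{f:\mathcal{R}_0(f)\le R\}$ carry no directly usable covering numbers. To circumvent this I would bring in the minimizer of an $\ell_q$-regularized surrogate ($0<q<1$) as a stepping-stone function: a sparse, coefficient-bounded element such as $f_{\vz,\lambda}$ lies in an $\ell_q$-ball whose radius is governed by $\mathcal{R}_0(f_{\vz,\lambda})$, and $\ell_q$-balls do admit controllable capacity inside $\mathcal{H}_\Omega$. Inserting this $\ell_q$-optimal function between $f_{\vz,\lambda}$ and $f_\gamma$ in the decomposition transfers the capacity question onto $\ell_q$-balls, at the cost of a $q$-dependent slack that eventually surfaces as the parameter $\xi$.

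Next I would bound the sample error by \emph{Rademacher chaos complexity}. Because the Mercer kernel $\tilde{\mathcal{K}}_\sigma$ is built as an integral of products $\mathcal{K}_\sigma(\vt,\vu)\mathcal{K}_\sigma(\vx,\vu)$, the natural complexity measure is second order (a chaos rather than a first-order Rademacher average); the uniform smoothness bound $\kappa=\sup_{\sigma\in\Omega}\|\mathcal{K}_\sigma\|_{C^\infty(\mathcal{X}\times\mathcal{X})}<\infty$ lets me control it uniformly over the continuous bandwidth range $\Omega$. Combined with a ratio-type (one-sided Bernstein) probability inequality, this yields, with confidence $1-\delta$, a sample-error bound that is increasing in $\mathcal{R}_0(f_{\vz,\lambda})$ and decreasing in $N$.

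The \textbf{main obstacle}, and the final step, is that this sample-error bound contains the unknown quantity $\mathcal{R}_0(f_{\vz,\lambda})$, creating a self-referential estimate. I would resolve it with the refined iteration technique: a crude a priori bound on $\mathcal{R}_0$ is inserted into the error estimate, and then the optimality inequality $\mathcal{E}_\vz(f_{\vz,\lambda})+\lambda\mathcal{R}_0(f_{\vz,\lambda})\le\mathcal{E}_\vz(f_\gamma)+\lambda\mathcal{R}_0(f_\gamma)$ is re-used to produce a strictly sharper bound on $\mathcal{R}_0$; iterating this sharpens the exponent at each pass. The number of passes required to reach the target exponent is precisely $J(d,s,\beta,\xi)$, which is why $J$ enters the statement as the exponent of the logarithmic factors. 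Finally, substituting the prescribed $\lambda=(\xi/\kappa^2)^{(1-\xi)/(1+\xi)}N^{-\tilde\tau}$ and using $f_\rho\in\mathcal{H}^s$ together with Assumption~\ref{asmp: D} to balance approximation against the sample term gives the stated $N^{-(\beta/4-\xi)}$ rate for $\|\pi_M(f_{\vz,\lambda})-f_\rho\|_{L_{\rho_\vX}}$ and the companion bound $N^{\tilde\tau+\xi-\beta/4}$ for $\mathcal{R}_0(f_{\vz,\lambda})$. I expect the bookkeeping in this iteration — tracking how the logarithmic factors accumulate while the polynomial exponent converges — to be the most delicate part.
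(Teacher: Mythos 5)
Your overall architecture does coincide with the paper's: an error decomposition against a regularization-error comparator, an $\ell_q$ stepping stone ($0<q<1$) to tame the $\ell_0$ penalty, Rademacher chaos complexity for a sample-error bound uniform over the continuous bandwidth range, and an iteration to break the self-reference in $\mathcal{R}_0(f_{\vz,\lambda})$, finally tuned with $q=1-\xi$, $\gamma=N^{-\tau}$. However, the step driving your iteration would fail as written. You propose to re-use the optimality inequality $\mathcal{E}_\vz(f_{\vz,\lambda})+\lambda\mathcal{R}_0(f_{\vz,\lambda})\le \mathcal{E}_\vz(f_\gamma)+\lambda\mathcal{R}_0(f_\gamma)$, but this comparator is unusable: $f_\gamma=(\gamma \vI+\mathcal{L}_{\tilde{\mathcal{K}}_{\sigma^*}})^{-1}\mathcal{L}_{\tilde{\mathcal{K}}_{\sigma^*}}f_\rho$ is a population-level function, not a finite expansion $\sum_{i=1}^N\alpha_i\mathcal{K}_\sigma(\cdot,\vx_i)$, so $\mathcal{R}_0(f_\gamma)$ under the definition (\ref{equ: def R0}) is not controlled, and for any generic finite comparator it would be of order $N$, destroying the rate. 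The paper instead (i) initializes the iteration by comparing with the zero function, $\lambda\mathcal{R}_0(f_{\vz,\lambda})\le \mathcal{E}_\vz(\mathbf{0})\le M^2$, giving $R^{(0)}=M^2/\lambda$; (ii) obtains the contraction not by re-using optimality but by re-applying the high-probability rough bound (Proposition~\ref{prop: rough bound}) on nested sparsity classes $\mathcal{W}(R^{(j)})$ with $R^{(j)}=\max\{a_N R^{(j-1)},b_N\}$, the exceptional sets accumulating as $J\delta$ (whence the $\log^{2J}$ factors); and (iii) makes the optimality comparison only against the \emph{empirical} comparator $f_{\vz,\gamma}$ via the $\ell_q$ minimizer, where the decisive mechanism is Lemma~\ref{lem: lower bound of lq}: nonzero coefficients of $f^q_{\vz,\gamma}$ are bounded \emph{below} by a multiple of $\gamma^{1/(2-q)}$, so $\lambda\mathcal{R}_0(f^q_{\vz,\gamma})\le\gamma\|f^q_{\vz,\gamma}\|_q^q$ under the tuning $\lambda=(\frac{1-q}{\kappa^2})^{q/(2-q)}\gamma^{2/(2-q)}$. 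Your phrase ``an $\ell_q$-ball whose radius is governed by $\mathcal{R}_0(f_{\vz,\lambda})$'' has this bridge pointing the wrong way.

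Relatedly, your capacity story is off. The paper does not transfer the capacity question onto $\ell_q$-balls: the sample error at $f_{\vz,\lambda}$ is bounded directly on the sparsity class $\mathcal{W}(R)$, where $\mathcal{R}_0(f)\le R$ caps the number of active components $f_\sigma$ at $R$, and after contraction and the reproducing property the supremum over bandwidths reduces to the order-two Gaussian chaos $\sup_\sigma\bigl|\sum_{i<j}\epsilon_i\epsilon_j\mathcal{K}_\sigma(\vx_i,\vx_j)\bigr|$, controlled by Lemma~\ref{lem: R-gau-kernel} — this, not the product structure of $\tilde{\mathcal{K}}_\sigma$, is why a second-order chaos appears (and the concentration step is McDiarmid, not Bernstein). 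Covering numbers enter only through Assumption~\ref{asmp: covering number} for the single-kernel $\ell_1$-ball $\mathcal{B}_{\sigma^*,1}$, needed to quote the auxiliary bound $\|f^1_{\vz,\gamma}\|_1\lesssim N^{(1-\beta)\tau}$ (Lemma~\ref{lem: f1 bound}) — a quantity your proposal never bounds; and it is precisely there, via $p=d/s$ from $f_\rho\in\mathcal{H}^s(\mathcal{X})$, that $d$ and $s$ enter $J(d,s,\beta,\xi)$, a dependence your plan leaves unexplained. With these corrections — zero-function initialization, contraction via the rough bound on $\mathcal{W}(R^{(j)})$, the coefficient lower bound for the $\ell_q$ minimizer, and the covering-number control of $\|f^1_{\vz,\gamma}\|_1$ — your outline matches the paper's proof.
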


The convergence rate of Algorithm~(\ref{equ: unconstrained}) concerning the accuracy, as well as the model complexity, with respect to the data number is provided by Theorem~\ref{the: main result}. This pioneering analysis in the integral space of RKHSs unveils valuable insights.
The convergence rate can be arbitrarily close to $N^{-\beta/4}$. The sparsity of $f_{\vz, \lambda}$, determined by $\mathcal{R}_0(f_{\vz,\lambda})/N$, demonstrates a superior convergence rate compared to $O\left(N^{-\left(\frac{1}{2}+\frac{\beta}{4}-\xi\right)}\right)$, thanks to the constraint $\tilde{\tau}\leq \frac{1}{2}$. Importantly, $\mathcal{R}_0(f_{\vz,\lambda})$ also acts as an upper bound for the number of valid bandwidths, facilitating the practical selection of support data.
In the literature of approximation analysis, the specific value of $\beta$ in Assumption~\ref{asmp: D} is determined by certain assumptions regarding the relationship between the underlying function $f_\rho$ and the hypothesis spaces.
Here we suppose $f_\rho\in\mathcal{H}^s(\mathcal{X})$ for some $s>0$. According to Proposition 22 in \cite{ying2007learnability}, Assumption~\ref{asmp: D} holds true. However, we refrain from introducing additional assumptions for $s$ to ascertain the specific value of $\beta$. Further details on the value of $\beta$ can be found in \cite{ying2007learnability}.

\subsection{Framework of Convergence Analysis}
From \cite{Cucker2007Learning}, it holds that $\|f-f_\rho\|^2_{\mathcal{L}_{\rho_\vX}} = \mathcal{E}_\rho(f)-\mathcal{E}_\rho(f_\rho), \forall f:\mathcal{X}\rightarrow\mathbb{R}$.
Thus Theorem~\ref{the: main result} can be obtained by estimating the upper bound of $\mathcal{E}_\rho(f)-\mathcal{E}_\rho(f_\rho)$.
To establish our main result, we employ the well-established framework of error decomposition commonly applied in kernel-based regression with regularization schemes (e.g., \citet{Cucker2007Learning, shi2019sparse, mao2023approximating}).
In this context, the proof sketch proceeds through several key steps.
Firstly, we introduce the error decomposition framework and review pertinent results from previous work.
In Section 4.3, we delve into presenting the sample error analysis. Additionally, the upper bound of $\mathcal{R}_0(f{\vz,\lambda})$ is discussed in Section 4.4.
Finally, consolidating all the findings, we present the proof of Theorem~\ref{the: main result} in Section 4.5.

To facilitate the error decomposition, we require some stepping-stone functions. We designate the minimizer of the regularization error in (\ref{equ: D-gamma}) as the regularization function, denoted by $f_\gamma$. The corresponding kernel is denoted as $\mathcal{K}_{\sigma^*}$ with a bandwidth $\sigma^*$.
From \cite{Cucker2007Learning}, it holds that $\mathcal{L}_{\mathcal{K}}$ and its adjoint $\mathcal{L}^*_{\mathcal{K}}$ are compact operators because we assume $\mathcal{X}$ is compact.
Recall the definition of integral operator $\mathcal{L}_\mathcal{K}$ and the Mercer kernel $
\tilde{\mathcal{K}}$ in (\ref{equ: int oper}) , then $f_\gamma$ can be explicitly given by $f_\gamma = (\gamma \vI+\mathcal{L}_{\tilde{\mathcal{K}}_{\sigma^*}})^{-1}\mathcal{L}_{\tilde{\mathcal{K}}_{\sigma^*}}f_\rho$.
We additionally define $f_{\vz,\gamma}(\cdot) = \frac{1}{N}\sum_{i=1}^N \mathcal{K}_{\sigma^*}(\cdot,\vx_i)g_\gamma(\vx_i)$ where $g_\gamma = \mathcal{L}^*_{{\mathcal{K}}}(\gamma\vI+\mathcal{L}_{\tilde{\mathcal{K}}_{\sigma^*}})^{-1}f_\rho$.
Finally, to deal with the $\ell_0$ regularization term, we introduce the $\ell_q$-regularized learning model as shown below,
\begin{equation}\label{equ: f lq}
    \begin{aligned}
    {f}_{\vz,\gamma}^q= \arg\min_{f\in{\mathcal{H}_{\mathcal{K}_{\sigma^*}}}}
    \mathcal{E}_\vz(f) + \gamma \|f\|^q_q, \qquad \forall q\in(0,1),
    \end{aligned}
\end{equation}
where $\|f\|_q \triangleq (\sum_{i=1}^N|\alpha_i|^q)^{1/q}$. And we use $f_{\vz,\gamma}^1$ denotes the optimal function when $q=1$.

With the above stepping-stone functions, we establish the following error decomposition to prove Theorem~\ref{the: main result}.
\begin{equation} \label{equ: err decomposition}
    \begin{aligned}
    &\mathcal{E}_\rho(\pi_M(f_{\vz,\lambda})) - \mathcal{E}_\rho(f_\rho) +\lambda \mathcal{R}_0(f_{\vz,\lambda})
    \leq\mathcal{S}_1 +\mathcal{S}_2 +\gamma N^{1-q}\|f^1_{\vz,\gamma}\|_1^q,
    \end{aligned}
\end{equation}
where
\begin{equation*}
    \begin{aligned}
    &\mathcal{S}_1 = \{\mathcal{E}_\rho(\pi_M(f_{\vz,\lambda}))-\mathcal{E}_\vz(\pi_M(f_{\vz,\lambda}))\} + \{\mathcal{E}_\vz(f_{\vz,\gamma})-\mathcal{E}_\rho(f_{\vz,\gamma})\},\\
    &\mathcal{S}_2= \mathcal{E}_\rho(f_{\vz,\gamma}) + \gamma N^{-1}\sum_{i=1}^N|g_\gamma(\vx_i)| -\mathcal{E}_\rho({f}_{\rho}).
    \end{aligned}
\end{equation*}
Recall that $\ell_0$ norm is involved in $\mathcal{R}_0$.
To associate $\ell_0$-related models with existing results, we need some useful conclusions in $\ell_q$-related models, which focuses on the non-zero coefficient of the global minimizier of $\ell_q$-regularized kernel regression.

\begin{lemma}\label{lem: lower bound of lq}
    (\cite{shi2019sparse}, Proposition 18) Let $f^q_{\vz,\gamma}(\cdot) = \sum_{i=1}^N K_{\sigma^*}(\cdot, \vx_i) (\valpha^q_{\vz,\gamma})_i$ be the global optimal solution of problem (\ref{equ: f lq}) with $0<q<1$. Then for $i\in(1,\cdots, N) $ and $(\valpha^q_{\vz,\gamma})_i\neq 0$, it holds that
    \begin{equation*}
        |(\valpha^q_{\vz,\gamma})_i|\geq \left(\frac{1-q}{\kappa^2}\right)^{1/(2-q)}\gamma^{1/(2-q)}.
    \end{equation*}
\end{lemma}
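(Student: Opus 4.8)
The plan is to reduce the statement to a one-dimensional calculation in each active coordinate, exploiting global optimality together with first-order stationarity. Since the penalty $\|f\|_q^q=\sum_{k=1}^N|\alpha_k|^q$ depends only on the coefficient vector, the minimizer $f^q_{\vz,\gamma}=\sum_k \mathcal{K}_{\sigma^*}(\cdot,\vx_k)(\valpha^q_{\vz,\gamma})_k$ is characterized by its coefficients $\valpha^q_{\vz,\gamma}$ globally minimizing $\Phi(\valpha)=\frac1N\sum_{j=1}^N\big(\sum_k\alpha_k\mathcal{K}_{\sigma^*}(\vx_j,\vx_k)-y_j\big)^2+\gamma\sum_k|\alpha_k|^q$ over $\valpha\in\mathbb{R}^N$. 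First I would fix an index $i$ with $\alpha_i:=(\valpha^q_{\vz,\gamma})_i\neq0$, freeze the remaining coordinates, and write $f_{-i}$ for the partial sum omitting the $i$-th term. Then $\alpha_i$ must minimize the scalar function $g(t)=a t^2+b t+\gamma|t|^q+c$, where $a=\frac1N\sum_{j=1}^N\mathcal{K}_{\sigma^*}(\vx_j,\vx_i)^2$, $b=\frac2N\sum_{j=1}^N(f_{-i}(\vx_j)-y_j)\mathcal{K}_{\sigma^*}(\vx_j,\vx_i)$, and $c$ collects the terms independent of $t$.

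Next I would extract two consequences of optimality. Because $\alpha_i$ is a \emph{global} minimizer of $g$, comparing its value with that at $t=0$ (i.e.\ switching the coordinate off) gives $g(\alpha_i)\le g(0)$, which after cancelling $c$ reads $a\alpha_i^2+b\alpha_i+\gamma|\alpha_i|^q\le0$. Because $\alpha_i\neq0$ and $g$ is differentiable there (the map $t\mapsto|t|^q$ is smooth away from the origin), stationarity gives $g'(\alpha_i)=2a\alpha_i+b+\gamma q|\alpha_i|^{q-1}\sgn(\alpha_i)=0$; multiplying by $\alpha_i$ yields $b\alpha_i=-2a\alpha_i^2-\gamma q|\alpha_i|^q$.

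Substituting this expression for $b\alpha_i$ into the inequality from global optimality eliminates $b$ and leaves $-a\alpha_i^2+(1-q)\gamma|\alpha_i|^q\le0$, hence $(1-q)\gamma|\alpha_i|^q\le a\alpha_i^2$. The uniform bound $\|\mathcal{K}_\sigma\|_{C^\infty(\mathcal{X}\times\mathcal{X})}\le\kappa$ forces $|\mathcal{K}_{\sigma^*}(\vx_j,\vx_i)|\le\kappa$ and therefore $a\le\kappa^2$; dividing through by $|\alpha_i|^q>0$ gives $(1-q)\gamma\le\kappa^2|\alpha_i|^{2-q}$, which rearranges precisely to $|\alpha_i|\ge\big((1-q)/\kappa^2\big)^{1/(2-q)}\gamma^{1/(2-q)}$, the claimed bound.

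The delicate point is the non-convexity and non-smoothness induced by the exponent $0<q<1$: since $g$ is not convex I must invoke \emph{global} minimality for the comparison $g(\alpha_i)\le g(0)$, whereas the stationarity equation only requires local optimality plus smoothness of $g$ at the nonzero point $\alpha_i$. Once both facts are in hand the remainder is elementary algebra, the only quantitative input being $a\le\kappa^2$ inherited from the definition of $\kappa$. This reproduces Proposition~18 of \cite{shi2019sparse} in the present notation.
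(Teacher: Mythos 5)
Your proof is correct, and it is worth noting at the outset that the paper itself contains no proof of this lemma: it is imported verbatim as Proposition 18 of \cite{shi2019sparse}, so your proposal in effect reconstructs the cited external result rather than paralleling an in-paper argument. The mechanism you use is the right one for the non-convex exponent $0<q<1$, and you have correctly identified the division of labor between the two optimality facts: the comparison $g(\alpha_i)\le g(0)$ requires (coordinate-wise) \emph{global} minimality precisely because $g$ is non-convex, while the stationarity identity $2a\alpha_i^2+b\alpha_i+\gamma q|\alpha_i|^q=0$ needs only differentiability of $t\mapsto |t|^q$ away from the origin, which holds since $\alpha_i\neq 0$. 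Eliminating $b$ between the two yields $(1-q)\gamma|\alpha_i|^q\le a\,\alpha_i^2$, and the quantitative input $a=\frac{1}{N}\sum_j \mathcal{K}_{\sigma^*}(\vx_j,\vx_i)^2\le \kappa^2$ follows from the paper's definition of $\kappa$; this reproduces the claimed bound with the exact constant $\left((1-q)/\kappa^2\right)^{1/(2-q)}\gamma^{1/(2-q)}$. Notably, your first-order route gives a sharper constant than the alternative second-order argument ($g''(\alpha_i)\ge 0$ gives only $|\alpha_i|\ge \left(q(1-q)\gamma/(2\kappa^2)\right)^{1/(2-q)}$, which carries extra factors of $q$ and $2$), so matching the stated constant confirms you chose the correct mechanism. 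Two small points to make explicit in a write-up: (i) the problem (\ref{equ: f lq}) is really an optimization over the coefficient vector $\valpha\in\mathbb{R}^N$, since the penalty $\|f\|_q^q$ is defined through the representation $f=\sum_k\alpha_k\mathcal{K}_{\sigma^*}(\cdot,\vx_k)$ — this is what licenses your coordinate freeze; and (ii) the degenerate case $a=0$ cannot occur here (the Gaussian kernel gives $\mathcal{K}_{\sigma^*}(\vx_i,\vx_i)=1$, so $a>0$), though your inequality handles it vacuously in any event.
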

According to Lemma~\ref{lem: lower bound of lq}, the following lemma can be obtained.

\begin{lemma}\label{lem: hypothesis error}
Let $f_{\vz,\lambda}$ be the optimal of (\ref{equ: unconstrained}) and $f_{\vz,\gamma}(\cdot) = \sum_{i=1}^N \mathcal{K}_{\sigma^*}(\cdot,\vx_i)g_\gamma(\vx_i)$ with  $g_\gamma = \mathcal{L}^*_\mathcal{K}(\gamma\vI+\mathcal{L}_{\sigma^*})f_\rho$.
For some $0<q<1$, assume $\gamma$ is carefully selected according to $\lambda $ and $q$, such that $\lambda=(\frac{1-q}{\kappa^2})^\frac{q}{2-q}\gamma^\frac{2}{2-q}$.
Then it holds that
\begin{equation*}
    \mathcal{E}_\vz(f_{\vz,\lambda})  +\lambda \mathcal{R}_0(f_{\vz,\lambda})
    \leq \mathcal{E}_\vz(f_{\vz,\gamma}) + \gamma N^{-1}\sum_{i=1}^N|g_\gamma(\vx_i)| + \gamma N^{1-q}\|f^1_{\vz,\gamma}\|_1^q.
\end{equation*}
\end{lemma}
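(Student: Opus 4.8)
The plan is to descend from the intractable $\ell_0$-objective on the left to the three quantities on the right by chaining four comparison inequalities, one for each of the stepping-stone functions $f^q_{\vz,\gamma}$, $f^1_{\vz,\gamma}$, and $f_{\vz,\gamma}$. The argument hinges on a single structural observation: all three of these functions live in the single RKHS $\mathcal{H}_{\mathcal{K}_{\sigma^*}}$, so each is an admissible competitor in the $\ell_0$-model (\ref{equ: unconstrained}) with $\mu$ concentrated at $\sigma^*$ (i.e.\ all bandwidths taken equal to $\sigma^*$), and for any such function $\mathcal{R}_0$ simply collapses to the number of nonzero kernel coefficients.

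First I would invoke the optimality of $f_{\vz,\lambda}$ in (\ref{equ: unconstrained}). Since $f^q_{\vz,\gamma}$ is feasible, this immediately gives
\[
\mathcal{E}_\vz(f_{\vz,\lambda}) + \lambda \mathcal{R}_0(f_{\vz,\lambda}) \leq \mathcal{E}_\vz(f^q_{\vz,\gamma}) + \lambda \mathcal{R}_0(f^q_{\vz,\gamma}).
\]
The central step is then to convert the $\ell_0$-penalty of $f^q_{\vz,\gamma}$ into its $\ell_q$-penalty. By Lemma~\ref{lem: lower bound of lq} every nonzero coefficient of $f^q_{\vz,\gamma}$ is bounded below by $(\frac{1-q}{\kappa^2})^{1/(2-q)}\gamma^{1/(2-q)}$, so summing $q$-th powers over the nonzero coefficients yields $\|f^q_{\vz,\gamma}\|^q_q \geq \mathcal{R}_0(f^q_{\vz,\gamma})(\frac{1-q}{\kappa^2})^{q/(2-q)}\gamma^{q/(2-q)}$. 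Multiplying by $\gamma$, using $1+\frac{q}{2-q}=\frac{2}{2-q}$, and substituting the prescribed coupling $\lambda=(\frac{1-q}{\kappa^2})^{q/(2-q)}\gamma^{2/(2-q)}$ collapses the right-hand side to exactly $\lambda \mathcal{R}_0(f^q_{\vz,\gamma})$, so that $\lambda \mathcal{R}_0(f^q_{\vz,\gamma})\leq \gamma\|f^q_{\vz,\gamma}\|^q_q$. This algebraic cancellation, which is precisely why $\lambda$ is defined the way it is, is the heart of the lemma and the step I expect to demand the most care.

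Finally I would peel off the remaining two terms by successive optimality. Comparing the $\ell_q$-minimizer $f^q_{\vz,\gamma}$ against the $\ell_1$-minimizer $f^1_{\vz,\gamma}$ and applying the Jensen-type inequality $\sum_i|\alpha_i|^q\leq N^{1-q}(\sum_i|\alpha_i|)^q$, which follows from concavity of $t\mapsto t^q$ on $(0,1)$, gives $\mathcal{E}_\vz(f^q_{\vz,\gamma})+\gamma\|f^q_{\vz,\gamma}\|^q_q\leq \mathcal{E}_\vz(f^1_{\vz,\gamma})+\gamma N^{1-q}\|f^1_{\vz,\gamma}\|_1^q$. Then comparing the $\ell_1$-minimizer $f^1_{\vz,\gamma}$ against $f_{\vz,\gamma}$, whose kernel coefficients are the normalized values $\frac{1}{N}g_\gamma(\vx_i)$, and discarding the nonnegative term $\gamma\|f^1_{\vz,\gamma}\|_1$, yields $\mathcal{E}_\vz(f^1_{\vz,\gamma})\leq \mathcal{E}_\vz(f_{\vz,\gamma})+\gamma N^{-1}\sum_i|g_\gamma(\vx_i)|$.

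Concatenating the four inequalities produces the claimed bound. Apart from the coupling between $\lambda$ and $\gamma$, the only points needing attention are verifying the feasibility of single-bandwidth functions inside the integral-space model (so that the $\mathcal{R}_0$ comparison is legitimate) and carefully tracking the $\frac{1}{N}$ normalization hidden in the definition of $f_{\vz,\gamma}$, which is exactly what turns $\|f_{\vz,\gamma}\|_1$ into $N^{-1}\sum_i|g_\gamma(\vx_i)|$.
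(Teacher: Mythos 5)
Your proposal is correct and follows essentially the same route as the paper's proof: optimality of $f_{\vz,\lambda}$ against the feasible single-bandwidth competitor $f^q_{\vz,\gamma}$, the conversion $\lambda\mathcal{R}_0(f^q_{\vz,\gamma})\leq\gamma\|f^q_{\vz,\gamma}\|_q^q$ via the coefficient lower bound of Lemma~\ref{lem: lower bound of lq} together with the coupling $\lambda=\left(\frac{1-q}{\kappa^2}\right)^{q/(2-q)}\gamma^{2/(2-q)}$, and then successive optimality comparisons with $f^1_{\vz,\gamma}$ and $f_{\vz,\gamma}$ combined with the reverse H\"older bound $\|f^1_{\vz,\gamma}\|_q^q\leq N^{1-q}\|f^1_{\vz,\gamma}\|_1^q$. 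The only differences are cosmetic: you apply the reverse H\"older step earlier in the chain (the paper defers it to the very end), and you explicitly flag the feasibility of single-bandwidth functions in the integral-space model and the $\frac{1}{N}$ normalization in $f_{\vz,\gamma}$, both of which the paper uses implicitly.
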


\begin{proof}
From Lemma~\ref{lem: lower bound of lq} and the definition of $\mathcal{R}_0$ we know that
\begin{equation}\label{equ: RO-Rq}
    \begin{aligned}
        \mathcal{R}_0({f}^q_{\vz,\gamma})\leq
        \left(\frac{\kappa^2}{\gamma (1-q)}\right)^{\frac{q}{2-q}} \|f_{\vz,\gamma}\|_q^q.
    \end{aligned}
\end{equation}
And then we have
\begin{equation}
    \begin{aligned}
     \mathcal{E}_\vz({f}_{\vz,\lambda})
    +\lambda\mathcal{R}_0({f}_{\vz,\lambda})
    &\overset{(a)}\leq\mathcal{E}_\vz({f}^q_{\vz,\gamma})    +\lambda\mathcal{R}_0({f}^q_{\vz,\gamma})\\
    &\overset{(\ref{equ: RO-Rq})}\leq \mathcal{E}_\vz({f}^q_{\vz,\gamma})+\gamma  \|f^q_{\vz,\gamma}\|_q^q\\
    &\leq \mathcal{E}_\vz({f}^1_{\vz,\gamma})+\gamma \|f^1_{\vz,\gamma}\|_q^q + \gamma \|f^1_{\vz,\gamma}\|_1\\
   &\overset{(b)}\leq \mathcal{E}_\vz(f_{\vz,\gamma})+ \gamma N^{-1}\sum_{i=1}^N|g_\gamma(\vx_i)| + \gamma   \|f^1_{\vz,\gamma}\|_q^q ,
    \end{aligned}
\end{equation}
where (a), (b), and (c) use the optimal property of ${f}_{\vz,\lambda}$, ${f}^q_{\vz,\gamma}$, and ${f}^1_{\vz,\gamma}$, respectively.
By reverse Holder inequality it holds that
$
   \gamma \|f^1_{\vz,\gamma}\|_q^q \leq \gamma  N^{1-q} \|f^1_{\vz,\gamma}\|_1^q.$
Combine all above inequalities together, it yields the result in Lemma~\ref{lem: hypothesis error} and we complete the proof.
\end{proof}

Lemma~\ref{lem: hypothesis error} bridges ${f}_{\vz,\lambda}$ and ${f}_{\vz,\gamma}$ via the sparse property of ${f}^q_{\vz,\gamma}$, supporting the proof of (\ref{equ: err decomposition}).
Here we are at the stage of proofing (\ref{equ: err decomposition}).
\begin{proof}
By a direct decomposition we have
\begin{equation*}
    \begin{aligned}
    &\mathcal{E}_\rho(\pi_M(f_{\vz,\lambda})) - \mathcal{E}_\rho(f_\rho) +\lambda \mathcal{R}_0(f_{\vz,\lambda})\\
    =&\{\mathcal{E}_\rho(\pi_M(f_{\vz,\lambda}))-\mathcal{E}_\vz(\pi_M(f_{\vz,\lambda}))\} +\{\mathcal{E}_\vz(\pi_M({f}_{\vz,\lambda}))-\mathcal{E}_\vz({f}_{\vz,\lambda})\} \\
    +&\left\{\mathcal{E}_\vz(f_{\vz,\lambda})  +\lambda \mathcal{R}_0(f_{\vz,\lambda})
    - \mathcal{E}_\vz(f_{\vz,\gamma}) - \gamma N^{-1}\sum_{i=1}^N|g_\gamma(\vx_i)| \right\} \\
    +&\{\mathcal{E}_\vz(f_{\vz,\gamma})-\mathcal{E}_\rho(f_{\vz,\gamma})\} +\left\{\mathcal{E}_\rho(f_{\vz,\gamma}) + \gamma N^{-1}\sum_{i=1}^N|g_\gamma(\vx_i)| -\mathcal{E}_\rho({f}_{\rho})\right\}.
    \end{aligned}
\end{equation*}
From Assumption~\ref{asmp: gt bound} and the definition of the projection operator we know that $\mathcal{E}_\vz(\pi_M(f_{\vz,\lambda}))   \leq\mathcal{E}_\vz(f_{\vz,\lambda}) $.
Therefore, the second term and the last second term are at most zero.
From Lemma~\ref{lem: hypothesis error} we know that the third term is less than $\gamma N^{1-q}\|f^1_{\vz,\gamma}\|_1^q$.
Then we get the result in (\ref{equ: err decomposition}) and  complete the proof.
\end{proof}

According to (\ref{equ: err decomposition}), one can estimate the total error by analysing the upper bound of  $\mathcal{S}_1,\;\mathcal{S}_2$ and $\|f^1_{\vz,\gamma}\|_1^q$. Here $\mathcal{S}_1$ consists of the sample error of $\pi_M(f_{\vz,\lambda})$ and $f_{\vz,\gamma}$, which is associated with the complexity of hypothesis spaces.
And $\mathcal{S}_2$ is the convergence rate of $f_{\vz,\gamma}$ to the regression function ${f}_{\rho}$ under the $\ell_1$ constraint.
The asymptotic behavior of $f_{\vz,\gamma}$ has been well investigated previously in previous works like \cite{Guo2013Learning,shi2019sparse}.
Here we directly quote the following result.

\begin{lemma}\label{lem: s2+s3}
For any $(\gamma, \delta) \in (0,1)^2$, it holds with confidence $1-\delta$ that
\begin{equation*}
    \begin{aligned}
    \mathcal{S}_2
    &\leq 8\kappa^2(2\kappa^2+1)\log^2\left(\frac{4}{\delta}\right)\left\{\frac{\mathcal{D}(\gamma)}{\gamma^2 N^2}+\frac{\mathcal{D}(\gamma)}{\gamma N}\right\} 
     + \frac{2\kappa+1}{N}\sqrt{\mathcal{D}(\gamma)}\log\left(\frac{4}{\delta}\right) + \frac{3}{2}\sqrt{\gamma\mathcal{D}(\gamma)} + 2 {\mathcal{D}(\gamma)}.
    \end{aligned}
\end{equation*}
\end{lemma}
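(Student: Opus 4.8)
The plan is to reduce everything to the bias--variance identity $\mathcal{E}_\rho(f)-\mathcal{E}_\rho(f_\rho)=\|f-f_\rho\|^2_{\mathcal{L}^2_{\rho_\vX}}$ recalled at the start of Section~4.2, so that
\[
\mathcal{S}_2=\|f_{\vz,\gamma}-f_\rho\|^2_{\mathcal{L}^2_{\rho_\vX}}+\gamma N^{-1}\sum_{i=1}^N|g_\gamma(\vx_i)|,
\]
and then to treat $f_{\vz,\gamma}(\cdot)=\tfrac1N\sum_i\mathcal{K}_{\sigma^*}(\cdot,\vx_i)g_\gamma(\vx_i)$ as an empirical average of i.i.d.\ $\mathcal{L}^2_{\rho_\vX}$-valued random variables. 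The key observation I would establish first is that the regularization function $f_\gamma$ is exactly the population counterpart of $f_{\vz,\gamma}$: since $\tilde{\mathcal{K}}_{\sigma^*}$ is constructed so that $\mathcal{L}_{\tilde{\mathcal{K}}_{\sigma^*}}=\mathcal{L}_{\mathcal{K}}\mathcal{L}^*_{\mathcal{K}}$, the expectation $\mathbb{E}_{\vx}[\mathcal{K}_{\sigma^*}(\cdot,\vx)g_\gamma(\vx)]=\mathcal{L}_{\mathcal{K}}g_\gamma=\mathcal{L}_{\mathcal{K}}\mathcal{L}^*_{\mathcal{K}}(\gamma\vI+\mathcal{L}_{\tilde{\mathcal{K}}_{\sigma^*}})^{-1}f_\rho=f_\gamma$. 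A triangle inequality then splits the error into a deterministic \emph{approximation} part $\|f_\gamma-f_\rho\|_{\mathcal{L}^2_{\rho_\vX}}$ and a random \emph{sample} part $\|f_{\vz,\gamma}-f_\gamma\|_{\mathcal{L}^2_{\rho_\vX}}$.

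For the deterministic pieces I would read the bounds straight off the definition of $\mathcal{D}(\gamma)$ in (\ref{equ: D-gamma}): since $f_\gamma$ attains the infimum and the RKHS-norm term is nonnegative, $\|f_\gamma-f_\rho\|^2_{\mathcal{L}^2_{\rho_\vX}}\le\mathcal{D}(\gamma)$ and $\gamma\|f_\gamma\|_{\mathcal{H}_{\tilde{K}_{\sigma^*}}}\le\mathcal{D}(\gamma)$. The second ingredient is the operator identity $\|g_\gamma\|_{\mathcal{L}^2_{\rho_\vX}}=\|f_\gamma\|_{\mathcal{H}_{\tilde{K}_{\sigma^*}}}$ (immediate from a spectral decomposition of $\mathcal{L}_{\mathcal{K}}$, using $\mathcal{L}_{\tilde{\mathcal{K}}_{\sigma^*}}=\mathcal{L}^2_{\mathcal{K}}$), which controls the \emph{population} $\ell_1$ expectation through $\gamma\int_{\mathcal{X}}|g_\gamma|\,d\rho_\vX\le\gamma\|g_\gamma\|_{\mathcal{L}^2_{\rho_\vX}}=\gamma\|f_\gamma\|_{\mathcal{H}_{\tilde{K}_{\sigma^*}}}\le\mathcal{D}(\gamma)$. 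These feed the purely deterministic summands $2\mathcal{D}(\gamma)$ and, combined with the spectral estimate $\|g_\gamma\|^2_{\mathcal{L}^2_{\rho_\vX}}\le\mathcal{D}(\gamma)/\gamma$ for the operator $(\gamma\vI+\mathcal{L}_{\tilde{\mathcal{K}}_{\sigma^*}})^{-1}$, the term $\tfrac32\sqrt{\gamma\mathcal{D}(\gamma)}$.

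For the stochastic part I would invoke a Bernstein-type concentration inequality for random variables valued in the Hilbert space $\mathcal{L}^2_{\rho_\vX}$ applied to $\xi_i=\mathcal{K}_{\sigma^*}(\cdot,\vx_i)g_\gamma(\vx_i)$, using the uniform bound $\|\xi_i\|_{\mathcal{L}^2_{\rho_\vX}}\le\kappa\,\|g_\gamma\|_\infty$ and the second moment $\mathbb{E}\|\xi\|^2_{\mathcal{L}^2_{\rho_\vX}}\le\kappa^2\|g_\gamma\|^2_{\mathcal{L}^2_{\rho_\vX}}$ (since $\|\mathcal{K}_{\sigma^*}(\cdot,\vx)\|^2_{\mathcal{L}^2_{\rho_\vX}}=\tilde{\mathcal{K}}_{\sigma^*}(\vx,\vx)\le\kappa^2$). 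After inserting the $\mathcal{D}(\gamma)/\gamma$-type bounds on the norms of $g_\gamma$, squaring the deviation reproduces the first line $8\kappa^2(2\kappa^2+1)\log^2(4/\delta)\{\mathcal{D}(\gamma)/(\gamma^2N^2)+\mathcal{D}(\gamma)/(\gamma N)\}$. A parallel scalar Bernstein inequality applied to the $|g_\gamma(\vx_i)|$, whose mean is $\|g_\gamma\|_{\mathcal{L}^1_{\rho_\vX}}$ and whose variance is at most $\|g_\gamma\|^2_{\mathcal{L}^2_{\rho_\vX}}$, controls the fluctuation of the empirical $\ell_1$ term around its population value and contributes the $\frac{2\kappa+1}{N}\sqrt{\mathcal{D}(\gamma)}\log(4/\delta)$ summand. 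A union bound over the two concentration events, each at confidence $1-\delta/2$, yields the overall $1-\delta$ statement and the $\log(4/\delta)$ factors.

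Since this is precisely the asymptotic analysis of the $\ell_1$-type estimator $f_{\vz,\gamma}$ developed in \cite{Guo2013Learning,shi2019sparse}, in practice I would quote those results directly after verifying that their hypotheses hold in our setting---a bounded, $C^\infty$ kernel family with $\sup_{\sigma\in\Omega}\|\mathcal{K}_\sigma\|\le\kappa$ and the output bound $|y|\le M$ of Assumption~\ref{asmp: gt bound}. The only genuine obstacle is the constant bookkeeping in the Bernstein step: one must track the interplay between the uniform bound $\|g_\gamma\|_\infty$ and the variance proxy $\|g_\gamma\|^2_{\mathcal{L}^2_{\rho_\vX}}$ and confirm that both are sharply expressed through $\mathcal{D}(\gamma)$ and $\gamma$ via the identity $\|g_\gamma\|_{\mathcal{L}^2_{\rho_\vX}}=\|f_\gamma\|_{\mathcal{H}_{\tilde{K}_{\sigma^*}}}$ and the spectral estimates for $(\gamma\vI+\mathcal{L}_{\tilde{\mathcal{K}}_{\sigma^*}})^{-1}$; once these are in place, the remaining steps are routine.
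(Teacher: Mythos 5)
Your proposal is correct and ultimately takes the same route as the paper: the paper's entire proof of this lemma is a citation to Lemma~1 and Proposition~4 of \cite{Guo2013Learning}, and you both reconstruct the underlying argument of that reference (the decomposition into the deterministic approximation terms bounded via $\mathcal{D}(\gamma)$, the Hilbert-space and scalar Bernstein inequalities for $f_{\vz,\gamma}-f_\gamma$ and the empirical $\ell_1$ term using $\|g_\gamma\|_\infty\leq \kappa\sqrt{\mathcal{D}(\gamma)}/\gamma$ and $\|g_\gamma\|_{\mathcal{L}^2_{\rho_\vX}}=\|f_\gamma\|_{\mathcal{H}_{\tilde{K}_{\sigma^*}}}$) and then propose to quote those very results. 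The sketch is faithful, and the pieces you flag as remaining (constant bookkeeping in the Bernstein step and the spectral estimates linking $g_\gamma$ to $\mathcal{D}(\gamma)$) are exactly what the cited reference supplies.
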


The proof of this lemma can be found in the Lemma~1 and Proposition~4 in \cite{Guo2013Learning}.
Then in the following section we focus on the analysis of $\mathcal{S}_1$ and we give the proof of Lemma~\ref{lem: hypothesis error}.

\subsection{Sample Error Estimation via Rademacher Chaos Complexity}

Generally, the sample error is often guaranteed by the uniformly concentration inequality and the capacity assumption related to the functional space (cf. \cite{shi2013learning}). 
However, the commonly employed capacity assumptions in kernel learning or multi-kernel learning become invalid for $\mathcal{H}_{\Omega}$ as it constitutes an integral space of infinite spaces. In this section, we leverage the sparse property of $f_{\vz,\lambda}$ and the Rademacher chaos complexities to estimate the upper bound of $\mathcal{E}_\rho(\pi_M(f))-\mathcal{E}_z(\pi_M(f))$.
\begin{definition} 
Let $\mathcal{F}$ be a class of functions mapping from $\mathcal{X}\times \mathcal{X}$ to $\mathcal{R}$. Let $\vx_1,\cdots,\vx_N$ be $N$ independent and identically distributed (i.i.d.) samples.
The homogeneous Rademacher chaos process of order 2, with respect to i.i.d. Rademacher variables $\epsilon_1,\cdots,\epsilon_N$, is a random variable system defined by 
$$\mathcal{U}_f(\epsilon)=\frac{1}{N}\left[\sup_{f\in\mathcal{F}}\left|\frac{1}{N}\sum_{i,j\in\mathbb{N}_N,i<j}\epsilon_i \epsilon_j f(\vx_i,\vx_j)\right|\right],\;\; f\in\mathcal{F}.$$
Then the empirical Rademacher chaos complexities over $\mathcal{F}$ is defined as the expectation of its suprema. That is,
\begin{equation*}
    \begin{aligned}
        \mathcal{U}_N(\mathcal{F}) 
        &= \mathbb{E}_\epsilon\left[\sup_{f\in\mathcal{F}}\left|\mathcal{U}_f(\epsilon)\right|\right]
        = \mathbb{E}_\epsilon\left[\sup_{f\in\mathcal{F}}\left|\frac{1}{N}\sum_{i,j\in\mathbb{N}_N,i<j}\epsilon_i \epsilon_j f(\vx_i,\vx_j)\right|\right].
    \end{aligned}
\end{equation*}

\end{definition}

The Rademacher chaos complexities have been previously introduced for the generalization analysis of various kernel learning problems, including multiple kernel learning \citep{ying2010rademacher, zhuang2011two} and deep kernel learning \citep{zhang2023nearly}. In particular, the Rademacher chaos complexity of Gaussian-type kernels has been extensively studied in the literature \citep{ying2010rademacher}.
\begin{lemma}\label{lem: R-gau-kernel}
    (Corollary~1, \cite{ying2010rademacher}) Define a Gaussian-type kernel as follows
    \begin{equation*}
        \mathcal{K}_{\mathrm{gau}} = \left\{\exp\{-\sigma\|x-t\|^2\}:\sigma\in(0,\infty)\right\}.
    \end{equation*}
    Then it holds $\mathcal{U}_N(\mathcal{K}_{\mathrm{gau}})\leq (1+192e)\kappa^2.$
\end{lemma}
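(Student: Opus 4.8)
The plan is to treat $\mathcal{U}_N(\mathcal{K}_{\mathrm{gau}})$ as the expected supremum of an order-$2$ Rademacher chaos \emph{process} indexed by the single bandwidth parameter $\sigma$. Writing $h_\sigma(\vx,\vx')=\exp\{-\sigma\|\vx-\vx'\|^2\}$ and $X_\sigma=\frac1N\sum_{i<j}\epsilon_i\epsilon_j h_\sigma(\vx_i,\vx_j)$, the goal is to control $\mathbb{E}_\epsilon\sup_{\sigma>0}|X_\sigma|$. Two structural features drive the argument. First, since $0\le h_\sigma\le 1$ and $\kappa\ge 1$, each kernel value has a uniform envelope $\kappa^2$, which is the origin of that factor in the bound. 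Second, for any \emph{fixed} $\sigma$ the chaos $X_\sigma$ already has $N$-independent fluctuations: it is centered with variance $\frac1{N^2}\sum_{i<j}h_\sigma(\vx_i,\vx_j)^2\le \tfrac12$. Hence the only mechanism that could inflate the bound with $N$ is the supremum over $\sigma$, and the heart of the proof is to show that the \emph{one-parameter, monotone} Gaussian family contributes only an absolute constant.

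To control the supremum rigorously I would first symmetrize and decouple the chaos. By the decoupling inequality of de la Pe\~na and Gin\'e for Rademacher chaos of order $2$, $\mathbb{E}_\epsilon\sup_\sigma|X_\sigma|$ is bounded, up to an absolute constant, by the decoupled bilinear form $\frac1N\mathbb{E}_{\epsilon,\epsilon'}\sup_\sigma|\sum_{i,j}\epsilon_i\epsilon'_j h_\sigma(\vx_i,\vx_j)|$ with $\epsilon'$ an independent Rademacher copy. Conditioning on $\epsilon'$ turns the inner object into an ordinary empirical Rademacher average of the one-parameter family $\vx_i\mapsto\sum_j\epsilon'_j h_\sigma(\vx_i,\vx_j)$, to which a Dudley entropy bound applies; equivalently, one may apply Talagrand's two-parameter chaining directly to $X_\sigma$, which produces two entropy integrals (a sub-Gaussian term weighted by the weak variance scale and a sub-exponential term weighted by the uniform scale).

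The remaining ingredient is a covering-number estimate for $\mathcal{K}_{\mathrm{gau}}$. Because $\sigma\mapsto e^{-\sigma u}$ is monotone in $\sigma$ and the graphs of any two such functions cross at most once, the family has pseudo-dimension $O(1)$, so its empirical $L_2$ covering numbers satisfy $\log N(\mathcal{K}_{\mathrm{gau}},\varepsilon)\lesssim\log(1/\varepsilon)$. Both entropy integrals arising from the chaining step then converge to absolute constants, and, crucially, the $N^{-1/2}$ prefactor of the Dudley bound cancels the $N^{1/2}$-scale diameter of the decoupled family, leaving a bound of the form (absolute constant)$\times\kappa^2$ that does not grow with $N$. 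Propagating the explicit numerical constants through the decoupling step, the chaining inequalities, and the Gaussian entropy estimate (the last of which involves summing a convergent exponential-type series, accounting for the appearance of $e$) yields the stated value $1+192e$.

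The main obstacle is precisely that order-$2$ Rademacher chaos does not obey a clean contraction principle: one cannot simply peel off the nonlinearity $t\mapsto e^{-t}$ as for ordinary Rademacher complexity, since the parameter $\sigma$ couples the indices $i$ and $j$. This forces the detour through decoupling combined with two-parameter chaining, in which the chaos exhibits a mixed sub-Gaussian/sub-exponential tail and both regimes must be controlled. Verifying that the monotone one-parameter structure genuinely caps the statistical complexity at an absolute constant (rather than a term growing in $N$), and tracking the explicit constants through all of these inequalities, is the technically delicate part of the argument.
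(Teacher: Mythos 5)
The paper does not actually prove this lemma: it is imported verbatim as Corollary~1 of \cite{ying2010rademacher}, so there is no in-paper proof to compare against, and the fair benchmark is the proof in that cited source. Your sketch is, in substance, a reconstruction of exactly that argument: de la Pe\~na--Gin\'e decoupling of the order-2 chaos, a chaining/entropy-integral bound with mixed sub-Gaussian and sub-exponential regimes, a covering-number estimate for the one-parameter Gaussian family, and the uniform envelope supplying the $\kappa^2$ factor (note $\kappa\geq 1$ here, so the paper's $\kappa^2$ is a harmless inflation of the constant in the original, where kernel values are bounded by $1$). Three points to tighten. First, the structural property of $\{u\mapsto e^{-\sigma u}:\sigma>0\}$ is stronger than ``graphs cross at most once'': for $u>0$ any two members never cross, so the family is pointwise totally ordered; this linear ordering (VC-subgraph dimension one) is precisely what \cite{ying2010rademacher} exploit to obtain $\log \mathcal{N}_2(\mathcal{K}_{\mathrm{gau}},\epsilon)\lesssim\log(1/\epsilon)$. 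Second, in the chaos regime the entropy integral carries $\log \mathcal{N}$ to the first power (not $\sqrt{\log \mathcal{N}}$) in the sub-exponential term, and this is where the factor $e$ and the magnitude of $192$ originate; your phrasing leaves this ambiguous. Third, and this is the only genuine gap if one reads your text as a standalone proof: the claim that ``propagating the explicit numerical constants\ldots yields $1+192e$'' is asserted rather than executed, and since the entire content of the lemma is that explicit, $N$-independent constant, the constant bookkeeping through decoupling and chaining is the bulk of the actual work in the original. As a blueprint, however, your route coincides with the proof on record for this statement, and the qualitative reasoning (fixed-$\sigma$ variance at most $\tfrac12$, the supremum over the monotone one-parameter family contributing only an absolute constant) is correct.
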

Based on this estimation, the following result shows that the sample error of $\pi_M(f_{\vz,\lambda})$ can be bounded by the empirical Rademacher chaos complexity over the kernel set derived by bandwidth set $\vTheta=\{\theta_1,\cdots,\theta_{N_{sv}}\}$.
To this end, we first define the function space that $f_{\vz,\lambda}$ exists.
Recall the constraints in (\ref{equ: unconstrained}), then we consider the following function space 
\begin{equation}\label{equ: Br}
    \mathcal{W}(R) = \left\{f:f\in\mathcal{H}_{\Omega}, \;\; {\mu}(\vsigma) = \sum_{\theta_i\in\vTheta} \delta(\vsigma - \theta_i),\;\; \mathcal{R}_0(f) \leq R,\;\; \vTheta\subset(0,+\infty)\right\}.
\end{equation}
\begin{lemma}\label{lem: s1}
Let $f_{\vz,\lambda}\in \mathcal{W}(R)$, where $\mathcal{W}(R)$ is defined by Equation~(\ref{equ: Br}) with proper radius $R$. Then, For any $\vz = \{\vx_i, y_i\}_{i=1}^{N}$ and any $\delta\in(0,1)$, with probability at least $1-\delta$, there holds
\begin{equation*}
    \begin{aligned}
        \mathcal{E}_\rho(\pi_M(f_{z,\lambda}))&-\mathcal{E}_z(\pi_M(f_{z,\lambda})) 
        \leq \frac{1}{2}C_\kappa M^2\log^\frac{1}{2}\left(\frac{2}{\delta}\right)N^{-\frac{1}{2}}R,
    \end{aligned}
\end{equation*}
where $C_\kappa = 32\kappa\left(\sqrt{384e+2}+1\right)$.
\end{lemma}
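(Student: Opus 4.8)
The plan is to prove a uniform deviation bound over the sparse class $\mathcal{W}(R)$ of Equation~(\ref{equ: Br}) and then to reduce the emerging complexity term to the Rademacher chaos complexity $\mathcal{U}_N$, so that Lemma~\ref{lem: R-gau-kernel} applies. First I would exploit Assumption~\ref{asmp: gt bound}: since $|y|\le M$ and $|\pi_M(f)(\vx)|\le M$ by construction of the projection, the per-sample squared loss $(\pi_M(f)(\vx)-y)^2$ lies in $[0,4M^2]$. Replacing one sample $(\vx_i,y_i)$ changes $\sup_{f\in\mathcal{W}(R)}\{\mathcal{E}_\rho(\pi_M(f))-\mathcal{E}_z(\pi_M(f))\}$ by at most $4M^2/N$, so McDiarmid's bounded-difference inequality yields, with probability at least $1-\delta$, that this supremum does not exceed its expectation plus a fluctuation of order $M^2\log^{1/2}(2/\delta)N^{-1/2}$. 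This supplies the $\log^{1/2}(2/\delta)$ and $N^{-1/2}$ factors of the claim.

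Next I would bound the expected supremum by a ghost-sample symmetrization, introducing i.i.d.\ Rademacher variables $\epsilon_1,\dots,\epsilon_N$, and then apply the Ledoux--Talagrand contraction principle twice: once to strip the squared loss, which is $4M$-Lipschitz on $[-M,M]$ and contributes a factor proportional to $M$, and once to remove the $1$-Lipschitz projection $\pi_M$. What remains is the Rademacher average of the raw class, $\mathbb{E}_\epsilon\sup_{f\in\mathcal{W}(R)}\frac{1}{N}\sum_{i=1}^N\epsilon_i f(\vx_i)$.

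The central step is to convert this first-order average into an order-$2$ chaos using the sparse representation enforced by $\mathcal{R}_0(f)\le R$. Each $f\in\mathcal{W}(R)$ decomposes as a sum $f=\sum_{\theta\in\vTheta}f_\theta$ of at most $R$ active components $f_\theta=\sum_j\alpha_{\theta,j}\mathcal{K}_\theta(\cdot,\vx_j)$ whose support points $\vx_j$ lie among the sample, so by subadditivity the average is dominated by $R$ times the complexity of a single kernel atom, taken uniformly over the bandwidth $\theta\in\Omega$. For a fixed atom this complexity is $\frac1N\mathbb{E}_\epsilon\|\sum_i\epsilon_i\mathcal{K}_\theta(\vx_i,\cdot)\|_{\mathcal{H}_\theta}$, which by Jensen's inequality is at most $\frac1N(\mathbb{E}_\epsilon\sum_{i,j}\epsilon_i\epsilon_j\mathcal{K}_\theta(\vx_i,\vx_j))^{1/2}$; separating the diagonal terms $\mathcal{K}_\theta(\vx_i,\vx_i)=1$ from the off-diagonal part recovers exactly the chaos $\frac1N\sum_{i<j}\epsilon_i\epsilon_j\mathcal{K}_\theta(\vx_i,\vx_j)$ whose expected supremum over $\theta$ is $\mathcal{U}_N(\mathcal{K}_{\mathrm{gau}})$. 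Invoking Lemma~\ref{lem: R-gau-kernel} replaces it with the constant $(1+192e)\kappa^2$, so each atom contributes a term of order $\sqrt{384e+2}\,\kappa N^{-1/2}$; multiplying by $R$ and by the numerical factors from the two contractions and the symmetrization, and absorbing everything into $C_\kappa=32\kappa(\sqrt{384e+2}+1)$, yields the stated inequality.

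The main obstacle is precisely this reduction, because the ambient space $\mathcal{H}_\Omega$ is an integral of infinitely many RKHSs and hence has unbounded capacity, so no uniform covering-number or single-RKHS-ball estimate applies directly. Two features rescue the argument: the sparsity budget $\mathcal{R}_0(f)\le R$ restricts $f$ to at most $R$ active kernels, legitimizing the finite subadditivity step, and the Rademacher chaos complexity of Lemma~\ref{lem: R-gau-kernel} is uniform over the entire continuum of bandwidths $\theta\in\Omega$, so the infinitude of candidate bandwidths costs only the constant $(1+192e)\kappa^2$ rather than a genuine capacity penalty. The most delicate point is guaranteeing that the final bound carries no dependence on the interpolation coefficients $\alpha_{\theta,j}$, which may be large; this is exactly why the estimate is conducted on the projected estimator $\pi_M(f)$, whose uniform boundedness by $M$ decouples the complexity from the coefficient magnitudes while preserving the linear dependence on the number of support terms $R$.
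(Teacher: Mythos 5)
Your overall scaffolding --- McDiarmid for the deviation term, symmetrization, contraction to strip the squared loss, a sparsity-based reduction of the supremum to an order-two Rademacher chaos over the bandwidth continuum, Lemma~\ref{lem: R-gau-kernel}, and the diagonal handled by $\mathrm{tr}(\vK_\sigma)\le \kappa^2 N$ --- is exactly the paper's. But at one point you depart from it, and that step fails: you apply contraction a second time to remove the projection $\pi_M$, leaving the Rademacher average of the \emph{raw} class, $\mathbb{E}_\epsilon\sup_{f\in\mathcal{W}(R)}\frac{1}{N}\sum_{i}\epsilon_i f(\vx_i)$. This quantity is infinite: $\mathcal{W}(R)$ as defined in (\ref{equ: Br}) constrains only the \emph{number} of nonzero coefficients via $\mathcal{R}_0(f)\le R$, not their magnitudes, so fixing one active bandwidth and one support point and letting the single coefficient tend to infinity keeps $f\in\mathcal{W}(R)$ while the average blows up. Your subsequent atom computation $\frac{1}{N}\mathbb{E}_\epsilon\bigl\|\sum_i\epsilon_i\mathcal{K}_\theta(\vx_i,\cdot)\bigr\|_{\mathcal{H}_\theta}$ silently reinstates a unit-ball constraint $\|f_\theta\|_{\mathcal{H}_\theta}\le 1$ that nothing in your argument supplies; correspondingly, the second factor of $M$ in the claimed bound $\frac{1}{2}C_\kappa M^2\log^{1/2}(2/\delta)N^{-1/2}R$ has no source in your derivation (the first $M$ comes from the $4M$-Lipschitz loss, but the one that should multiply the chaos term is gone once $\pi_M$ is stripped). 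Your closing remark --- that conducting the estimate on $\pi_M(f)$ is what decouples the complexity from the coefficient magnitudes --- is correct, but it contradicts your middle step, which discards $\pi_M$ before the complexity is ever computed.

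The paper's proof keeps the projection through the duality step: it writes $\sum_i\epsilon_i\pi_M(f)(\vx_i)$, moves $\pi_M$ onto the components $f_\sigma$, bounds the sum over the at most $R$ active bandwidths by $R$ times a supremum over $\sigma$, and only then applies Cauchy--Schwarz with the norm bound $\|\pi_M(f_\sigma)\|_{\mathcal{H}_\sigma}\le M$, arriving at $RM\,\mathbb{E}_\epsilon\sup_\sigma\bigl|\sum_{i,j}\epsilon_i\epsilon_j\mathcal{K}_\sigma(\vx_i,\vx_j)\bigr|^{\frac{1}{2}}$; this is where the missing factor $M$ enters and where finiteness is secured before Lemma~\ref{lem: R-gau-kernel} is invoked. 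To repair your proposal, drop the second contraction: retain $\pi_M$ on the components and inject the bound $M$ at the Cauchy--Schwarz stage. (Whether the truncated components $\pi_M(f_\sigma)$ genuinely lie in $\mathcal{H}_\sigma$ with norm at most $M$ is itself the delicate point of the paper's own argument, but it is the mechanism that renders the bound finite and coefficient-free, and your route has no substitute for it.)
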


\begin{proof}
Based on Assumption~\ref{asmp: gt bound} and the definition of $\pi_M(\cdot)$, it holds that $|\pi_M(f)(\vx_i)-y_i|\leq 2M, \forall {\vx_i,y_i}\in \vz$.
Then applying McDiarmid's bounded difference inequality, the following inequality holds with at least $1-\delta/2$ probability 
\begin{equation*}
    \begin{aligned}
        \sup_{f\in\mathcal{W}(R)} [\mathcal{E}_\rho(\pi_M(f))-\mathcal{E}_z(\pi_M(f))]
        \leq \mathbb{E}\sup_{f\in\mathcal{W}(R)} [\mathcal{E}_\rho(\pi_M(f)) - \mathcal{E}_z(\pi_M(f))] + 4M^2\left(\log\frac{2}{\delta}/2N\right)^{\frac{1}{2}},
    \end{aligned}
\end{equation*}
With at least $1-\delta/2$ probability, the first term can be bounded by
\begin{equation*}
    \begin{aligned}
        \mathbb{E}\sup_{f\in\mathcal{W}(R)} \left[\mathcal{E}_\rho(\pi_M(f))-\mathcal{E}_z(\pi_M(f))\right]
        \overset{(a)}\leq 2\mathbb{E}\mathbb{E}_\epsilon \left[\sup_{f\in\mathcal{W}(R)} \frac{1}{N}\sum_{i\in\mathbb{N}_N}\epsilon_i(\pi_M(f)(\vx_i)-y_i)^2\right]&\\
         \overset{(b)}\leq 2\mathbb{E}_\epsilon \left[\sup_{f\in\mathcal{W}(R)} \frac{1}{N}\sum_{i\in\mathbb{N}_N}\epsilon_i(\pi_M(f)(\vx_i)-y_i)^2\right] + 8M^2\left(\log\frac{2}{\delta}/2N\right)^{\frac{1}{2}},&
    \end{aligned}
\end{equation*}
where $(a)$ uses the standard symmetrization arguments and $\epsilon_i$ are Rademacher variables.
Inequality $(b)$ uses McDiarmid's bounded difference inequality again.
Applying the contraction property of Rademacher averages, it holds that,
\begin{equation*}
    \begin{aligned}
        \mathbb{E}_\epsilon [\sup_{f\in\mathcal{W}(R)} \frac{1}{N}\sum_{i\in\mathbb{N}_N}\epsilon_i(\pi_M(f)(\vx_i)-y_i)^2]&
        \leq \frac{4M}{N} \mathbb{E}_\epsilon\sup_{f\in\mathcal{W}(R)} \sum_{i\in\mathbb{N}_N}\epsilon_i \pi_M(f)(\vx_i) ,
    \end{aligned}
\end{equation*}
because the Lipschitz constant of the loss function $\phi(t) = t^2, \forall |t|\leq 2M$ is bounded by $4M$.
Recall the definition of $\mathcal{W}(R)$ and $\langle \cdot,\cdot\rangle_{\mathcal{H}_{\Omega}}$, we have
\begin{equation*}
    \begin{aligned}
        \mathbb{E}_\epsilon\sup_{f\in\mathcal{W}(R)} &\sum_{i\in\mathbb{N}_N}\epsilon_i \pi_M(f)(\vx_i) 
        =\mathbb{E}_\epsilon\sup_{\vTheta\subset(0,+\infty)}\sup_{\mathcal{R}_0(f)\leq R} \sum_{i\in\mathbb{N}_N} \epsilon_i \pi_M\left(
        \sum_{\sigma\in\vTheta}  \left\langle \mathcal{K}_\sigma(\cdot, \vx_i), f_\sigma\right\rangle \right) \\
        &\leq \mathbb{E}_\epsilon\sup_{\vTheta\subset(0,+\infty)}\sup_{\mathcal{R}_0(f)\leq R} \sum_{i\in\mathbb{N}_N} \epsilon_i 
        \sum_{\sigma\in\vTheta}  \left\langle \mathcal{K}_\sigma(\cdot, \vx_i), \pi_M\left(f_\sigma \right)\right\rangle\\
        &= \mathbb{E}_\epsilon\sup_{\vTheta\subset(0,+\infty)}\sup_{\mathcal{R}_0(f)\leq R} 
        \sum_{\sigma\in\vTheta}  \left\langle \sum_{i\in\mathbb{N}_N} \epsilon_i \mathcal{K}_\sigma(\cdot, \vx_i), \pi_M\left(f_\sigma \right)\right\rangle\\
        &\overset{(a)} \leq   R\mathbb{E}_\epsilon \sup_{\sigma\in(0,+\infty)} \left\langle \sum_{i\in\mathbb{N}_N} \epsilon_i \mathcal{K}_\sigma(\cdot, \vx_i), \pi_M\left(f_\sigma \right)\right\rangle\\
        &\overset{(b)}\leq  R M\;
        \mathbb{E}_\epsilon \sup_{\sigma\in(0,+\infty)} \left| \sum_{i,j\in\mathbb{N}_N}  \epsilon_i\epsilon_j\mathcal{K}_\sigma(\vx_i, \vx_j)\right|^{\frac{1}{2}}\\
        &\leq  R M\;
        \left(\sqrt{2N}\;
        \mathbb{E}_\epsilon  \sup_{\sigma\in(0,+\infty)} \left|\sum_{i,j\in\mathbb{N}_N,i<j}  \epsilon_i\epsilon_j\mathcal{K}_\sigma(\vx_i, \vx_j)/N\right|^{\frac{1}{2}} 
        + \sup_{\sigma\in(0,+\infty)} \sqrt{\mathrm{tr}(\vK_\sigma)} \right),
    \end{aligned}
\end{equation*}
where inequality (a) is satisfied because conditions $\mathcal{R}_0(f)\leq R$ and $\sigma\in\vTheta$ together indicate the valid number of $f_\sigma$ is less than $R$, and inequality (b) is derived by the fact that $\|\pi_M\left(f_\sigma \right)\|_{\mathcal{H}_\sigma} \leq M$.
Here $\vK_\sigma$ denotes the kernel matrix defined as $[\vK_\sigma]_{ij} = \mathcal{K}_\sigma(\vx_i,\vx_j),$ $ \forall \vx_i,\vx_j\in\mathcal{X}$.
Finally, recall the definition of Rademacher chaos complexity and $\mathcal{U}_N(\mathcal{K}_{\mathrm{gau}})$, we have
\begin{equation*}
    \begin{aligned}
        \mathbb{E}_\epsilon  \sup_{\sigma\in\Omega} \left|\sum_{i,j\in\mathbb{N}_N,i<j}  \epsilon_i\epsilon_j\mathcal{K}_\sigma(\vx_i, \vx_j)/N\right|^{\frac{1}{2}}
        \leq \sqrt{\mathcal{U}_N(\mathcal{K}_{\mathrm{gau}})}.
    \end{aligned}
\end{equation*}
Combine all the above estimation with the result in Lemma~\ref{lem: R-gau-kernel}, it yields the following upper bound with at least $1-\delta$ confidence
\begin{equation*}
    \begin{aligned}
        \sup_{f\in\mathcal{W}(R)} [\mathcal{E}_\rho(\pi_M(f))-\mathcal{E}_z(\pi_M(f))]&\leq
         \frac{8M^2 R \kappa}{ \sqrt{N}}(\sqrt{384e+2}+1) + 12M^2\sqrt{\frac{\log (2/\delta)}{2N}},
    \end{aligned}
\end{equation*}
where we use the fact that $\mathrm{tr}(\vK_\sigma)\leq \kappa^2 N$.
With a proper radius $R$ such that $f_{\vz,\lambda}\in\mathcal{W}(R)$, we obtain the result in Lemma~\ref{lem: s1} and complete the proof.
\end{proof}

In a similar approach, we can obtain the following result on the sample error of $f_{z,\gamma}$.
Recall $f_{\vz,\gamma}(\cdot) = \sum_{i=1}^N \mathcal{K}_{\sigma^*}(\cdot,\vx_i)g_\gamma(\vx_i)$ with  $g_\gamma = \mathcal{L}^*_{{\mathcal{K}}}(\gamma\vI+\mathcal{L}_{\tilde{\mathcal{K}}_{\sigma^*}})^{-1}f_\rho$.
Then it holds $\|f_{z,\gamma}\|_{\infty} \leq \kappa\|g_\gamma\|_\infty\leq \frac{\kappa^2}{\gamma}\sqrt{D(\gamma)}$ (see proposition 4 in \cite{Guo2013Learning} for reference).
\begin{lemma}\label{lem: S f_zgamma}
For any $\vz = \{\vx_i, y_i\}_{i=1}^{N}$ and any $\delta\in(0,1)$, with probability at least $1-\delta$, there holds
\begin{equation*}
    \begin{aligned}
        \mathcal{E}_\rho(f_{z,\gamma})&-\mathcal{E}_z(f_{z,\gamma}) 
        \leq \frac{C_\kappa M\kappa^2\sqrt{D(\gamma)}}{4\gamma}N^{-\frac{1}{2}} + 6\sqrt{2} M^2\log^\frac{1}{2}\left(\frac{2}{\delta}\right)N^{-\frac{1}{2}}.
    \end{aligned}
\end{equation*}
\end{lemma}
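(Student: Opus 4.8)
The plan is to reproduce the argument of Lemma~\ref{lem: s1}, now specialized to the single fixed-bandwidth kernel $\mathcal{K}_{\sigma^*}$ and with the size of the estimator controlled by the sup-norm bound recalled just above the statement, $\|f_{\vz,\gamma}\|_\infty\le\kappa\|g_\gamma\|_\infty\le\frac{\kappa^2}{\gamma}\sqrt{D(\gamma)}=:R_\gamma$. First I would note that the same computation giving this sup-norm bound also yields the Hilbert-space bound $\|f_{\vz,\gamma}\|_{\mathcal{H}_{\sigma^*}}\le\kappa\|g_\gamma\|_\infty\le R_\gamma$, using $\mathrm{tr}(\vK_{\sigma^*})\le\kappa^2 N$. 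Hence $f_{\vz,\gamma}$ lies in the ball $\{f\in\mathcal{H}_{\sigma^*}:\|f\|_{\mathcal{H}_{\sigma^*}}\le R_\gamma\}$, and it suffices to bound $\sup_{\|f\|_{\mathcal{H}_{\sigma^*}}\le R_\gamma}[\mathcal{E}_\rho(f)-\mathcal{E}_\vz(f)]$; this ball plays the role that $\mathcal{W}(R)$ played in Lemma~\ref{lem: s1}, with $R_\gamma$ replacing the product of the radius and the projection bound.

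Next I would run the two-step concentration exactly as before: apply McDiarmid's bounded difference inequality to pass to the expected uniform deviation, then symmetrize to reach $\frac{1}{N}\mathbb{E}\mathbb{E}_\epsilon\sup_f\sum_i\epsilon_i(f(\vx_i)-y_i)^2$, invoking McDiarmid a second time. Since $|y|\le M$ by Assumption~\ref{asmp: gt bound} and $\|f_{\vz,\gamma}\|_\infty\le R_\gamma$, the residuals $f(\vx_i)-y_i$ are bounded, so the contraction property of Rademacher averages strips the square at the price of the loss' Lipschitz constant, governed by $M$ (as in Lemma~\ref{lem: s1}) together with $R_\gamma$, leaving a term proportional to $\frac{1}{N}\mathbb{E}_\epsilon\sup_f\sum_i\epsilon_i f(\vx_i)$. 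By the reproducing property this supremum equals $R_\gamma\,\mathbb{E}_\epsilon\big\|\sum_i\epsilon_i\mathcal{K}_{\sigma^*}(\cdot,\vx_i)\big\|_{\mathcal{H}_{\sigma^*}}=R_\gamma\,\mathbb{E}_\epsilon\big(\sum_{i,j}\epsilon_i\epsilon_j\mathcal{K}_{\sigma^*}(\vx_i,\vx_j)\big)^{1/2}$, a chaos-type quantity.

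Then I would control this chaos term by passing to the supremum over all bandwidths and separating the diagonal from the off-diagonal part: writing $\sum_{i,j}\epsilon_i\epsilon_j\mathcal{K}_\sigma(\vx_i,\vx_j)=2\sum_{i<j}\epsilon_i\epsilon_j\mathcal{K}_\sigma(\vx_i,\vx_j)+\mathrm{tr}(\vK_\sigma)$, the off-diagonal part is bounded through the definition of the empirical Rademacher chaos complexity by $\sqrt{2N}\sqrt{\mathcal{U}_N(\mathcal{K}_{\mathrm{gau}})}$ and the diagonal by $\sqrt{\mathrm{tr}(\vK_\sigma)}\le\kappa\sqrt{N}$. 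Substituting Lemma~\ref{lem: R-gau-kernel}, $\mathcal{U}_N(\mathcal{K}_{\mathrm{gau}})\le(1+192e)\kappa^2$, produces the leading $N^{-1/2}$ term carrying the constant $C_\kappa$ and the factor $\frac{\kappa^2}{\gamma}\sqrt{D(\gamma)}=R_\gamma$, while the two McDiarmid applications supply the additive $\log^{1/2}(2/\delta)N^{-1/2}$ term. A union bound over the two $1-\delta/2$ confidence events then delivers the stated inequality.

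The main obstacle is precisely the sample-dependence of $f_{\vz,\gamma}$: because the centers $\vx_i$ and the coefficients $g_\gamma(\vx_i)$ both depend on $\vz$, one cannot use a single-function deviation bound and must argue uniformly over the RKHS ball of radius $R_\gamma$, carrying $R_\gamma$ faithfully through the contraction and Cauchy--Schwarz steps so that it—rather than the radius-times-projection factor $RM$ of Lemma~\ref{lem: s1}—emerges as the controlling scale. A secondary point needing care is that $f_{\vz,\gamma}$ is left unprojected, so the residual bound $|f_{\vz,\gamma}(\vx_i)-y_i|\le R_\gamma+M$ must be tracked when fixing the Lipschitz constant in the contraction step.
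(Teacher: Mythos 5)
Your proposal is correct and follows essentially the same route as the paper's proof: the same two applications of McDiarmid plus symmetrization, contraction of the squared loss, the reproducing-property/Cauchy--Schwarz step with the norm bound $R_\gamma=\frac{\kappa^2}{\gamma}\sqrt{\mathcal{D}(\gamma)}$, the relaxation to a supremum over all bandwidths $\sigma\in(0,+\infty)$ (the paper's step (a)), and the diagonal/off-diagonal split controlled by Lemma~\ref{lem: R-gau-kernel} together with $\mathrm{tr}(\vK_\sigma)\leq\kappa^2 N$. Your two points of extra care---arguing uniformly over the sample-independent RKHS ball of radius $R_\gamma$ rather than over the data-dependent span, and tracking the residual bound $R_\gamma+M$ when fixing the contraction Lipschitz constant---are in fact more scrupulous than the paper, which silently reuses the constant $4M$ from Lemma~\ref{lem: s1} for the unprojected $f_{\vz,\gamma}$; carried through faithfully your constant would be $2(R_\gamma+M)$ in place of $4M$ (yielding a leading factor $(R_\gamma+M)R_\gamma$ rather than the stated $MR_\gamma$), but this discrepancy is a looseness in the paper's own argument, not a defect of your approach.
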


\begin{proof}
    Given data $\vz = \{\vx_i, y_i\}_{i=1}^{N}$, consider $f\in\{f:f(\cdot)=\sum_{i=1}^{N}\alpha_i\mathcal{K}_{\sigma^*}(\cdot,\vx_i), \alpha\in\mathbb{R}\}$.
    From previous analysis, we know that 
    \begin{equation*}
        \begin{aligned}
            \mathcal{E}_\rho(f)-\mathcal{E}_z(f)
        \leq \frac{8M}{N} \mathbb{E}_\epsilon \sum_{i\in\mathbb{N}_N}\epsilon_i f(\vx_i) + 12M^2\sqrt{\frac{\log (2/\delta)}{2N}}.
        \end{aligned}
    \end{equation*}

Recall the definition of $f_{\vz,\gamma}$, it holds
\begin{equation*}
    \begin{aligned}
        \mathbb{E}_\epsilon \sum_{i\in\mathbb{N}_N}\epsilon_i f_{\vz,\gamma}(\vx_i) 
        &=\mathbb{E}_\epsilon \sum_{i\in\mathbb{N}_N} \epsilon_i \left\langle \mathcal{K}_{\sigma^*}(\cdot, \vx_i), f_{\vz,\gamma}\right\rangle \\
        &\overset{(a)}\leq \mathbb{E}_\epsilon\sup_{{\sigma}\in(0,+\infty)} \sum_{i\in\mathbb{N}_N} \epsilon_i 
        \left\langle \mathcal{K}_{\sigma}(\cdot, \vx_i), f_{\vz,\gamma}\right\rangle\\
        &\leq  \frac{\kappa^2}{\gamma}\sqrt{D(\gamma)}\;
        \mathbb{E}_\epsilon \sup_{\sigma\in(0,+\infty)} \left| \sum_{i,j\in\mathbb{N}_N}  \epsilon_i\epsilon_j\mathcal{K}_\sigma(\vx_i, \vx_j)\right|^{\frac{1}{2}}\\
        &\leq  \frac{\kappa^2}{\gamma}\sqrt{D(\gamma)}\;
        \left(\sqrt{2N}\;        \sqrt{\mathcal{U}_N(\mathcal{K}_{\mathrm{gau}})} 
        + \sup_{\sigma\in(0,+\infty)} \sqrt{\mathrm{tr}(\vK_\sigma)} \right),
    \end{aligned}
\end{equation*}
where (a) uses the fact that $\sigma^*\in(0,+\infty)$.
Then combine these estimation with Lemma~\ref{lem: R-gau-kernel} and $\mathrm{tr}(\vK_\sigma)\leq \kappa^2 N$, we obtain the result and complete the proof.
\end{proof}

Next we derive the estimator for the total error.

\begin{proposition}\label{prop: rough bound}
Suppose Assumption~\ref{asmp: gt bound} and  Assumption~\ref{asmp: D} hold with $0<\beta\leq 1$. 
If $(\gamma, q,\delta)\in (0,1)^3$, $\lambda=(\frac{1-q}{\kappa^2})^\frac{q}{2-q}\gamma^\frac{2}{2-q}$, and $R>1$, then there exists a subset $\mathcal{Z}_R$ of $\mathcal{Z}^N$ with measurement at most $\delta$ such that for any $\vz\in\mathcal{W}(R)\setminus\mathcal{Z}(R)$, 
\begin{equation*}
    \begin{aligned}
    \mathcal{E}_\rho&(\pi_M(f_{\vz,\lambda})) - \mathcal{E}_\rho(f_\rho) +\lambda \mathcal{R}_0(f_{\vz,\lambda})\leq C_\kappa M^2\log^\frac{1}{2}\left(\frac{6}{\delta}\right)N^{-\frac{1}{2}}R+\left(\frac{3}{2}\sqrt{c_\beta}+3c_\beta\right)\gamma^\beta
    \\    
    &
    + \frac{1}{4}C_\kappa M\kappa^2\sqrt{c_\beta} \gamma^{\beta/2-1}N^{-\frac{1}{2}}
    +{C_1}\log^2\left(\frac{12}{\delta}\right)\max\left\{\gamma^{\beta-2} N^{-2},\gamma^{\beta-1} N^{-1}\right\}
    +\gamma N^{1-q}\|f^1_{\vz,\gamma}\|_1^q ,
    \end{aligned}
\end{equation*}
where $C_1 = 16\kappa^2(2\kappa^2+1)+(2\kappa+1)\sqrt{c_\beta}.$

\end{proposition}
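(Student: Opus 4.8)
The plan is to read the bound directly off the error decomposition already established in~(\ref{equ: err decomposition}). Since the prescribed coupling $\lambda=(\frac{1-q}{\kappa^2})^{q/(2-q)}\gamma^{2/(2-q)}$ is exactly the hypothesis of Lemma~\ref{lem: hypothesis error}, that decomposition applies and gives
\begin{equation*}
\mathcal{E}_\rho(\pi_M(f_{\vz,\lambda})) - \mathcal{E}_\rho(f_\rho) +\lambda \mathcal{R}_0(f_{\vz,\lambda})\leq \mathcal{S}_1 +\mathcal{S}_2 +\gamma N^{1-q}\|f^1_{\vz,\gamma}\|_1^q .
\end{equation*}
The final summand is carried verbatim into the statement (it is estimated separately in the sequel), so the whole task reduces to producing high-probability upper bounds for $\mathcal{S}_1$ and $\mathcal{S}_2$ and re-expressing them through $\gamma$ alone.

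For $\mathcal{S}_1$ I would treat its two defining brackets separately. The first, $\mathcal{E}_\rho(\pi_M(f_{\vz,\lambda}))-\mathcal{E}_\vz(\pi_M(f_{\vz,\lambda}))$, is controlled by Lemma~\ref{lem: s1} once the hypothesis $f_{\vz,\lambda}\in\mathcal{W}(R)$ is invoked; this supplies the leading term $\tfrac12 C_\kappa M^2\log^{1/2}(\cdot)N^{-1/2}R$. The second, $\mathcal{E}_\vz(f_{\vz,\gamma})-\mathcal{E}_\rho(f_{\vz,\gamma})$, is precisely the quantity bounded in Lemma~\ref{lem: S f_zgamma}; substituting Assumption~\ref{asmp: D} for the factor $\sqrt{\mathcal D(\gamma)}$ turns its first term into the order $\gamma^{\beta/2-1}N^{-1/2}$ term of the statement, while its subordinate $N^{-1/2}$ remainder is absorbed into the $R$-term using $R>1$ and a harmless enlargement of $C_\kappa$. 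For $\mathcal{S}_2$ I would quote Lemma~\ref{lem: s2+s3} and insert $\mathcal D(\gamma)\le c_\beta\gamma^\beta$ throughout: the $\tfrac32\sqrt{\gamma\mathcal D(\gamma)}$ and $2\mathcal D(\gamma)$ pieces collapse into $(\tfrac32\sqrt{c_\beta}+3c_\beta)\gamma^\beta$ (here one uses $\gamma\in(0,1)$ and $\beta\le1$ so that $\gamma^{(1+\beta)/2}\le\gamma^\beta$), whereas the chaos-complexity pieces $\gamma^{\beta-2}N^{-2}+\gamma^{\beta-1}N^{-1}$ and the $\frac{2\kappa+1}{N}\sqrt{\mathcal D(\gamma)}\log(\cdot)$ term are both brought under the single $\max\{\gamma^{\beta-2}N^{-2},\gamma^{\beta-1}N^{-1}\}$ bound, which fixes the constant $C_1=16\kappa^2(2\kappa^2+1)+(2\kappa+1)\sqrt{c_\beta}$.

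Finally I would assemble the probabilistic statements. Each of Lemma~\ref{lem: s1}, Lemma~\ref{lem: S f_zgamma}, and Lemma~\ref{lem: s2+s3} is applied at confidence level $1-\delta/3$, so their intersection holds with probability at least $1-\delta$; the exceptional set $\mathcal{Z}_R$ of measure at most $\delta$ is just the union of the three failure events. This re-scaling is exactly what converts $\log(2/\delta)$ into $\log(6/\delta)$ and $\log^2(4/\delta)$ into $\log^2(12/\delta)$ in the stated bound. The step requiring the most care is not any individual estimate but the \emph{bookkeeping}: verifying each exponent inequality ($\gamma^{\beta/2}\le\gamma^{\beta-1}$, $\gamma^{(1+\beta)/2}\le\gamma^\beta$, $\log(4/\delta)\le\log^2(12/\delta)$) and confirming that the loosely collected constants genuinely dominate the raw outputs of the three lemmas, so that every merged term truly fits under the $\max\{\cdot\}$ expression and the absorbed $N^{-1/2}$ remainders are accounted for.
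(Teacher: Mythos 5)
Your proposal is correct and follows essentially the same route as the paper's own proof: invoking the decomposition~(\ref{equ: err decomposition}) under the stated coupling of $\lambda$ and $\gamma$, bounding the two brackets of $\mathcal{S}_1$ via Lemma~\ref{lem: s1} and Lemma~\ref{lem: S f_zgamma} and $\mathcal{S}_2$ via Lemma~\ref{lem: s2+s3}, each at confidence $1-\delta/3$, then substituting $\mathcal{D}(\gamma)\leq c_\beta\gamma^\beta$ and using $R\geq 1$ to absorb the lower-order $N^{-1/2}$ remainder. Your bookkeeping of the exponent inequalities and the $\log(6/\delta)$, $\log^2(12/\delta)$ rescalings matches the paper's treatment, so there is nothing to add.
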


\begin{proof}
By properly choosing some $R$, we can directly apply lemma~\ref{lem: s1} and know that there exists $\mathcal{Z}_1\subset\mathcal{Z}^N$ with the measure at most $\delta/3$ such that for each $\vz\in\mathcal{W}(R)\setminus\mathcal{Z}_1$,
\begin{equation*}
\begin{aligned}
    \mathcal{E}_\rho(\pi_M(f_{z,\lambda}))&-\mathcal{E}_z(\pi_M(f_{z,\lambda})) 
        \leq \frac{1}{2}C_\kappa M^2\log^\frac{1}{2}\left(\frac{6}{\delta}\right)N^{-\frac{1}{2}}R.
\end{aligned}
\end{equation*}
Similarly, from  Lemma~\ref{lem: s2+s3} and Lemma~\ref{lem: S f_zgamma}, we know that there exists $\mathcal{Z}_2,\mathcal{Z}_3\subset\mathcal{Z}^N$ with the measure at most $\delta/3$ such that
\begin{equation*}
    \begin{aligned}
    \mathcal{S}_2
    &\leq 16\kappa^2(2\kappa^2+1)\log^2\left(\frac{12}{\delta}\right)\max\left\{\frac{\mathcal{D}(\gamma)}{\gamma^2 N^2},\frac{\mathcal{D}(\gamma)}{\gamma N}\right\} \\
    &\quad\quad + \frac{2\kappa+1}{N}\sqrt{\mathcal{D}(\gamma)}\log\left(\frac{12}{\delta}\right) + \frac{3}{2}\sqrt{\gamma\mathcal{D}(\gamma)} + 2 {\mathcal{D}(\gamma)}, \quad\forall\vz\in\mathcal{Z}^m\setminus\mathcal{Z}_2.
    \end{aligned}
\end{equation*}
and for any  $\vz\in\mathcal{Z}^m\setminus\mathcal{Z}_3$, it holds
\begin{equation*}
    \begin{aligned}
            \mathcal{E}_\rho(\pi_M(f_{z,\gamma}))&-\mathcal{E}_z(\pi_M(f_{z,\gamma})) 
        \leq \frac{C_\kappa M\kappa^2\sqrt{D(\gamma)}}{4\gamma}N^{-\frac{1}{2}} + 6\sqrt{2} M^2\log^\frac{1}{2}\left(\frac{6}{\delta}\right)N^{-\frac{1}{2}}.
    \end{aligned}
\end{equation*}
Note that $R\geq 1$. Then we take the above three bounds together and obtain
\begin{equation*}
    \begin{aligned}
    \mathcal{E}_\rho(\pi_M(f_{\vz,\lambda})) &- \mathcal{E}_\rho(f_\rho) +\lambda \mathcal{R}_0(f_{\vz,\lambda})\leq C_\kappa M^2\log^\frac{1}{2}\left(\frac{6}{\delta}\right)N^{-\frac{1}{2}}R + \frac{C_\kappa M\kappa^2\sqrt{D(\gamma)}}{4\gamma}N^{-\frac{1}{2}}\\
     &
    +16\kappa^2(2\kappa^2+1)\log^2\left(\frac{12}{\delta}\right)\max\left\{\frac{\mathcal{D}(\gamma)}{\gamma^2 N^2},\frac{\mathcal{D}(\gamma)}{\gamma N}\right\}  + \frac{2\kappa+1}{N}\sqrt{\mathcal{D}(\gamma)}\log\left(\frac{12}{\delta}\right) \\
    & + \frac{3}{2}\sqrt{\gamma\mathcal{D}(\gamma)} + 2 {\mathcal{D}(\gamma)}    
    +\gamma N^{1-q}\|f^1_{\vz,\gamma}\|_1^q,\quad\forall\vz\in\mathcal{W}(R)\setminus(\mathcal{Z}_1\cup\mathcal{Z}_2\cup\mathcal{Z}_3).\\
    \end{aligned}
\end{equation*}
Recall Assumption~\ref{asmp: D}, and thus we have
\begin{equation*}
    \begin{aligned}
    &\mathcal{E}_\rho(\pi_M(f_{\vz,\lambda})) - \mathcal{E}_\rho(f_\rho) +\lambda \mathcal{R}_0(f_{\vz,\lambda})
    \leq C_\kappa M^2 \log^\frac{1}{2}\left(\frac{6}{\delta}\right)N^{-\frac{1}{2}}R+ \frac{1}{4}C_\kappa M\kappa^2\sqrt{c_\beta}\gamma^{\beta/2-1}N^{-\frac{1}{2}}\\
    &+{C_1}\log^2\left(\frac{12}{\delta}\right)\max\left\{\gamma^{\beta-2} N^{-2},\gamma^{\beta-1} N^{-1}\right\} 
    +\left(\frac{3}{2}\sqrt{c_\beta}+3c_\beta\right)\gamma^\beta
    +\gamma N^{1-q}\|f^1_{\vz,\gamma}\|_1^q.
    \end{aligned}
\end{equation*}
Thus we complete our proof. 
\end{proof}

\subsection{Bounding $\mathcal{R}_0(f_{\vz,\lambda})$ by Iteration Technique}
In Proposition~\ref{prop: rough bound}, the radius $R$ of $\mathcal{R}_0(f_{\vz,\lambda})$ is assumed to be properly chosen.
Then this section determines the specific value of $R$, for which we need the conclusion on the bound of $\|f_{\vz,\gamma}^1\|_1$.
To this end, we need assumption on the covering number of the corresponding RKHS $\mathcal{H}_{\sigma^*}$.
The normalized  $\ell_2$-metric $d_2$ is defined as $d_2(\vx,\vx') = \left(\frac{1}{k}\sum_{i=1}^k|x_i-x'_i|^2\right).$
We consider balls $\mathcal{B}_\sigma(s_j,\epsilon) = \{s\in\mathbb{R}^k: d_2(s,s_j)\leq \epsilon\} $ in 
$\mathbb{R}^k$.
And the $\ell_2$-empirical covering number of $\mathcal{S}$ w.r.t.  $\epsilon$ and  $d_2$ is 
\begin{equation*}
\begin{aligned}
    \mathcal{N}(\mathcal{S},\epsilon, d_2) = &\min\left\{l\in\mathbb{N}:\mathcal{S}\subset \bigcup_{j=1}^l \mathcal{B}_\sigma(s_j,\epsilon) \;\;\mathrm{for\;some\;}\{s_j\}_{j=1}^l\subset \mathcal{H}_\sigma\right\},
\end{aligned}
\end{equation*}
which means the minimal number of balls with radius $\epsilon$ to cover the set $\mathcal{S}$ in $\mathcal{H}_{\sigma^*}$.
Let $\mathcal{F}$ be a set of function on $\mathcal{X}$, $\vx=\{x_i\}_{i=1}^k\subset\mathcal{X}^k$ and $\mathcal{F}|_{\vx} = \{(f(x_i))_{i=1}^k:\;f\in\mathcal{F}\}\subset\mathbb{R}^k$
Then its $\ell_2$ empirical covering number is defined as 
\begin{equation*}
    \mathcal{N}_2(\mathcal{F},\epsilon) = \sup_{k\in\mathbb{N}}\sup_{\vx\subset\mathcal{X}^k}\mathcal{N}(\mathcal{F}|_\vx,\epsilon,d_2).
\end{equation*}
We consider the linear combination of functions $\{\mathcal{K}_{\sigma^*}(\cdot,\vx_i)|\vx_i\in\mathcal{X}\}$ under the $\ell_1$ constraint, denoted as $\mathcal{B}_{{\sigma^*},R}$, with some $R>0$
\begin{equation}
    \mathcal{B}_{\sigma,R} = \left\{\sum_{i=1}^N\alpha_i\mathcal{K}_\sigma(\cdot,\vx_i),\;N\in\mathbb{N},\;\vx_i\in\mathcal{X},\;\alpha_i\in\mathbb{R},\;\mathrm{and}\;\sum_{i=1}^N|\alpha_i|\leq R\right\}.
\end{equation}
Then we use the following classical covering number assumption for $\mathcal{B}_{\sigma^*,1}$.
\begin{assumption}\label{asmp: covering number}
Let $\sigma^*$ defined by the minimizer of (\ref{equ: D-gamma}).
For the RBF kernel $\mathcal{K}$ derived by $\sigma^*$, there exists $p\in(0,2)$ and a constant $c_{\sigma^*,p}>0$ independent of $\epsilon$ such that
$$\log_2\mathcal{N}_2(\mathcal{B}_{\sigma^*,1},\epsilon)\leq c_{\sigma^*,p}\epsilon^{-p},\;\;\forall \epsilon>0.$$
\end{assumption}

Under this assumption, there is existing result on the upper bound of $\|f_{\vz,\gamma}^1\|_1$.
\begin{lemma}\label{lem: f1 bound}
(\cite{shi2019sparse}, Proposition 16) Let $f_{\vz,\gamma}^1$ be the optimal solution of (\ref{equ: f lq}) when $q=1$.
Assume Assumption~\ref{asmp: D} and Assumption~\ref{asmp: covering number} hold with $0<\beta\leq 1$ and $0<\gamma\leq 1$.
Take $\gamma = N^{-\tau}$ with $0<\tau<\frac{2}{2+p}$ and $0<\delta<1$. Then with $1-\delta$ confidence, it holds that
\begin{equation*}
    \begin{aligned}
    \|f_{\vz,\gamma}^1\|_1\leq
    C'_2\left(\log(1/\delta)+\log(J(\tau,p))\right)^3N^{(1-\beta)\tau},
    \end{aligned}
\end{equation*}
where $J(\tau,p)$ is a constant defined by
\begin{equation*}
    J(\tau,p) = \max\left\{2,\frac{\log\frac{(2-(2+p)\tau)p}{(1-p\tau)(2+p)}}{\log\frac{2p}{2+p}}\right\},
\end{equation*} 
and 
$C'_2 = 64((2Cc^{\frac{1}{2}}_{\sigma^*,p}(2-p)^{-1}M^2)^{\frac{2+p}{2-p}}M^2 + 4C'_1+12\sqrt{c_\beta} + 24c_\beta)$ with 
\begin{equation*}
\begin{aligned}
    C'_1 = 2(12(20+2Cc^{\frac{1}{2}}_{\sigma^*,p}(2-p)^{-1})&(3M+\kappa)^2(2\kappa^2+1)\\
    &+176M^2+40\kappa^2(2\kappa^2+1)+3)c_\beta + (4\kappa+5)\sqrt{c_\beta}    
\end{aligned}
\end{equation*}
and a universal constant $C$.
\end{lemma}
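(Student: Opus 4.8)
The plan is to combine a coarse a priori bound on $\|f^1_{\vz,\gamma}\|_1$ with an iterative bootstrap that sharpens the power of $N$ step by step, exploiting the fact that $\|f^1_{\vz,\gamma}\|_1$ is itself the $\ell_1$-radius of the ball $\mathcal{B}_{\sigma^*,R}$ containing $f^1_{\vz,\gamma}$, so that a uniform sample-error estimate over that ball feeds back an improved radius. First I would record the coarse bound: since the zero function is feasible in (\ref{equ: f lq}) with $q=1$, optimality of $f^1_{\vz,\gamma}$ gives $\gamma\|f^1_{\vz,\gamma}\|_1\le\mathcal{E}_\vz(0)\le M^2$, hence $\|f^1_{\vz,\gamma}\|_1\le M^2\gamma^{-1}=M^2N^{\tau}=:R_0$, supplying the initial radius.

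Next, for any $R$ with $f^1_{\vz,\gamma}\in\mathcal{B}_{\sigma^*,R}$, I would compare $f^1_{\vz,\gamma}$ against the reference $f_{\vz,\gamma}$ via optimality in (\ref{equ: f lq}):
\begin{equation*}
\gamma\|f^1_{\vz,\gamma}\|_1\le\{\mathcal{E}_\vz(f_{\vz,\gamma})-\mathcal{E}_\vz(f^1_{\vz,\gamma})\}+\gamma\|f_{\vz,\gamma}\|_1.
\end{equation*}
The difference $\mathcal{E}_\vz(f_{\vz,\gamma})-\mathcal{E}_\vz(f^1_{\vz,\gamma})$ splits into a sample-error term for $f_{\vz,\gamma}$ (controlled by the $\mathcal{S}_2$-analysis of Lemma~\ref{lem: s2+s3}), a sample-error term for $f^1_{\vz,\gamma}$ over $\mathcal{B}_{\sigma^*,R}$ (the crux, below), and a negative excess-error contribution that helps; meanwhile $\gamma\|f_{\vz,\gamma}\|_1=\gamma N^{-1}\sum_i|g_\gamma(\vx_i)|\lesssim\gamma^\beta$ under $\mathcal{D}(\gamma)\le c_\beta\gamma^\beta$ (Assumption~\ref{asmp: D}), which is the source of the eventual $\gamma^{\beta-1}$ floor. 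For the crux I would invoke Assumption~\ref{asmp: covering number}: by scaling $\log_2\mathcal{N}_2(\mathcal{B}_{\sigma^*,R},\epsilon)\le c_{\sigma^*,p}(R/\epsilon)^p$, and a one-sided ratio-type (Bernstein) concentration over $\pi_M\mathcal{B}_{\sigma^*,R}$ then yields, with confidence $1-\delta$,
\begin{equation*}
\mathcal{E}_\rho(\pi_M f)-\mathcal{E}_\vz(\pi_M f)\le\tfrac12\bigl(\mathcal{E}_\rho(\pi_M f)-\mathcal{E}_\rho(f_\rho)\bigr)+\varepsilon(R,N,\delta),\qquad\forall f\in\mathcal{B}_{\sigma^*,R},
\end{equation*}
where a localized-complexity (Dudley chaining) computation gives a remainder $\varepsilon(R,N,\delta)$ of order $R^{2p/(2+p)}N^{-2/(2+p)}$ up to factors logarithmic in $1/\delta$. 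Absorbing the half-excess-error on the left leaves a radius dependence that is the sublinear power $R^{2p/(2+p)}$.

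Substituting this into the comparison and dividing by $\gamma=N^{-\tau}$ produces a self-improving inequality, valid whenever $f^1_{\vz,\gamma}\in\mathcal{B}_{\sigma^*,R}$:
\begin{equation*}
\|f^1_{\vz,\gamma}\|_1\le c_1\,R^{\theta}\,N^{\tau-\frac{2}{2+p}}\,\Lambda(\delta)+c_2\,\gamma^{\beta-1},\qquad\theta:=\tfrac{2p}{2+p}\in(0,1),
\end{equation*}
where $\Lambda(\delta)$ is polylogarithmic in $1/\delta$ and $\gamma^{\beta-1}=N^{(1-\beta)\tau}$ is exactly the target order. Since $p<2$ forces $\theta<1$, feeding the current radius back as the new radius contracts geometrically: writing $R_j\sim N^{r_j}$ with $r_0=\tau$, the exponents obey $r_{j+1}=\max\{\theta r_j+\tau-\tfrac{2}{2+p},\,(1-\beta)\tau\}$, whose contracting branch has fixed point strictly below $(1-\beta)\tau$ precisely because $\tau<\tfrac{2}{2+p}$. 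I would therefore iterate from $R_0=M^2N^\tau$; after $J(\tau,p)$ steps—the number of geometric contractions needed to drive the first term below the floor in the worst case, yielding the closed form $J(\tau,p)$ depending only on $\tau$ and $p$—the first term is dominated and the bound stabilizes at $\|f^1_{\vz,\gamma}\|_1\lesssim N^{(1-\beta)\tau}$. Spending confidence $\delta/J(\tau,p)$ on each application of the concentration inequality (a union bound over the iterations) and tracking how the logarithmic remainders accumulate then produces the stated $(\log(1/\delta)+\log J(\tau,p))^3$ prefactor and the constant $C'_2$.

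The hard part will be twofold. First, establishing the one-sided ratio/Bernstein concentration over $\mathcal{B}_{\sigma^*,R}$ with the sharp radius dependence $R^{2p/(2+p)}$: this requires feeding the entropy bound of Assumption~\ref{asmp: covering number} into a local Rademacher / chaining argument together with a variance--expectation (Bernstein) comparison, so that half of the excess error can be absorbed and the leftover scales sublinearly in $R$. Second, controlling error propagation through the bootstrap: one must ensure the multiplicative constants and the spent confidence remain polylogarithmic across all $J(\tau,p)$ iterations rather than compounding, secure the clean $\gamma^{\beta-1}$ floor (rather than a cruder power) from the $\mathcal{S}_2$-analysis, and verify that the iteration count matches the closed-form $J(\tau,p)$—in particular that $\tau<\tfrac{2}{2+p}$ is exactly the condition making the recursion contract to the desired level.
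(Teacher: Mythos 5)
The paper contains no proof of this lemma to compare against: it is imported verbatim as Proposition~16 of \cite{shi2019sparse}, and your sketch is, in outline, a faithful reconstruction of how that cited result is actually established — the crude bound $\|f^1_{\vz,\gamma}\|_1\le M^2\gamma^{-1}$ from optimality against the zero function, a one-sided ratio/Bernstein concentration over $\mathcal{B}_{\sigma^*,R}$ whose remainder scales as $R^{2p/(2+p)}N^{-2/(2+p)}$ via the rescaled entropy bound $\log_2\mathcal{N}_2(\mathcal{B}_{\sigma^*,R},\epsilon)\le c_{\sigma^*,p}(R/\epsilon)^p$, the $\gamma^{\beta-1}=N^{(1-\beta)\tau}$ floor extracted from the $\mathcal{S}_2$-type analysis together with $\mathcal{E}_\rho(f_{\vz,\gamma})\ge\mathcal{E}_\rho(f_\rho)$, and a deterministic radius recursion that contracts because $2p/(2+p)<1$, with fixed point $((2+p)\tau-2)/(2-p)<0$ precisely when $\tau<\tfrac{2}{2+p}$, stabilizing after a number of steps matching $J(\tau,p)$ up to an off-by-one convention. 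This is the same iteration technique the present paper redeploys for Proposition~\ref{lem: r bound} when bounding $\mathcal{R}_0(f_{\vz,\lambda})$, so your route coincides with the intended derivation; the pieces you defer (the exact cube power of the logarithm and the explicit constants $C'_1$, $C'_2$) are indeed bookkeeping inside the cited proof rather than missing ideas.
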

Then we can bound $\mathcal{R}_0(f_{\vz,\lambda})$ by a commonly-used iteration technique (e.g. \cite{Smale2007LearningTE, Wu2006LearningRO, Shi2011ConcentrationEF}) and obtain the following proposition.

\begin{proposition}\label{lem: r bound}
Under the Assumption~(1-3), let $0<\delta<1$, $\lambda = (\frac{1-q}{\kappa^2})^\frac{q}{2-q}N^{-\frac{2}{2-q}\tau}$ and $\gamma = N^{-\tau}$ with $\frac{1-q}{1-q(1-\beta)}<\tau<\min\{\frac{1}{2\beta+1},\frac{2-q}{4}\}$ and $1>q>0$.
Then with at least $1-\delta$ confidence we have
\begin{equation}\label{equ: R bound}
    \begin{aligned}
    \mathcal{R}_0(f_{\vz,\lambda}) \leq C_3\left(\log \tilde{J}\right)^{3q}\left(\log\left(\frac{24}{\delta}\right) + \log(\tilde{J}+1)\right)^{2\tilde{J}} N^{\left(\frac{q}{2-q}+q(1-\beta)\right)\tau+1-q},
    \end{aligned}
\end{equation}
where $\tilde{J}$ is a positive constant defined by
\begin{equation}\label{equ: def J}
    \tilde{J} = \max\left\{2,\frac{\log\frac{(2-(2+p)\tau)p}{(1-p\tau)(2+p)}}{\log\frac{2p}{2+p}},\frac{4\tau}{2-q-4\tau}\right\},
\end{equation} and $C_3=\left(M^{2(\tilde{J}+1)}+C_2\right)     \left(\frac{2\kappa^2C_\kappa }{1-q}\right)^{\tilde{J}}$.

\end{proposition}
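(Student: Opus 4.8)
The plan is to bootstrap Proposition~\ref{prop: rough bound} into a self-improving bound on $\mathcal{R}_0(f_{\vz,\lambda})$. First I would discard the nonnegative accuracy term: since $\mathcal{E}_\rho(\pi_M(f_{\vz,\lambda}))-\mathcal{E}_\rho(f_\rho)=\|\pi_M(f_{\vz,\lambda})-f_\rho\|_{L^2_{\rho_\vX}}^2\geq 0$, the left-hand side of Proposition~\ref{prop: rough bound} dominates $\lambda\mathcal{R}_0(f_{\vz,\lambda})$. Substituting $\gamma=N^{-\tau}$ and the prescribed $\lambda=(\frac{1-q}{\kappa^2})^{q/(2-q)}N^{-\frac{2}{2-q}\tau}$, invoking Assumption~\ref{asmp: D} to replace $\mathcal{D}(\gamma)$ by $c_\beta\gamma^\beta$, and bounding $\|f^1_{\vz,\gamma}\|_1$ via Lemma~\ref{lem: f1 bound}, I obtain, for every $\vz$ with $\mathcal{R}_0(f_{\vz,\lambda})\le R$ lying outside a bad set of measure at most $\delta'$, an inequality of the form $\mathcal{R}_0(f_{\vz,\lambda})\le \vartheta R+\mathcal{B}$, where $\vartheta=\lambda^{-1}C_\kappa M^2\log^{1/2}(6/\delta')N^{-1/2}$ is the self-referential coefficient and $\mathcal{B}$ collects the $\lambda$-normalized residual terms. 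A short exponent computation shows $\vartheta\asymp N^{-\Delta}$ with $\Delta=\tfrac12-\tfrac{2\tau}{2-q}=\tfrac{2-q-4\tau}{2(2-q)}>0$ (this is exactly where $\tau<\tfrac{2-q}{4}$ enters), while the dominant residual scales as $\mathcal{B}\asymp N^{s_b}$ with $s_b=(\tfrac{q}{2-q}+q(1-\beta))\tau+1-q$; the remaining residual exponents, such as $\tfrac{2\tau}{2-q}-\beta\tau$, are all smaller than $s_b$ throughout the admissible window of $\tau$.

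The core is the iteration, which is forced because the excluded bad set in Proposition~\ref{prop: rough bound} depends on the assumed radius $R$; one cannot simply set $R=\mathcal{R}_0(f_{\vz,\lambda})$ and solve the fixed point, since that radius is itself random. Instead I would run a deterministic decreasing sequence of radii. Starting from a crude polynomial bound $R^{(0)}$ (the number of nonzero coefficients is trivially bounded by a power of $N$), I apply the inequality at the fixed radius $R^{(j)}$ to certify $\mathcal{R}_0(f_{\vz,\lambda})\le R^{(j+1)}:=\vartheta R^{(j)}+\mathcal{B}$, and repeat. Because each step multiplies the homogeneous part by $\vartheta\asymp N^{-\Delta}$, the $N$-exponent of $\vartheta^{j}R^{(0)}$ decreases by the fixed amount $\Delta$ at every step; hence after a \emph{constant} number $\tilde{J}$ of steps the homogeneous part falls below the floor $N^{s_b}$, leaving $R^{(\tilde{J})}\lesssim N^{s_b}$. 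Dividing the gap between the starting exponent and $s_b$ by $\Delta$ produces the term $\tfrac{4\tau}{2-q-4\tau}$ in the definition of $\tilde{J}$, and the maximum with $J(\tau,p)$ (and with $2$) is taken so that $\tilde{J}$ simultaneously dominates the iteration depth already used inside Lemma~\ref{lem: f1 bound}.

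It then remains to account for confidence and accumulated constants. Each of the $\tilde{J}$ invocations of Proposition~\ref{prop: rough bound} consumes one bad set, so allocating confidence $\delta'=\delta/(c\tilde{J})$ per step keeps the total excluded measure below $\delta$ and replaces $\log(1/\delta')$ by $\log(24/\delta)+\log(\tilde{J}+1)$. Iterating the recursion $\tilde{J}$ times multiplies these confidence-dependent logarithmic factors together, yielding $(\log(24/\delta)+\log(\tilde{J}+1))^{2\tilde{J}}$, whereas the $(\log\tilde{J})^{3q}$ prefactor is inherited directly from raising the Lemma~\ref{lem: f1 bound} bound on $\|f^1_{\vz,\gamma}\|_1$ to the power $q$. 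The per-step multiplicative constant $\tfrac{2\kappa^2C_\kappa}{1-q}$ raised to the power $\tilde{J}$, together with the $M^{2(\tilde{J}+1)}$ arising from the $M^2$ factors carried through each iteration, assemble into $C_3=(M^{2(\tilde{J}+1)}+C_2)(\tfrac{2\kappa^2C_\kappa}{1-q})^{\tilde{J}}$, and the dominant exponent $s_b$ gives precisely~(\ref{equ: R bound}).

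I expect the main obstacle to be the exponent-and-constant bookkeeping of the iteration rather than any single analytic estimate. The delicate points are: pinning down $\tilde{J}$ so that the descending homogeneous exponent provably reaches $s_b$ while also majorizing $J(\tau,p)$; verifying that $s_b$ is genuinely the largest of the several residual exponents across the whole admissible window, which is where the three constraints $\tfrac{1-q}{1-q(1-\beta)}<\tau<\min\{\tfrac{1}{2\beta+1},\tfrac{2-q}{4}\}$ are used simultaneously; and controlling how the logarithmic, $M$, $\kappa$, and $(1-q)^{-1}$ factors compound over $\tilde{J}$ radius-dependent steps without the total confidence budget exceeding $\delta$. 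Each individual inequality is routine, but keeping all of them consistent through the bootstrapping is the crux.
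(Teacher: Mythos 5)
Your proposal follows the paper's own proof essentially step for step: isolate $\lambda\mathcal{R}_0(f_{\vz,\lambda})$ from Proposition~\ref{prop: rough bound} via nonnegativity of the excess risk, substitute $\gamma=N^{-\tau}$ and the prescribed $\lambda$, bound $\|f^1_{\vz,\gamma}\|_1$ by Lemma~\ref{lem: f1 bound}, and iterate the resulting self-referential inequality $\mathcal{R}_0(f_{\vz,\lambda})\leq\max\{a_NR,\,b_N\}$ over a deterministic radius sequence with a union bound over the $R$-dependent bad sets, using the same contraction exponent $\Delta=\tfrac{2-q-4\tau}{2(2-q)}$, the same dominant floor exponent $\tilde{\alpha}=\left(\tfrac{q}{2-q}+q(1-\beta)\right)\tau+1-q$, and the same bookkeeping for the $\left(\log(24/\delta)+\log(\tilde{J}+1)\right)^{2\tilde{J}}$ and $(\log\tilde{J})^{3q}$ factors. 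The one deviation is the seed: the paper starts from $R^{(0)}=M^2/\lambda$, obtained by comparing $f_{\vz,\lambda}$ with the zero function so that $\mathcal{W}(R^{(0)})=\mathcal{Z}^N$ holds surely (and this seed is what makes the iteration depth come out as $\tfrac{4\tau}{2-q-4\tau}$, by driving the homogeneous exponent $-\tfrac{J}{2}+\tfrac{2(J+1)\tau}{2-q}$ below zero), whereas your cruder deterministic $N$-power seed is equally legitimate as a probability-one starting radius but entails a larger constant iteration count, so it yields the stated rate in $N$ while not exactly reproducing the specific $\tilde{J}$ and $C_3$ appearing in (\ref{equ: R bound}).
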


\begin{proof}
Let $\lambda = (\frac{1-q}{\kappa^2})^\frac{q}{2-q}N^{-\frac{2}{2-q}\tau}$ and $\gamma = N^{-\tau}$ with $\frac{1-q}{1-q(1-\beta)}<\tau\leq \frac{1}{2\beta+1}$ and $1>q>0$.
And from Proposition~\ref{prop: rough bound} and Lemma~\ref{lem: f1 bound} we know that with confidence $1-\delta$ it holds
\begin{equation}
    \begin{aligned}
    &\mathcal{E}_\rho(\pi_M(f_{\vz,\lambda})) - \mathcal{E}_\rho(f_\rho) +\lambda \mathcal{R}_0(f_{\vz,\lambda})\\
    &\leq 
    C_\kappa M^2 \log^\frac{1}{2}\left(\frac{6}{\delta}\right)N^{-\frac{1}{2}}R
    +\left(C_1+\frac{3}{2}\sqrt{c_\beta}+3c_\beta + \frac{1}{4}C_\kappa M\kappa^2\sqrt{c_\beta}\right)\log^2\left(\frac{24}{\delta}\right) N^{-\beta\tau}\\
    &+C_2^{'q}\left(\log\left(\frac{2}{\delta}\right)+\log(J(\tau,p))\right)^{3q}N^{(q(1-\beta)-1)\tau+1-q}.
    \end{aligned}
\end{equation}
Note the fact that $-\beta\tau \leq (q(1-\beta)-1)\tau+1-q$ and $q/(2-q) < 1$, then we have
\begin{equation}
    \begin{aligned}
    \mathcal{R}_0(f_{\vz,\lambda}) \leq \max\{a_N R,b_N\},\quad\forall\vz\in\mathcal{W}(R)\setminus\mathcal{Z}_R,
    \end{aligned}
\end{equation}
where the measure of $\mathcal{Z}_R$ is no more than $\delta$ and
\begin{equation*}
    \begin{aligned}
    &a_N = \frac{2\kappa^2C_\kappa M^2}{1-q} \log^{1/2}\left(\frac{12}{\delta}\right)N^{-\frac{1}{2}+\frac{2}{2-q}\tau} ,\\
    &b_N = \frac{\kappa^2 C_2}{1-q} \log(J(\tau,p))^{3q}\log^2\left(\frac{24}{\delta}\right)N^{\left(\frac{q}{2-q}+q(1-\beta)\right)\tau+1-q},
    \end{aligned}
\end{equation*}
where 
$C_2 = 2C_1+3\sqrt{c_\beta}+6c_\beta+C_\kappa M\kappa^2\sqrt{c_\beta}/2 + 2C_2$.
This follows that
\begin{equation}\label{equ: Wr}
    \mathcal{W}(R) \subseteq \mathcal{W}(\max\{a_N  R,b_N\}) \cup \mathcal{Z}_R .
\end{equation}
Then we can determine $\mathcal{R}_0(f_{\vz,\lambda})$ by iteratively applying (\ref{equ: Wr}) on a sequence of radii $\{R^{(j)}\}$, which is defined as $R^{(0)}=M^2/\lambda$ and 
\begin{equation}\label{equ: def Rj}
\begin{aligned}
    R^{(j)} &= \max\{a_N R^{(j-1)},b_N\}
    \quad \forall j\in\mathbb{N}.
\end{aligned}
\end{equation}
Recall the measure of $\mathcal{Z}(R^{(j)})$ is no more than $\delta$, and from the optimality of $f_{\vz,\lambda}$ we know that
\begin{equation*}
    \lambda \mathcal{R}_0(f_{\vz,\lambda})\leq\mathcal{E}_z(f_{\vz,\lambda}) + \lambda \mathcal{R}_0(f_{\vz,\lambda})\leq\mathcal{E}_z({\bf 0}) + \lambda \mathcal{R}_0({\bf 0}) \leq M^2,
\end{equation*} 
which indicates that $\mathcal{W}(R^{(0)})= \mathcal{Z}^N$.
Then apply the inclusion (\ref{equ: Wr}) for $j=1,\cdots,J$, we have
\begin{equation}\label{equ: inclusion}
    \begin{aligned}
    \mathcal{Z}^N =     \mathcal{W}(R^{(0)}) \subseteq \mathcal{W}(R^{(1)}) \cup \mathcal{Z}(R^{(0)})
    \subseteq \cdots \subseteq \mathcal{W}(R^{(J)}) \cup \left(\bigcup_{j=0}^{J-1}\mathcal{Z}(R^{(j)})\right),
    \end{aligned}
\end{equation}
where the measure of $\left(\bigcup_{j=0}^{J-1}\mathcal{Z}(R^{(j)})\right)$ is no more than $J\delta$ and therefore the measure of $\mathcal{W}(R^{(J)})$ at least $1-J\delta$.
By the definition (\ref{equ: def Rj}) we have
\begin{equation}
    \begin{aligned}
    R^{(J)} = \max\{(a_N)^{J}R^{(0)},&(a_N)^{J-1}b_N,\cdots, a_Nb_N,b_N\}.
    \end{aligned}
\end{equation}
The first term can be bounded as
\begin{equation}
    \begin{aligned}
    &(a_N)^{J} R^{(0)} \leq (a_N)^{J} M^2 \lambda^{-1}
    \leq \left(\frac{2\kappa^2C_\kappa}{1-q}\log^{\frac{1}{2}}\left(\frac{12}{\delta}\right)\right)^{J}M^{2(J+1)}
    N^{-\frac{J}{2}+\frac{2(J+1)}{2-q}\tau}.
    \end{aligned}
\end{equation}
And the rest terms can be reduced as
\begin{equation}
    \begin{aligned}
    &\max\{(a_N)^{J-1}b_N,\cdots, a_Nb_N,b_N\}= \max\{(a_N)^{J-1},1\}b_N.
    \end{aligned}
\end{equation}

Define
\begin{equation*}
    \begin{aligned}
    &A_\delta = \frac{2\kappa^2C_\kappa }{1-q}\log^{1/2}\left(\frac{12}{\delta}\right),\\
    &B_\delta = \frac{\kappa^2 C_2}{1-q} \log(J(\tau,p))^{3q}\log^2\left(\frac{24}{\delta}\right),\\
    & \tilde{\alpha} = \left(\frac{q}{2-q}+q(1-\beta)\right)\tau+1-q,
    \end{aligned}
\end{equation*}
then we have
\begin{equation}
    \begin{aligned}
    R^{(J)} =\max\left\{A_\delta^{J}M^{2(J+1)},B_\delta ,A_\delta^{J-1} B_\delta\right\} N^{\tilde{\nu}}
    \end{aligned}
\end{equation}
where
\begin{equation*}
\begin{aligned}
    \tilde{\nu} &= \max\left\{-\frac{J}{2}+\frac{2(J+1)}{2-q}\tau, \;\tilde{\alpha},\;\tilde{\alpha} + (J-1)\left(-\frac{1}{2}+\frac{2}{2-q}\tau\right)
    \right\}.
\end{aligned}
\end{equation*}
We choose $\tau$ by restricting
\begin{equation}
    \frac{2\tau}{2-q}-\frac{1}{2}\leq 0
\end{equation}
Then we can determine $J$ under this restriction as the minimal integer number satisfying
\begin{equation*}
\begin{aligned}
    -\frac{J+1}{2}+\frac{2(J+2)}{2-q}\tau\leq0,
\end{aligned}
\end{equation*}
and that is,
\begin{equation*}
\begin{aligned}
    \max\left\{1,\frac{4\tau}{2-q-4\tau}-1\right\}\leq J<\max\left\{2,\frac{4\tau}{2-q-4\tau}\right\}.
\end{aligned}
\end{equation*}
Then recall (\ref{equ: inclusion}) and we have that with confidence at least $1-2\delta$,
\begin{equation*}
    \mathcal{R}_0(f_{\vz,\lambda})\leq \left(M^{2(J+1)}+C_2\right)
     \left(\frac{2\kappa^2C_\kappa }{1-q}\right)^J \log^{2J}\left(\frac{24}{\delta}\right)(\log J(\tau,p))^{3q}N^{\tilde{\alpha}}.
\end{equation*}
Then we can derive the bound by scaling $(J+1)\delta$ to $\delta$ and complete our proof.
\end{proof}

\subsection{Proof of Main Result}

Now we are at the stage of proofing Theorem~\ref{the: main result}.
\begin{proof}
From the definition of $f_\rho$, we have that
$
    \|\pi_M(f_{\vz,\lambda})-f_\rho\|_{L_{\rho_\vX}} = \mathcal{E}_\rho(\pi_M(f_{\vz,\lambda})) - \mathcal{E}_\rho(f_\rho).
$
Then by Proposition~\ref{prop: rough bound} we know that for some $R>0$ there exists a $\mathcal{Z}(R)$ whose measurement is at most $\delta\;(0<\delta<1)$ such that $\forall \vz\in\mathcal{W}(R)\setminus\mathcal{Z}(R)$,
\begin{equation*}
    \begin{aligned}
    \|\pi_M(f_{\vz,\lambda})-f_\rho\|_{L_{\rho_\vX}}
    \leq 
    C_\kappa M^2 \log^{1/2}\left(\frac{12}{\delta}\right)RN^{-1/2}+\left(\frac{3}{2}\sqrt{c_\beta}+3c_\beta\right)\gamma^\beta&
    \\
    +{C_1}\log^2\left(\frac{12}{\delta}\right)\max\left\{\gamma^{\beta-2} N^{-2},\gamma^{\beta-1} N^{-1}\right\} 
    +\gamma N^{1-q}\|f^1_{\vz,\gamma}\|_1^q.&
    \end{aligned}
\end{equation*}
Let $R$ be the right hand side of (\ref{equ: R bound}) and then the measurement of $\mathcal{W}(R)$ is at least $1-\delta$.
Lemma~\ref{lem: f1 bound} guarantee that with at least $1-\delta$ confidence that 
\begin{equation*}
    \begin{aligned}
    \|f_{\vz,\gamma}^1\|_1\leq
    C'_2\left(\log(1/\delta)+\log(J(\tau,p))\right)^3N^{(1-\beta)\tau}.
    \end{aligned}
\end{equation*}
Let $\lambda =(\frac{1-q}{\kappa^2})^\frac{q}{2-q} N^{-\frac{2}{2-q}\tau}$ and $\gamma = N^{-\tau}$ with $\frac{1-q}{1-q(1-\beta)}<\tau<\min\{\frac{1}{2\beta+1},\frac{2-q}{4}\}$  and $1>q>0$.
Combining the above three bound together we have that with at least $1-3\delta$ confidence that
\begin{equation*}
    \begin{aligned}
    \|\pi_M(f_{\vz,\lambda})&-f_\rho\|_{L_{\rho_\vX}}
    \leq \tilde{C_3}\log^2\left(\frac{12}{\delta}\right)\left(\log\left(\frac{24}{\delta}\right)+\log(\tilde{J}+1)\right)^{2\tilde{J}}N^{-\Delta},
    \end{aligned}
\end{equation*}
where $\tilde{C_3}=16M^2(\kappa+1)(\sqrt{384e+2}+1)C_3+\frac{1}{2}C_2$ and 
\begin{equation}
    \Delta=\min\left\{\frac{1}{2}-
    \tilde{\alpha},
    \;\frac{2\tau}{2-q}- \tilde{\alpha},
    \;\beta\tau
    \right\}.
\end{equation}
Then under the restriction on $\tau,\beta$, and $q$, we know that
\begin{equation*}
    \begin{aligned}
        \Delta&=\frac{2}{2-q}\tau-\tilde{\alpha}=\left(1-q(1-\beta)\right)\tau-(1-q).
    \end{aligned}
\end{equation*}
Finally, we consider the assumptions and restrictions.
Recall that Gaussian kernels are considered and we suppose $f_\rho\in\mathcal{H}^s(\mathcal{X})$ for some $s>0$. Then from the Proposition 22 in \cite{ying2007learnability} we know that Assumption~\ref{asmp: D} holds true.
Recall that $\sigma^*$ belongs to a pre-given closed interval $\Omega$, according to previous result in \cite{Shi2011ConcentrationEF}, the capacity assumption~\ref{asmp: covering number} is satisfied for Gaussian kernel $\mathcal{K}_{\sigma^*}$ and we can choose $p=d/s$.
Besides, we choose $q = 1-\xi$ and $\tau = \frac{\beta/4}{\xi+\beta-\xi\beta}$ with arbitrarily small $\frac{\beta}{4}>\xi>0$.
One can verify this choice satisfies the restriction of $\tau$ and $q$ since $\beta\in(0,1]$.
And then we obtain $\Delta = \frac{\beta}{4}-\epsilon$ and
\begin{equation*}
\begin{aligned}
    &\frac{4\tau}{2-q-4\tau} = \frac{\beta}{(1-\beta)\xi^2 + \xi} >2,
    \\
    &\frac{\log\frac{(2-(2+p)\tau)p}{(1-p\tau)(2+p)}}{\log\frac{2p}{2+p}} 
    = \frac{\log\frac{1-p\tau}{1-\frac{p+2}{a}\tau}}{\log\frac{p+2}{2p}} +1
    =\frac{\log\frac{2s\beta+2s\xi(1-\beta)-2d\beta}{6s\beta+8s\xi(1-\beta)-d\beta}}{\log\frac{d+2s}{2d}}+1.    
\end{aligned}
\end{equation*}

By scaling $3\delta$ to $\delta$ we then derive the total bound and complete our proof.
\end{proof}

\section{Numerical Experiments}
\label{sec: exp}
This section presents results that support our earlier theoretical analysis and highlight the outstanding performance of the proposed kernel ridgeless model. We compare these results to advanced regression methods using real datasets, with a specific emphasis on examining the impact of the number of training data and the number of support data. This analysis sheds light on the crucial effects of these factors on the algorithm's performance.

\subsection{Experiment Setting}
\textbf{Datasets.} Synthetic data are generated from typical nonlinear regression test functions provided by \cite{Cherkassky1996Comparison} with the following formulations:
\begin{equation*}
    \begin{aligned}
        &f_1(\vx)=\frac{1+\sin(2x(1)+3x(2))}{3.5+\sin(x(1)-x(2))},\;D=[-2,2]^2,\\
        &f_2(\vx)=10\sin(\pi x(1)x(2))+20(x(3)-0.5)^2+5x(4)+10x(5)+0x(6),\;D=[-1,1]^6,\\
        &f_3(\vx)=\exp(2\pi x(1)(\sin(x(4)))+\sin(x(2) x(3))),\;D=[-0.25,0.25]^4,
    \end{aligned}
\end{equation*}
where $D=[a,b]^n=\{\vx|\vx\in \mathbf{R}^n,a\leq x(i)\leq b,\forall 1\leq i \leq n\}$.
Real datasets include: Yacht \citep{misc_yacht_hydrodynamics_243}, Airfoil \citep{misc_airfoil_self-noise_291}, Parkinson \citep{tsanas2009accurate}, SML \citep{misc_sml2010_274}, Electrical \citep{misc_electrical_grid_stability_simulated_data__471}, Tomshardware \citep{misc_buzz_in_social_media__248} from UCI dataset \citep{asuncion2007uci}, Tecator from StatLib \citep{vlachos2005statlib}, Comp-active from Toronto University,  and KC House from Kaggle \citep{harlfoxem2016house}.
MNIST dataset \citep{deng2012mnist} and Fashion-MNIST \citep{xiao2017fashion} are used for testing classification task, where we use the given training and test set.
MNIST and Fashion-Mnist contains images of $28\times 28$ pixels, from digit $0$ to digit $9$.
We vectorize each image to a $784\times 1$ vector.
Each feature dimension of data and the label are normalized to $[-1,1]$.
Other detailed description of datasets are provided in Appendix~\ref{apd: exp}.

\textbf{Measurement.} 
We use R-squared ($R^2$), also known as the coefficient of determination (refer to \cite{gelman2019r} for more details), on the test set $\mathcal{Z}_{test}$ to evaluate the regression performance. 
\begin{displaymath}
R^2=1-\frac{\sum_{(\vx_i,y_i)\in \mathcal{Z}_{test}}(y_i-{\hat f(\vx_i)})^2}{\sum_{(\vx_i,y_i)\in \mathcal{Z}_{test}}(y_i-\bar{y})^2},
\end{displaymath}
where $\hat{f}$ is the estimated function, and $\bar{y}$ is the mean of labels.

\textbf{Compared methods.} 
We compared 9 regression methods, including 2 traditional kernel regression methods using RBF (RBF KRR, \citep{vovk2013kernel}) and indefinite TL1 kernels (TL1 KRR, \citep{Tl12018Huang}). Additionally, there are multiple kernel learning methods applied on support vector regression, denoted as SVR-MKL and R-SVR-MKL (using only RBF kernel candidates). We also consider 3 recent kernel methods: Falkon \citep{rudi2017falkon, meanti2022efficient}, EigenPro3.0 \citep{Eigenpro}, Recursive feature machines (RFMs, \citep{radhakrishnan2022feature}), with the first 2 being based on the Nystr\"om method. Finally, 2 neural network-based methods are included: ResNet  \citep{chen2020deep}, and wide neural network (WNN).
All setting and hyper-parameters of these methods are given in Appendix~\ref{apdx: 3}.

Except where specified, all the following experiments randomly take $80\%$ of the total data as training data and the rest as testing data, and are repeated 50 times.
In Algorithm~\ref{alg: AKL}, we perform the inverse operation on $\vK_\vTheta(\vX,\vX) + \lambda\vI_N$, where $\lambda=1e-5$, instead of directly on $\vK_\vTheta(\vX,\vX)$, in order to mitigate potential numerical issues.
All the experiments were conducted using Python on a computer equipped with an AMD Ryzen 9 5950X 16-Core 3.40 GHz processor, 64GB RAM, and an NVIDIA GeForce RTX 4060 GPU with 8GB memory.
The code is publicly accessible at \url{https://github.com/hefansjtu/LABRBF_kernel}.

\subsection{Experimental Result}

\begin{figure}[tbp]
\begin{center}
\subfloat[Synthetic data $f_1$: 600 training data, 2 features]{\includegraphics[width=0.85\textwidth]{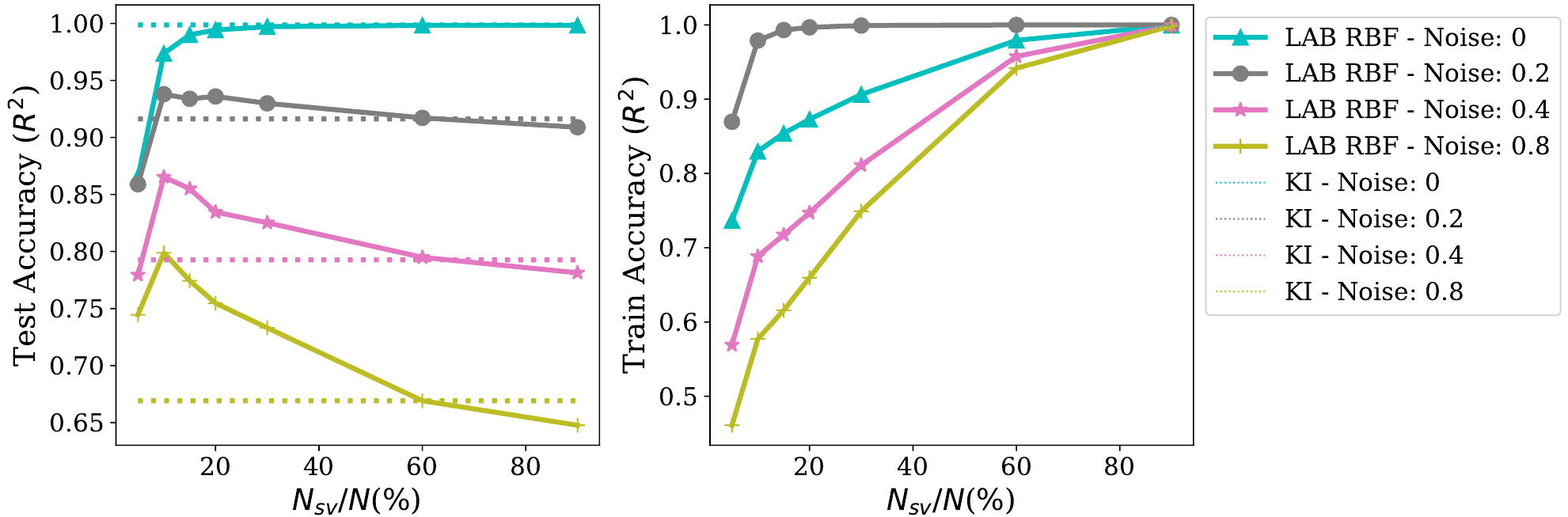}}
\vspace{3pt}
\subfloat[Synthetic data $f_2$: 600 training data, 6 features]{\includegraphics[width=0.85\textwidth]{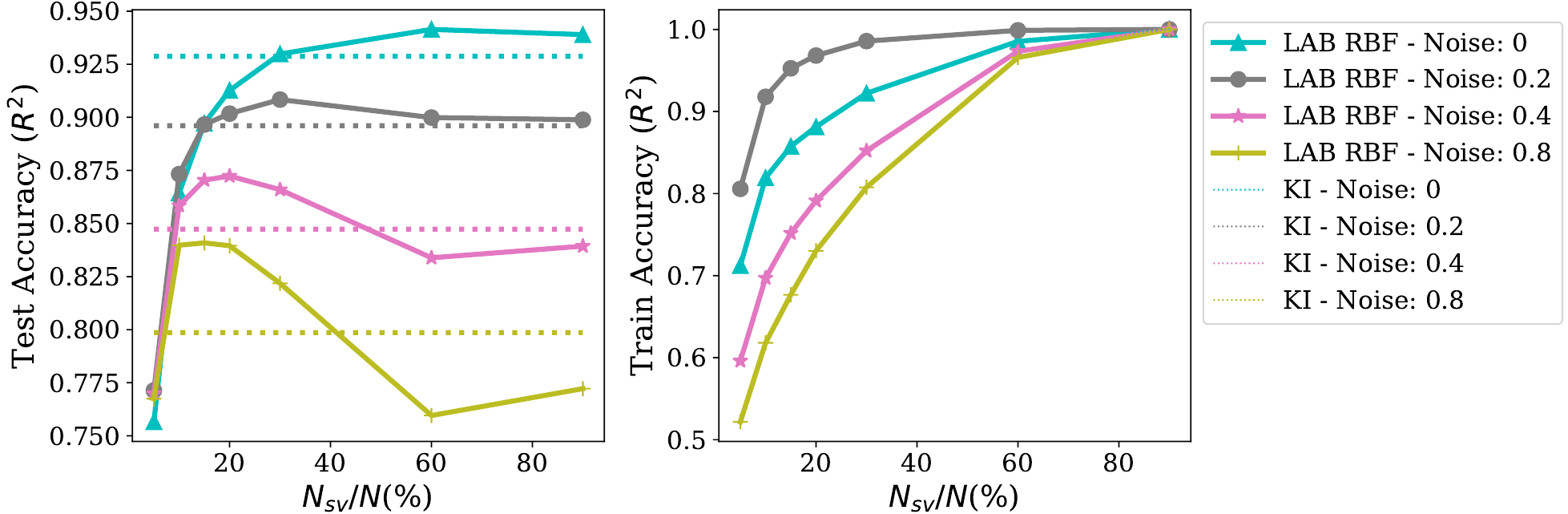}}
\vspace{3pt}
\subfloat[Synthetic data $f_3$: 600 training data, 4 features]{\includegraphics[width=0.85\textwidth]{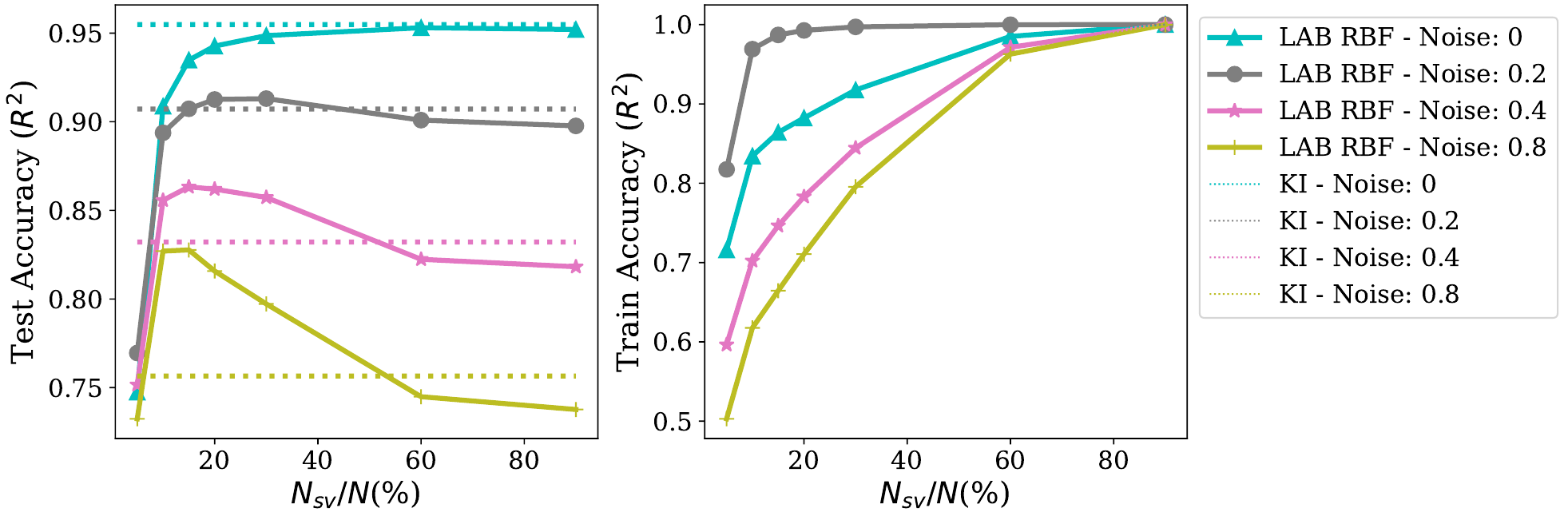}}
\caption{Effect of the number of support data on the performance of Algorithm~\ref{alg: AKL}. Three synthetic are used. Results of Algorithm~\ref{alg: AKL} is presented in solid lines, and results of traditional kernel interpolation models are shown in dash lines. Various levels of noise are introduced into the training data.} 
\label{fig: exp2-1}
\end{center}
\end{figure}

\begin{figure}[htbp]
\begin{center}
\includegraphics[width=0.9\textwidth]{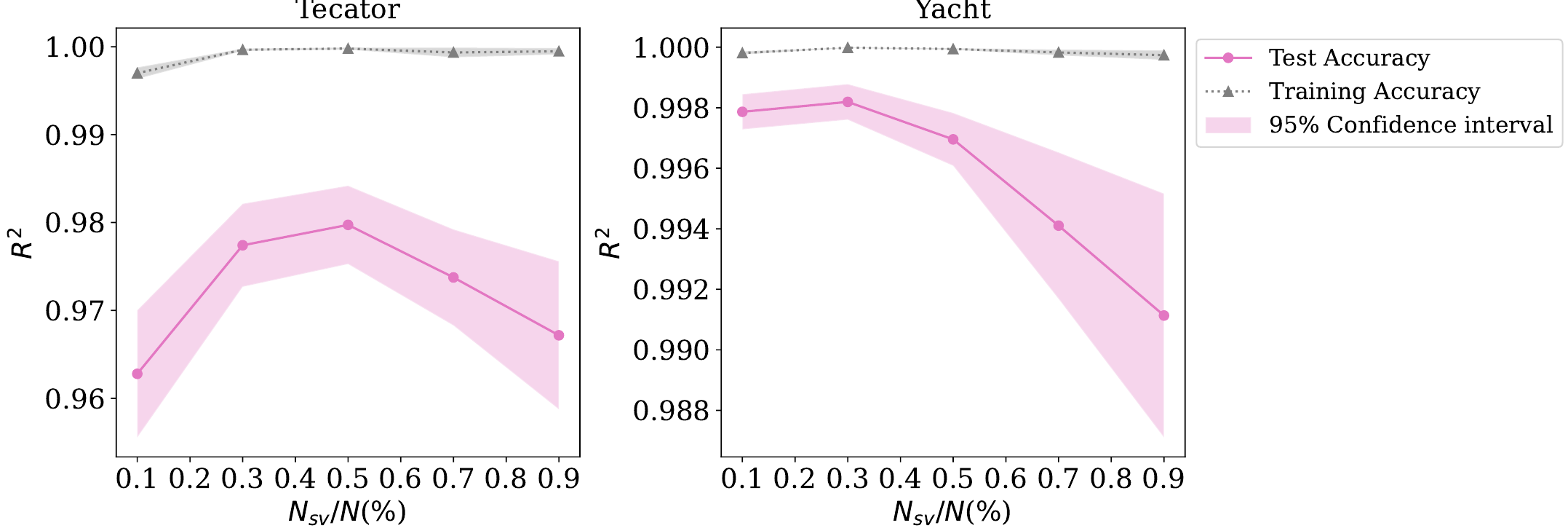}
\caption{Effect of the number of support data on the performance of Algorithm~\ref{alg: AKL}. Two real datasets are used. Test accuracy is presented in solid lines, and training accuracy is shown in dash lines. } 
\label{fig: exp2-2}
\end{center}
\end{figure}

\textbf{The impact of support data number.}
Figures~\ref{fig: exp2-1} provide a detailed illustration of the impact of the support data number, displaying training and test accuracy curves in relation to the ratio of support data. To thoroughly evaluate the effect of noise, synthetic data with varying noise levels are considered in Figure~\ref{fig: exp2-1}, measured by the ratio of noise variance to the label variance. The results of the standard RBF kernel interpolation model (denoted as KI) are marked by the dashed lines. It is observed that KI is not robust to noise, as the accuracy sharply decreases with higher noise levels. In contrast, the proposed LAB RBF kernel-based ridgeless regression exhibits good robustness when an appropriate number of support data is selected. This validates our previous analysis, indicating that controlling support data number can enhance the model's generalization ability.

Then, we utilize two small real datasets to closely examine the impact of the number of support data points.
Figure~\ref{fig: exp2-2} shows that having too few support data limits the capacity of the hypothesis space to fit the data, resulting in underfitting. Conversely, if the number of support data is excessively large, the remaining training data becomes insufficient to provide necessary information for learning bandwidths. This can cause the model to behave more like a simple kernel-based interpolation, making it less robust to noise and prone to overfitting, as evident from Figure~\ref{fig: exp2-2}. Therefore, selecting an appropriate number of support data is crucial to strike a balance between model complexity and overfitting. 

\textbf{Representational ability of the estimator.}
Four more real datasets with varying feature dimensions are studies in Figure~\ref{fig: exp1}, which illustrates the accuracy curve of kernel ridgeless regression with LAB RBF kernels in relation to the number of training data.  As the number of training data increases, approximating all training data becomes more challenging, evident in the decline of the training accuracy curve. Conversely, the test accuracy improves with more information, indicating that the proposed algorithm gradually captures the underlying function. It is important to note that with only hundreds of support data, our approach effectively learns from tens of thousands of training data. For instance, with just 500 support data points, our method achieves over 99.5\% training accuracy and 97.5\% testing accuracy on the MNIST dataset, demonstrating the strong representational ability of the estimator.

In Figure\ref{fig: exp1}, results for different numbers of support data are also presented. It is observed that using more support data leads to better accuracy on the training dataset, which again aligns with our theoretical analysis. As the support data number increases, the capacity of the hypothesis space increases, allowing it to capture more complex underlying patterns in the data and resulting in higher training accuracy. However, its effect on generalization ability is not always positive. For instance, the function with 400 support data performs worse than that with only 200 support data in Figure \ref{fig: exp1} (b). As analyzed previously, a large number of support data implies a complex hypothesis space, which might bring larger sample error.

\textbf{Comparison with other regression methods on more real datasets.}
The regression results of 10 methods on small-scale datasets are presented in Table~\ref{tab: exp-1}. It is evident that greater model flexibility leads to improved regression accuracy, thus highlighting the benefits of flexible models. 
Notably, TL1 KRR outperforms RBF KRR in most datasets due to its indefinite nature. R-SVR-MKL, which considers a larger number of RBF kernels, exhibits much better performance than RBF KRR. While SVR-MKL, which considers a wider range of kernel types, achieves even higher accuracy compared to R-SVR-MKL. 
Among the neural network models, both ResNet and WNN demonstrate superior performance to the aforementioned methods.  
Advanced kernel methods, including Falkon, EigenPro3.0, and RFMs, also present significant improvement over traditionay kernel methods.
Overall, LAB RBF achieves the highest regression accuracy, significantly increasing the $R^2$ compared to the baseline. Notably, LAB RBF performs better than ResNet in certain datasets, indicating that LAB RBF kernels offer sufficient flexibility and training bandwidths on the training dataset is indeed effective to enhance the model generalization ability.

\begin{figure}[H]
\begin{center}
\subfloat[Electricity: 10000 training data, 11 features]{\includegraphics[width=0.83\textwidth]{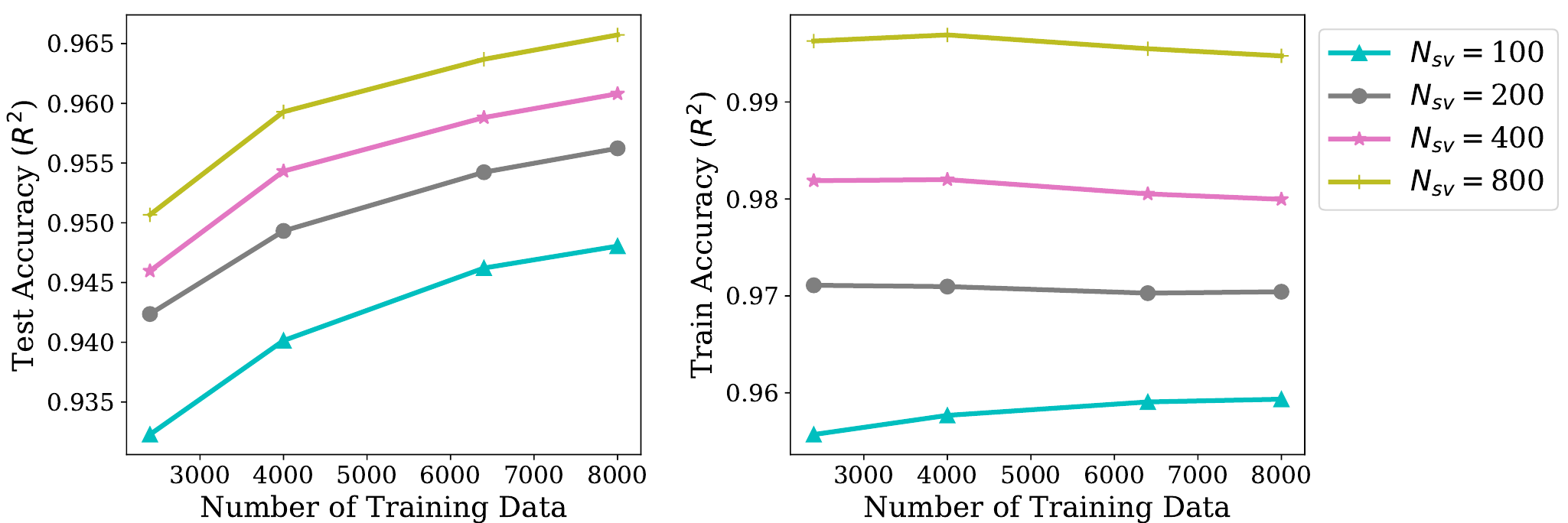}}
\vspace{3pt}
\subfloat[Tomshardware: 28179 training data, 96 features]{\includegraphics[width=0.83\textwidth]{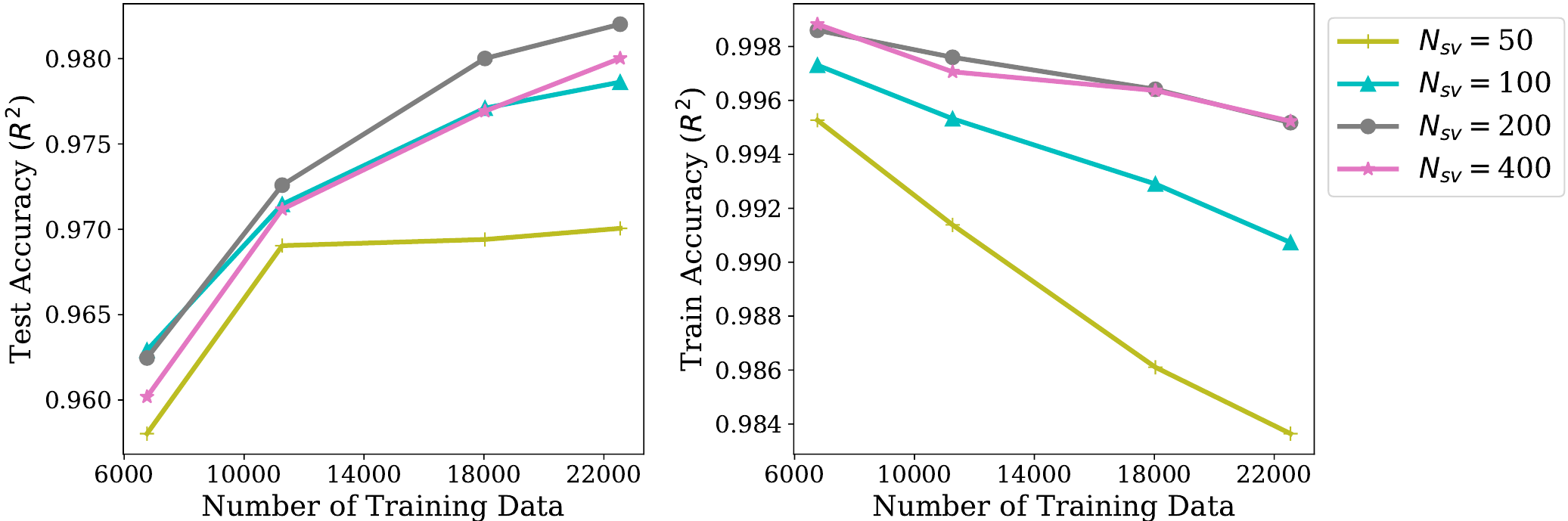}}
\vspace{3pt}
\subfloat[MNIST: 60000 training data, 784 features]{\includegraphics[width=0.83\textwidth]{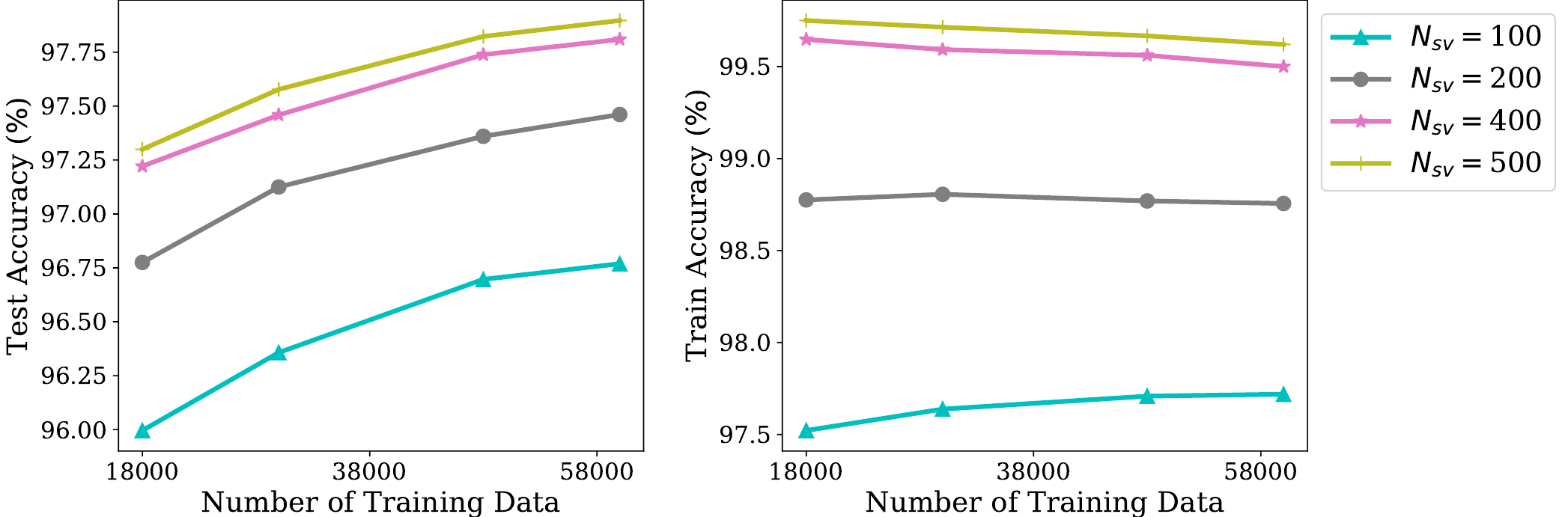}}
\vspace{3pt}
\subfloat[FashionMNIST: 60000 training data, 784 features]{\includegraphics[width=0.83\textwidth]{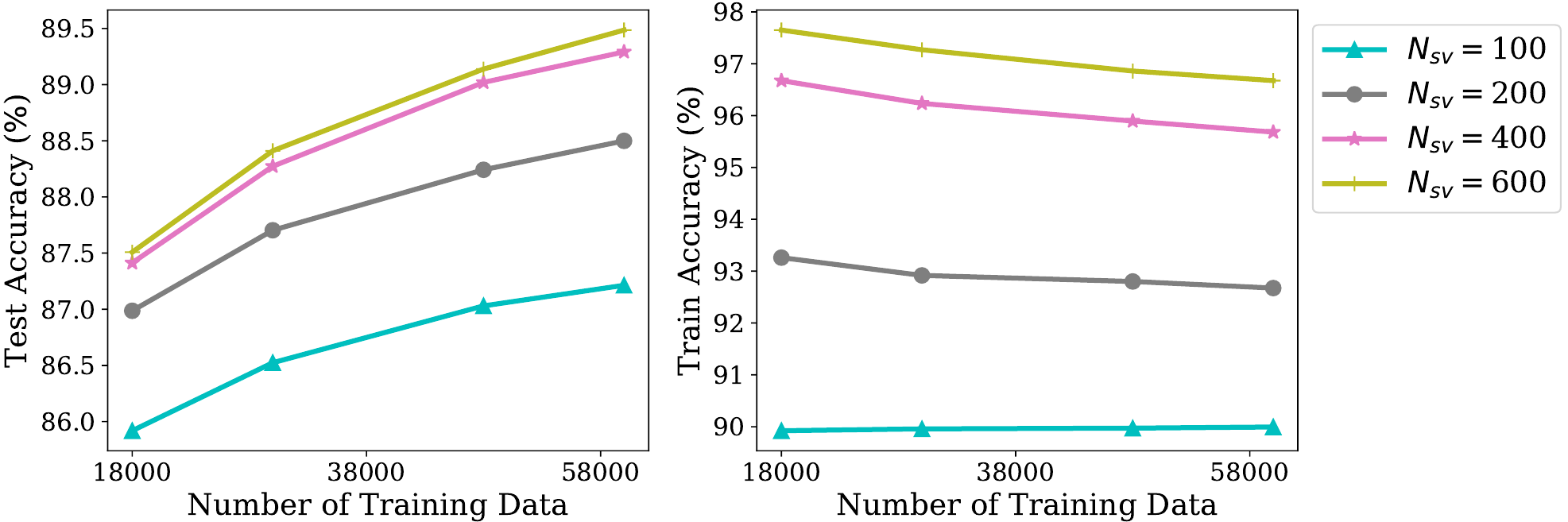}}
\caption{Effect of the number of training data on the performance of Algorithm~\ref{alg: AKL}. Two regression and two classification real datasets are used. Results of models with different support data number is shown in different color.}
\label{fig: exp1}
\end{center}
\end{figure}

Table~\ref{tab: exp-1} additionally reports the number of support vectors of kernel methods, which enables a more intuitive understanding of the sizes of decision models. Specifically, it provides the maximal support data number in Algorithm~\ref{alg: AKL} for LAB RBF, and the predefined number of centers for Falkon, and the average number of support vectors for SVR methods. 
It should be noted that KRR uses all training data as support data, which results in a much larger complexity of the decision model compared to other kernel methods. 
This observation further underscores the advantage of enhancing kernel flexibility and learning kernels, as demonstrated by the compact size of the decision model achieved with our proposed LAB RBF kernel.

Traditional kernel-based algorithms are inefficient on large-scale datasets due to the matrix inverse operation on the large kernel matrix.
Consequently, we compare our algorithm with three advanced kernel methods and two neural-network-based methods.
The results are presented in Table~\ref{tab: exp-2}, which more prominently underscores the capability of LAB RBF kernels in effectively reducing the required number of support data. Furthermore, it achieves a comparable level of regression accuracy to other advanced choices designed for such datasets. Notably, these advanced methods exhibit substantial model sizes. For instance, ResNet has a substantial number of parameters, and RFMs utilize all of the training data as support data. Although Falkon and EigenPro3.0 are based on the Nystr\"om method, their reliance on symmetric kernel functions forces them to use a large amount of training data as support data to achieve high accuracy. In contrast, LAB RBF kernels maintain a comparatively low number of support data, attributed to the high flexibility provided by locally adaptive bandwidths and the kernel learning algorithm.

\section{Related Works}
\label{sec: Link}
\textbf{Kernel ridgeless regression.}
The focus of this paper is on kernel ridgeless regression \citep{liang2020just}, a kernel-based interpolation model that helps the understanding of benign overfitting phenomenon and over-parameterized models. Due to its solid theoretical foundation and straightforward algorithm, kernel ridgeless regression has continued to be widely studied in the machine learning community. 
Recent advancements have confirmed the phenomenon of benign overfitting, particularly in high-dimensional regimes \citep{hastie2022surprises,mei2022generalization}. However, it is worth noting that such results often rely on the assumption of input dimensions tending towards infinity, a condition not always reflective of real-world datasets and target functions. Contrarily, in scenarios with lower dimensions \citep{buchholz2022kernel} or fixed-dimensional setups \citep{beaglehole2023inconsistency}, interpolating kernel machines do not exhibit the benign overfitting phenomenon for commonly used kernels like Gaussian, Laplace, and Cauchy kernels.

\textbf{RBF kernels with diverse bandwidths.}
RBF kernels that allow for different bandwidths in local regions have been investigated for a long time in the fields of kernel regression and kernel density estimation, e.g. \cite{Abramson1982OnBV, Brockmann1993LocallyAB, Zheng2013AdaptivelyWK}.
These pioneer works have focused on the selection of optimal bandwidths from data and have demonstrated that locally adaptive bandwidth estimators perform better than global bandwidth estimators in both theory and simulation studies. However, due to limited computing power and problem settings, these works have primarily analyzed one-dimensional algorithms and have not considered the generalization ability. In the field of machine learning, works like \cite{steinwart2016learning, hang2021optimal,radhakrishnan2022feature} propose the use of feature-adaptive bandwidths and theoretically validate the improvement of flexible bandwidths. Successful experimental attempts  have been achieved by directly applying asymmetric kernel functions \citep{moreno2003kullback, koide2006asymmetric} or by incorporating them into existing kernel-based learning models along with asymmetric metric learning \citep{wu2010asymmetric, pintea2018asymmetric}. However, many of these studies lack a robust theoretical explanation, leaving the meaning of corresponding models and hypothesis space (no longer RKHS) still unknown. In this paper, for the first time, we demonstrate that LAB RBF kernels are actually involved with the integral space of RKHSs.

\begin{sidewaystable}[htbp]\tiny
\caption{Mean and standard of $R^2\;(\uparrow)$  of different regression methods on real datasets. The number of support vectors of different kernel-based methods are presented in blanket. The best and second-best results are indicated in bold and italic, respectively.}
\label{tab: exp-1}
\centering
\begin{threeparttable}[b]
\begin{tabular}{c|c|c|c|c|c|c}
\toprule
{\multirow{2}{*}{Dataset}}    & Tecator     & Yacht       & Airfoil   & SML     & Parkinson       & Comp-active\tnote{a}   \\\cline{2-7}
&{N=240,M=122} & {N=308,M=6}  & {N=1503,M=5}  & {N=4137,M=22} & {N=5875,M=20}  & {N=8192,M=21}  \\
\hline
RBF KRR   & 0.9586$\pm$0.0071 (192) &   0.9889$\pm$0.0025 (247) & 0.8634$\pm$0.0248 (1203) &  0.9779$\pm$0.0013 (3310) & 0.8919$\pm$0.0091 (4700) & 0.9822 (6554)\\\hline
TL1 KRR   &  0.9670$\pm$0.0113 (192) & 0.9705$\pm$0.0033 (247) & 0.9464$\pm$0.0065 (1203) &  0.9947$\pm$0.0005 (3310) & 0.9475$\pm$0.0034 (4700) & 0.9801 (6554) \\\hline
R-SVR-MKL    & 0.9711$\pm$0.0212 (174.2) &  0.9945$\pm$0.0008 (224.7) &  0.9201$\pm$0.01099 (953.1) & 0.9959$\pm$0.0006 (2844)  &  0.9032$\pm$0.0122 (4047) &   0.9834 (1397) \\\hline
{SVR-MKL}    & 0.9698$\pm$0.0157 (160.7)    &  0.9957$\pm$0.0022  (144.5) &   0.9535$\pm$0.0042 (1035) & 0.9970$\pm$ 0.0006 (1424) &  0.9011$\pm$0.0110 (3759)
  &  0.9829 (1423) \\\hline
EigenPro3.0    & 0.9758$\pm$0.0029 (192) & {0.9944$\pm$0.0036} (247) & 0.9262$\pm$0.0166 (1203) & 0.9934$\pm$0.0009 (3310) & 0.9260$\pm$0.0079 (4700) & 0.9830 (6554) \\\hline
RFMs    & 0.9811$\pm$0.0078 (192)& {0.9947$\pm$0.0018} (247) & 0.9394
$\pm$0.0079  (1203) & 0.9960$\pm$0.0007 (3310)  & \textit{0.9988$\pm$0.0004 (4700)} & \textbf{0.9852 (6554)}\\\hline
Falkon    & {0.9769$\pm$0.0086} (100) & \textbf{0.9982$\pm$0.0024 (200)} & 0.9377
$\pm$0.0067 (900)   & 0.9960$\pm$0.0007 (2000) & 0.9492$\pm$0.0063 (4000)& 0.9808 (1500)\\\hline
ResNet    & \textit{0.9841$\pm$0.0067}   & 0.9940$\pm$0.0003   &   \textit{0.9538$\pm$0.0066}  &  \textit{0.9976$\pm$0.0004}   &  0.9906$\pm$0.0048 &  \textit{0.9836 } \\\hline
WNN    & \textbf{0.9875$\pm$0.0044} & 0.9924$\pm$0.0025  & 0.9128$\pm$0.0089   & 0.9926$\pm$0.0008  & 0.9139$\pm$0.0055 & 0.9817  \\\hline
{LAB RBF}  &  0.9782$\pm$0.0151 (76) &   \textit{0.9980$\pm$0.0009 (73)}  &   \textbf{0.9649$\pm$0.0091 (800)} & \textbf{0.9985$\pm$0.00007 (400)}  & \textbf{ 0.9990$\pm$0.0005} (400) &  0.9835 (70)           \\
\bottomrule
\end{tabular}
    \begin{tablenotes}
        \item [a] The test set of Comp-activ is pre-given. 
        \item  Notations N, M denote the data number and the feature dimension, respectively.
    \end{tablenotes}
\end{threeparttable}

\caption{Mean and standard of $R^2(\uparrow)$ of different algorithms in large-scale datasets. The number of support vectors of different kernel-based methods are presented in blanket. The best and second-best results are indicated in bold and italic, respectively.}
\label{tab: exp-2} 
\begin{threeparttable}
\begin{tabular}{c|c|c|c}
\toprule
{\multirow{2}{*}{Dataset}}& Electrical        & KC House        & TomsHardware                                \\ \cline{2-4} 
& N=10000,M= 11                                 & N=21623,M=14                                 & N=28179,M=96                                \\ 
                         \hline
{WNN}     & 0.9617$\pm$0.0027                           & 0.8501$\pm$0.0194                           & 0.9248$\pm$0.0303                           \\ \hline
{ResNet}     & \textbf{0.9705$\pm$0.0025} & 0.8823$\pm$0.0117                          & \textit{ 0.9697$\pm$0.0021  }                         \\ \hline
{EigenPro3.0 }   &                     0.9513$\pm$0.0024 (8000)
	&0.8636$\pm$0.0119 (17291)
&	0.9436$\pm$0.0282 (20000)
\\  \hline
{RFMs}    & 0.9582$\pm$0.0028   (8000)                 & \textit{0.9008$\pm$0.0072  (17291)}           & 0.9115$\pm$0.0031  (22544)      \\ \hline
{Falkon}        &     0.9532$\pm$0.0025    (3000)                      & 0.8640$\pm$0.0145  (5000)                       & 0.9001$\pm$0.0143 (5000)                         \\  \hline
{LAB RBF }  & \textit{0.9654$\pm$0.0034  (300})                   & \textbf{0.9103$\pm$0.0061(550)} & \textbf{0.9809$\pm$0.0028 (500)}\\  
\bottomrule
\end{tabular}    
\end{threeparttable}
\end{sidewaystable}

\textbf{Asymmetric kernel-based learning.}
Existing research in asymmetric kernel learning has primarily proposed frameworks based on SVD \citep{suykens2016svd} and least square SVM \citep{he2023learning}.
However, for regression tasks, current works \cite{mackenzie2004asymmetric, pintea2018asymmetric} directly incorporate asymmetric kernels into symmetric-kernel-based learning models, lacking interpretability. Additionally, other works primarily focus on interpreting associated optimization models \citep{wu2010asymmetric,lin2022reproducing}, where the corresponding functional space is regarded as a Reproducible Kernel Banach Space. This, however, is not currently applicable to LAB RBF kernels, as their reproducible property remains undetermined.
Despite notable progress in theory, current applications of asymmetric kernel matrices often rely on datasets (e.g. the directed graph in \cite{he2023learning}) or recognized asymmetric similarity measures (e.g. the Kullback-Leibler kernels in \cite{moreno2003kullback}) 
This yields improved performance in specific scenarios but leaving a significant gap in addressing diverse datasets.
With the help of trainable LAB RBF kernels, this paper proposes a robust groundwork for utilizing asymmetric kernels in tackling general regression tasks.

\section{Conclusion}\label{Chapt:conclusion}
In this paper, we enhance the kernel ridgeless regression with trainable LAB RBF kernels and investigated it from the approximation theory viewpoint. The LAB RBF kernel is highly flexible due to its over-parameterized form, where the bandwidths are data-adaptive and can vary depending on the size of the training data. While the 1-dimensional case of the LAB RBF kernel has been previously studied in statistics, its high-dimensional case and application in machine learning had not been explored. 
We presented an iterative learning algorithm based on a ridgeless model to determine the bandwidths of the LAB RBF kernel, with controllable support vectors and applicable gradient methods for training the bandwidths.
Experimental results on real regression datasets show that our algorithm achieves state-of-the-art accuracy.
This demonstrates the benefits of increasing kernel flexibility, and verifies the effectiveness of our proposing learning algorithm.

To investigate the source of the generalization ability in the proposed model without explicit regularization, we introduced the $\ell_0$-regularized model in the integral space of RKHSs. The optimal function of this model is equivalent to the interpolation function derived by a well-learned LAB RBF kernel.  Through the analysis of this model, we gained insights into the advantages of kernel ridgeless regression with LAB RBF kernels. Our theoretical analysis was based on the standard error decomposition technique, where we utilized the latest results on $\ell_q$-regularization models, conclusions on Rademacher chaos complexity of Gaussian kernels,  and a refined iteration technique for $\ell_0$-regularization. We demonstrated that at the optimal point of our proposed $\ell_0$-regularized model, the integral space of RKHSs reduces to a sum space of RKHSs, enabling us to bound the sample error in a complex space.

Our analysis revealed that the excellent representation ability of the proposed model is due to the large hypothesis spaces introduced by  LAB RBF kernels, i.e., the integral space of RKHSs, which enables our algorithm to interpolate the training dataset with only a few support vectors. Meanwhile, the natural sparsity of LAB RBF kernels, controlled by the number of support vectors, guarantees their good generalization ability, as seen from the analysis of the sample error. The number of support vectors plays a crucial role in the trade-off between the approximation ability in the training data and the generalization ability in the test data, which is also validated by our experimental results.

Considering the fundamental role of non-Mercer and asymmetric kernels in modern deep learning architectures like transformers \citep{wright2021transformers,chen2024primal}, we hope our analysis of kernel ridgeless regression and LAB RBF kernels will inspire further research on asymmetric kernel learning and the integral space of RKHSs in machine learning.


\acks{The author would like to thank Mingzhen He for his insightful suggestions on this work. 
The research leading to these results received funding from the European Research Council under the European Union's Horizon 2020 research and innovation program / ERC Advanced
Grant E-DUALITY (787960). This paper reflects only the authors' views and the Union is not liable for any use that may be made of the contained information.
This work was also supported in part by Research Council KU Leuven: iBOF/23/064; Flemish Government (AI Research Program). Johan Suykens is also affiliated with the KU Leuven Leuven.AI institute.
Additionally, this work received partial support from the National Natural Science Foundation of China under Grants 62376155 and 12171093, as well as from the Shanghai Science and Technology Program under Grants 22511105600, 20JC1412700, and 21JC1400600. Further support was obtained from the Shanghai Municipal Science and Technology Major Project under Grant 2021SHZDZX0102. 
Xiaolin Huang and Lei Shi are the corresponding authors.}


\newpage

\appendix
\addcontentsline{toc}{section}{Appendices}
\renewcommand{\thesection}{\Alph{section}}

\section{Proof of Theorem~\ref{the: solution}} \label{apdx: 0}
Here we present the proof of Theorem~\ref{the: solution}.
\begin{proof}
    Based on Equation~(\ref{equ: eq}),    
    we can express $\vw^*$ as:
    \begin{equation}\label{equ: w*}
    \begin{aligned}        
        \vw^* &= \psi(\vX)(\phi^\top(\vX)\psi(\vX)+\lambda \vI_N)^{-1}\vY 
        &\overset{(a)}= (\lambda \vI_F + \psi(\vX)\phi^\top(\vX))^{-1} \psi(\vX)\vY
    \end{aligned}
    \end{equation}
    where equation (a) is derived from (\ref{equ: eq}) with $\vA=\vI_F, \; \vB=\psi(\vX),\; \vC =\phi^\top(\vX),\; \vD=\vI_N$.
    Similarly, for $\vv^*$, we have
    \begin{equation}\label{equ: v*}
        \begin{aligned}
            \vv^* &= \phi(\vX)(\psi^\top(\vX)\phi(\vX)+\lambda \vI_N)^{-1}\vY
            &\overset{(b)}=(\lambda \vI_F + \phi(\vX)\psi^\top(\vX))^{-1} \phi(\vX)\vY,
        \end{aligned}
    \end{equation}
    where equation (b) again applies (\ref{equ: eq}) with $\vA=\vI_F, \; \vB=\phi(\vX),\; \vC =\psi^\top(\vX),\; \vD=\vI_N$.
    Take the derivation of the objective function with respect to $\vw$ and $\vv$ at point $(\vw^*,\vv^*)$, we observe:
    \begin{equation*}
        \begin{aligned}
            \frac{\partial L}{\partial \vw}|_{\vw= \vw^*\atop \vv=\vv^*} &=(\lambda \vI_F + \phi(\vX)\psi^\top(\vX))(\lambda \vI_F + \phi(\vX)\psi^\top(\vX))^{-1} \phi(\vX)\vY -\phi(\vX) \vY =0,\\
            \frac{\partial L}{\partial \vv}|_{\vw= \vw^*\atop \vv=\vv^*}
            &=(\lambda \vI_F + \psi(\vX)\phi^\top(\vX))(\lambda \vI_F + \psi(\vX)\phi^\top(\vX))^{-1} \psi(\vX)\vY -\psi(\vX) \vY=0.
        \end{aligned}
    \end{equation*}
    This verifies that the point $(\vw^*,\vv^*)$ satisfies the stationarity condition.
\end{proof}

\section{Alternative derivation of Asymmetric KRR and Function Explanation}\label{apdx: f explanation}
We can also derive a similar result in Theorem~\ref{the: solution} in a LS-SVM-like approach \citep{suykens1999least}, from which we can better understand the relationship between the two regression functions.
By introducing error variables $e_i = y_i - \phi(\vx_i)^\top\vw$ and $r_i = y_i - \psi(\vx_i)^\top\vv$, the last term in (\ref{equ: asy krr}) equals to $\sum_i e_ir_i$.
According to this result, we have the following optimization:
\begin{equation}\label{equ: ls akrr}
    \begin{aligned}        \min_{\vw,\vv,\ve,\vr} \;\;&\lambda \vw^\top\vv + \sum_{i=1}^N e_i r_i\\
    \mathrm{s.t.}\;\;& r_i = y_i - \psi(\vx_i)^\top\vv,\quad \forall i = 1,2,\cdots, N,\\
    &e_i = y_i - \phi(\vx_i)^\top\vw,\quad \forall i = 1,2,\cdots, N.
    \end{aligned}
\end{equation}
From the Karush-Kuhn-Tucker (KKT) conditions \citep{boyd2004convex}, we can obtain the following result on the KKT points.
\begin{theorem}\label{the: solution2}
Let $\valpha = [\alpha_1,\cdots,\alpha_N]^\top \in\mathbb{R}^N$ and $\vbeta= [\beta_1,\cdots,\beta_N]^\top \in\mathbb{R}^N$ be Lagrange multipliers of constraints  $r_i = y_i - \psi(\vx_i)^\top\vv$ and $e_i = y_i - \phi(\vx_i)^\top\vw, \forall i=1,\cdots,N$, respectively. Then one of the KKT points of (\ref{equ: ls akrr}) is
\begin{align*}
    &\vw^* = \frac{1}{\lambda}\psi(\vX)\valpha^* ,   \quad\quad 
    \vv^* = \frac{1}{\lambda}\phi(\vX)\vbeta^*, \\
    &\ve^*=\valpha^*= \lambda(\phi^\top(\vX)\psi(\vX)+\lambda \vI_N)^{-1}\vY,\\
    &\vr^*=\vbeta^* = \lambda(\psi^\top(\vX)\phi(\vX)+\lambda \vI_N)^{-1}\vY.    
\end{align*}
\end{theorem}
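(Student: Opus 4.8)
The plan is to treat (\ref{equ: ls akrr}) as an equality-constrained problem and write down its KKT stationarity system, then verify that the stated point solves that system. First I would introduce the Lagrangian
\begin{equation*}
    L(\vw,\vv,\ve,\vr,\valpha,\vbeta) = \lambda\vw^\top\vv + \sum_{i=1}^N e_i r_i - \sum_{i=1}^N \alpha_i\bigl(r_i - y_i + \psi(\vx_i)^\top\vv\bigr) - \sum_{i=1}^N \beta_i\bigl(e_i - y_i + \phi(\vx_i)^\top\vw\bigr),
\end{equation*}
choosing the sign convention on the multiplier terms so that the derived expressions carry exactly the signs asserted in the theorem.

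Next I would compute the stationarity conditions with respect to the primal variables. Differentiating in $\vw$ and $\vv$ yields $\lambda\vv = \phi(\vX)\vbeta$ and $\lambda\vw = \psi(\vX)\valpha$, which give the representations $\vw^* = \frac{1}{\lambda}\psi(\vX)\valpha^*$ and $\vv^* = \frac{1}{\lambda}\phi(\vX)\vbeta^*$. Differentiating in the slack variables $e_i$ and $r_i$ gives the pointwise identities $r_i = \beta_i$ and $e_i = \alpha_i$, i.e. $\vr^* = \vbeta^*$ and $\ve^* = \valpha^*$. These four relations already reproduce the first two lines of the claimed solution and reduce the problem to determining $\valpha^*$ and $\vbeta^*$.

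To close the system I would impose primal feasibility. Substituting $\vw^* = \frac{1}{\lambda}\psi(\vX)\valpha^*$ and $\ve^* = \valpha^*$ into the constraint $e_i = y_i - \phi(\vx_i)^\top\vw$ and collecting terms in matrix form gives $\bigl(\vI_N + \frac{1}{\lambda}\phi^\top(\vX)\psi(\vX)\bigr)\valpha^* = \vY$, which I would solve as $\valpha^* = \lambda(\phi^\top(\vX)\psi(\vX)+\lambda\vI_N)^{-1}\vY$; the mirror-image substitution into the $r_i$ constraint yields $\vbeta^* = \lambda(\psi^\top(\vX)\phi(\vX)+\lambda\vI_N)^{-1}\vY$. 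This matches the stated formulas and is consistent with Theorem~\ref{the: solution}, since then $\vw^* = \psi(\vX)(\phi^\top(\vX)\psi(\vX)+\lambda\vI_N)^{-1}\vY$.

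The computation itself is routine, so I do not expect a genuine obstacle; the two points demanding care are the Lagrangian sign convention (needed to reproduce the theorem's signs) and the invertibility of the \emph{asymmetric} matrix $\phi^\top(\vX)\psi(\vX)+\lambda\vI_N$. Unlike the symmetric case this matrix need not be positive definite, but it is invertible for every $\lambda>0$ outside the finitely many values for which $-\lambda$ is an eigenvalue of $\phi^\top(\vX)\psi(\vX)$, so the inverse is well defined for generic $\lambda$. I would finally emphasize that, because the objective is bilinear and hence non-convex, these KKT conditions are only necessary; the theorem asserts the existence of \emph{one} such KKT point rather than global optimality, exactly paralleling the stationary-point statement of Theorem~\ref{the: solution}.
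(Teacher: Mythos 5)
Your proposal is correct and follows essentially the same route as the paper's proof in Appendix C: form the Lagrangian (your sign convention is algebraically identical to the paper's), derive stationarity in $\vw,\vv,\ve,\vr$ to get $\vw=\frac{1}{\lambda}\psi(\vX)\valpha$, $\vv=\frac{1}{\lambda}\phi(\vX)\vbeta$, $\ve=\valpha$, $\vr=\vbeta$, and then eliminate the primal variables via the feasibility constraints to obtain the two linear systems for $\valpha^*$ and $\vbeta^*$. Your added remarks on invertibility of the asymmetric matrix $\phi^\top(\vX)\psi(\vX)+\lambda\vI_N$ and on the KKT conditions being merely necessary are sound refinements that the paper leaves implicit, but they do not alter the argument.
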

The proof is presented in Appendix~\ref{apdx: 1}.
This model shares a close relationship with existing models. For instance, by modifying the regularization term from $\vw^\top\vv$ to $\vw^\top\vw+\vv^\top\vv$ and flipping the sign of $\sum_{i=1}^N e_i r_i$, we arrive at the kernel partial least squares model as outlined in \cite{hoegaerts2004primal}. In the specific case where $\psi=\phi$, its KKT conditions align with those of the LS-SVM setting for ridge regression \citep{Saunders1998RidgeRL,Suykens2002LeastSS}. Furthermore, under the same condition of $\psi=\phi$ and when the regularization parameter is set to zero, it reduces to ordinary least squares regression \citep{hoegaerts2005subset}.

With the aid of error variables $\ve$ and $\vr$, a clearer perspective on the relationship between $f_1$ and $f_2$ emerges. As clarified in Theorem~\ref{the: solution2}, the approximation error on training data is equal to the value of dual variables, a computation facilitated through the kernel trick. Consequently, this reveals that $f_1$ and $f_2$ typically diverge when $\phi$ and $\psi$ are not equal, as they exhibit distinct approximation errors.
A complementary geometric insight arises from the term $\sum_{i=1}^N e_i r_i$ within the objective function. This signifies that, in practice, $f_1$ and $f_2$ tend to approach the target $y$ from opposite directions because the signs in their approximation errors tend to be dissimilar. For practical applications, one may opt for the regression function with the smaller approximation error.

\section{Proof of Theorem~\ref{the: solution2}} \label{apdx: 1}
Here we present the proof of Theorem~\ref{the: solution2}.
\begin{proof}

The Lagrangian of (\ref{equ: ls akrr}) is 
\begin{equation}
   \mathcal{L} = \lambda \vw^\top\vv + \sum_{i=1}^N e_i r_i + \sum_i \beta_i(y_i - e_i -\phi(\vx_i)^\top\vw) + \sum_i \alpha_i(y_i - r_i -\psi(\vx_i)^\top\vv),
\end{equation}
where $\valpha\in\mathbb{R}^N$ and $\vbeta\in\mathbb{R}^N$ are Lagrange multipliers.
The KKT conditions lead to
\begin{align*}
    &\frac{\partial \mathcal{L}}{\partial \vv} = \lambda \vw -\psi(\vX)\valpha = 0 && \Longrightarrow  \vw = \frac{1}{\lambda}\psi(\vX)\valpha,\\
    &\frac{\partial \mathcal{L}}{\partial \vw} = \lambda \vv -\phi(\vX)\vbeta = 0 && \Longrightarrow  \vv = \frac{1}{\lambda}\phi(\vX)\vbeta,\\
    &\frac{\partial \mathcal{L}}{\partial r_i} =e_i - \alpha_i = 0 && \Longrightarrow  e_i = \alpha_i,\\
    & \frac{\partial \mathcal{L}}{\partial e_i}= r_i - \beta_i = 0 && \Longrightarrow  r_i = \beta_i,\\
    &\frac{\partial \mathcal{L}}{\partial \beta_i} =y_i - e_i -\phi(\vx_i)^\top\vw = 0 && \Longrightarrow  e_i = y_i -\phi(\vx_i)^\top\vw,\\
    & \frac{\partial \mathcal{L}}{\partial \alpha_i}= y_i - r_i -\psi(\vx_i)^\top\vv = 0 && \Longrightarrow  r_i = y_i - \psi(\vx_i)^\top\vv.\\
\end{align*}
Substitute the first four lines into the last two lines, we can eliminate primal variables $\vw,\vv, \ve,\vr$:
\begin{align*}    
    &\valpha^* = \vY - \frac{1}{\lambda}\phi(\vX)^\top\psi(\vX)\valpha^*
    && \Longrightarrow
    \valpha^* = \lambda(\lambda \vI_N + \phi(\vX)^\top\psi(\vX))^{-1}\vY,\\
    &\vbeta^* = \vY - \frac{1}{\lambda}\psi(\vX)^\top\phi(\vX)\vbeta^*
    && \Longrightarrow
    \vbeta^* = \lambda(\lambda \vI_N + \psi(\vX)^\top\phi(\vX))^{-1}\vY.
\end{align*}
    Thus, we get the result in Theorem~\ref{the: solution2} and the proof is completed.
\end{proof}

\section{Proof of Proposition~\ref{lem: alg err}}\label{apdx: 5}
Here we present the proof of Proposition~\ref{lem: alg err}.

\begin{proof}
The proof is achieved by constructing a equivalent constrained version of optimization (\ref{equ: unconstrained}).
\begin{equation}\label{equ: constrained}
    \begin{aligned}     &f_\vz = \arg\min_{\substack{f\in\mathcal{H}_\Omega\\ \{\vtheta_i\}\subset\Omega}}
    \mathcal{E}_\vz({f}) \quad \;
    \mathrm{s.t.} \;\;\mathcal{R}_0(f) = N_{sv},
    \;\;{\mu}(\vsigma) = \sum_{i=1}^{N_{sv}} \delta(\vsigma - \theta_i).
    \end{aligned}
\end{equation}
We firstly show that  $f_{\mathcal{Z}_{sv},\vTheta}$ belongs to this integral space.
Recall that  $f_{\mathcal{Z}_{sv},\vTheta}$ has an analytical formulation as presented in (\ref{equ: lab-function}).
Let $\vTheta=\{\vtheta_i\}_{i=1}^{N_{sv}}$ denotes the bandwidth set of these support data.
Then there exists a interval $\Omega \subset \mathbb{R}^M_+$ satisfying that $\theta_i\in\Omega,\forall i$ as $\vTheta$ is a discrete set.
Without loss of generality, we assume that $\|\valpha\|_2 <\infty$.
Then define
\begin{equation}\label{equ: lab-function}
    \begin{aligned}
    &\tilde{f}_{\theta_i}(\vt) \triangleq \alpha_i\exp\{-\|\theta_i\odot(\vt-\vx_i)\|_2^2\}
    =\alpha_i \mathcal{K}_{\vtheta_i}(\vt,\vx_i)
    \end{aligned}
\end{equation}
and $\tilde{\mu}(\vsigma) = \sum_{i=1}^{N_{sv}} \delta(\vsigma - \theta_i)$, where $\delta(\cdot)$ denotes the Dirac delta function. 
Under this definition we have $f_{\mathcal{Z}_{sv},\vTheta}(\vx) = \int_\vsigma \tilde{f}_\vsigma(\vx) d\tilde{\mu}(\vsigma)$ and 
$$\int_{\vsigma\in\Omega}\|\tilde{f}_\vsigma\|_{\mathcal{H}_\vsigma} d\tilde{\mu}(\vsigma) 
=  \sum_{i=1}^{N_{sv}} \|\tilde{f}_{\theta_i}\|_{\mathcal{H}_{\theta_i}}
\leq \|{\valpha}\|_2 <\infty,$$
which indicates that $f_{\mathcal{Z}_{sv},\vTheta}\in\mathcal{H}_{\Omega}.$

The formulation in (\ref{equ: lab-function}) means that for every kernel $\mathcal{K}_{\vtheta_i}$, only one coefficient are non-zero.
Then recall the definition in (\ref{equ: def R0}), we have $\mathcal{R}_0(f_{\mathcal{Z}_{sv},\vTheta}) = N_{sv}$, indicating that $f_{\mathcal{Z}_{sv},\vTheta}$ is a feasible solution of problem (\ref{equ: constrained}).
recall the stopping condition in the dynamic strategy (\ref{equ: strategy}), we have 
$0<\mathcal{E}_\vz(f_{\mathcal{Z}_{sv},\vTheta})-\mathcal{E}_\vz({f}_{\vz})\leq B.$
Finally, as all constrain in (\ref{equ: constrained}) are equalities, we can always find a suitable $\lambda>0$ such that optimization (\ref{equ: unconstrained}) shares the same optimizer as that of (\ref{equ: constrained}).
That is, $f_{\vz} = f_{\vz,\lambda}$. Then, we obtain the desired conclusion and complete the proof.

\end{proof}
\section{Experiment Details}\label{apd: exp}
The used datasets can be download from:
\begin{itemize}\setlength\itemsep{0em}
    \item 
    \textbf{Tecator}: \url{http://lib.stat.cmu.edu/datasets/tecator}.
    \item 
    \textbf{Yacht}: \url{https://archive.ics.uci.edu/dataset/243/yacht+hydrodynamics}.
    \item 
    \textbf{Airfoil}: \url{https://archive.ics.uci.edu/dataset/291/airfoil+self+noise}.
    \item 
    \textbf{SML}: \url{https://archive.ics.uci.edu/dataset/274/sml2010}.
    \item 
    \textbf{Parkinson}: \url{https://archive.ics.uci.edu/dataset/189/parkinsons+telemonitoring}.
    \item 
    \textbf{Comp-activ}:  \url{https://www.cs.toronto.edu/~delve/data/comp-activ/desc.html}.
    \item 
    \textbf{TomsHardware}: \url{https://archive.ics.uci.edu/dataset/248/buzz+in+social+media}.
    \item 
    \textbf{KC House}: \url{https://www.kaggle.com/datasets/shivachandel/kc-house-data}.
    \item 
    \textbf{Electrical}:  \url{https://archive.ics.uci.edu/dataset/471/electrical+grid+stability+simulated+data}.
    \item 
    \textbf{MNIST}:\url{http://yann.lecun.com/exdb/mnist/}
    \item 
    \textbf{Fashion-MNIST}: \url{https://github.com/zalandoresearch/fashion-mnist}
\end{itemize}

\section{Details of Compared methods and hyper-parameter setting. } \label{apdx: 3}
\textbf{Compared methods:} nine regression methods are compared in this experiment, including:  
\begin{itemize}\setlength\itemsep{0em}
    \item RBF KRR \citep{vovk2013kernel}: classical kernel ridge regression with conventional RBF kernels, served as the baseline.
    \item TL1 KRR: classical kernel ridge regression employing an indefinite kernel named Truncated $\ell_1$ kernel \citep{Tl12018Huang}. The expression of TL1 kernel is 
    $     \mathcal{K}(\vx,\vx')=\max\{\rho - \|\vx-\vx'\|_1, 0\},    $
    where $\rho>0$ is a pre-given hyper-parameter. The TL1 kernel is a piecewise linear indefinite kernel and is expected to be more flexible and have better performance than the conventional RBF kernel.
    \item SVR-MKL: Multiple kernel learning applied on support vector regression. The kernel dictionary includes RBF kernels, Laplace kernels, and polynomial kernels. Results for R-SVR-MKL with only RBF kernels are also provided. The implementation of MKL is available in the Python package MKLpy \citep{EasyMKL,lauriola2020mklpy}.
    \item Falkon \citep{rudi2017falkon, meanti2022efficient}: An advanced and well-developed algorithm for KRR that employs hyper-parameter tuning techniques to enhance accuracy and utilizes Nyström approximation to reduce the number of support data points, enabling it to handle large-scale datasets. We used the public code of Falkon, available at \url{https://github.com/FalkonML/falkon}.
    \item EigenPro3.0 \citep{Eigenpro}: An advanced general kernel machine for large datasets, utilizing Nyst\"om methods and projected dual preconditioned SGD. We used the public code of EigenPro3.0, available at \url{https://github.com/EigenPro/EigenPro3}.
    \item RFMs \citep{radhakrishnan2022feature}: Recursive feature machines is advanced kernel methods which utilizes the mechanism of deep feature learning, resulting high efficient algorithms and ability to handle large datasets. We used the public code of RFMs, available at \url{https://github.com/aradha/recursive_feature_machines}.
    \item ResNet: The regression version of ResNet follows the structure in \cite{chen2020deep}, and the code is available in \url{https://github.com/DowellChan/ResNetRegression}.
    \item WNN: The regression version of a wide neural network, which is fully-connected and has only one hidden layer. 
\end{itemize}

\textbf{Implementation details.}
Among the compared methods, Kernel Ridge Regression (KRR) stands as the fundamental technique that combines the Tikhonov regularized model with the kernel trick. The coefficients of kernels for both SVR-MKL and R-SVR-MKL are calculated following the approach in EasyMKL \citep{EasyMKL}. For Falkon, the code is available at \url{https://github.com/FalkonML/falkon}. LAB RBF, ResNet, and WNN are optimized using gradient methods with varying hyper-parameters such as initial points, learning rate, and batch size. The initial weights of both ResNet and WNN are set according to the Kaiming initialization introduced in \cite{He2015DelvingDI}. In the subsequent experiments, the Adam optimizer is initially used, and upon stopping, the SGD optimizer is applied. Early stopping is implemented for the training of ResNet and WNN, where $10\%$ of the training data is sampled to form a validation set, and validation loss is assessed every epoch. The epoch with the best validation loss is selected for testing. Detailed hyper-parameters of all compared methods are provided in Table~\ref{tab: exp-setting} (for small-scale datasets) and Table~\ref{tab: exp-setting-2} (for large-scale datasets).

The regression version of ResNet follows the structure in \cite{chen2020deep}, which has available code in \url{https://github.com/DowellChan/ResNetRegression}.
Following the structures in \cite{chen2020deep}, the ResNet block has two types: Identity Block (where the dimension of input and output are the same) and Dense Block (where the dimension of input and output are different).
The details of these two block are presented in Figure~\ref{fig: resnet}.
Considering the different dataset sizes, we use two structures of ResNet in our experiments, denoted by ResNet and ResNetSmall.
For the ResNet, we use two Dense Blocks (M-W-100) and two Identity Block (100-100-100) and a linear predict layer (100-1).
For the ResNetSmall, we use two Dense Blocks (M-W-50)  and a linear predict layer (50-1).
Here W is a pre-given width for the network.

\begin{figure}
    \centering    \includegraphics[width=0.8\textwidth]{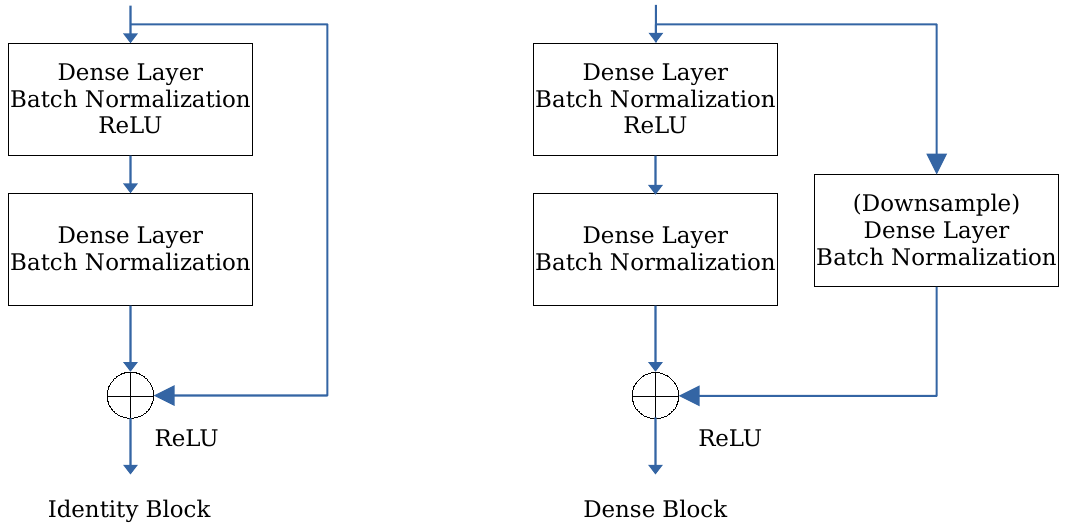}
    \caption{The structures of Identity block and Dense block.}
    \label{fig: resnet}
\end{figure}

\begin{table}[tbp]\scriptsize
    \caption{Hyper-parameters of eight regression methods for real datasets.}
    \label{tab: exp-setting}
    \centering
    \begin{threeparttable}[b]
    \begin{tabular}{c|c|c|c|c|c|c|c}
        \toprule
        &  Hyper-parameters & Tecator & Yacht & Airfoil & SML & Parkinson & Comp\_activ \\ \midrule
        \multirow{3}{*}{LAB RBF} & lr &   0.001 &  0.01  &  0.01  &  0.05   & 0.001 &    0.001 \\ \cline{2-8}
        & Batch size &   16  & 128&   128 &128 &128      &  128        \\ \cline{2-8}
        & $\sigma_0$ &  0.5  &   3  &   10  &  50 &     30   &   0.1     \\ \hline
\multirow{3}{*}{R-SVR-MKL}& C &   1000      &   1000    &     1    &   1000  &    10       &       1      \\ \cline{2-8}
        & $\epsilon$ &    0.001    &   0.001   &    0.01   &   0.001  &   0.01     &    0.01      \\ \cline{2-8} 
        &   Dictionary & \multicolumn{6}{|c} {RBF kernels: $[100,50,10,1,0.1,0.01,0.001]$}
          \\ \hline
\multirow{5}{*}{SVR-MKL}& C &   1000      &   1000    &     100    &   1000  &    1000      &       1000    \\ \cline{2-8}
        & $\epsilon$ &    0.01     &    0.01   &     0.01   &   0.01  &   0.01     &    0.01         \\ \cline{2-8} 
        &  \multirow{3}{*}{Dictionary} & \multicolumn{6}{|c} {RBF kernels: $[100,1,0.1,0.001]$} \\ \cline{3-8} 
        &   & \multicolumn{6}{|c} {Laplace kernels: $[100,1,0.1,0.001]$} \\ \cline{3-8} 
        &   & \multicolumn{6}{|c} {Polynomial kernels: $[1,2,4,10]$} \\ \hline     
\multirow{2}{*}{RBF KRR} & $\sigma$ &    1     &    5   &      80   &  5   &      20     &    10         \\ \cline{2-8}
        & $\lambda$ &     0.01    &  0.001     &   0.001   &   0.01  &   0.001    &    0.001         \\ \hline                                   
\multirow{2}{*}{TL1 KRR }& $\rho$&   98      &    6   &  2.5       &  22   &       14    &     15        \\ \cline{2-8}
        & $\lambda$ &   0.001      &  0.001    &  0.001       &   0.1  &      0.01     &    0.001   \\ \hline
\multirow{3}{*}{Falkon}   & $\lambda$ &      1e-6   &  1e-7     &    1e-6   &   1e-5  &        1e-7   &   1e-6     \\ \cline{2-8}
& Center\tnote{a} &     100    &  200     &    900    &   2000 &   4000    &     1500   \\ \cline{2-8}
& $\sigma$    &     10    &   1    &   2    &  1   &    0.7     &   2.5    \\ \hline
\multirow{2}{*}{EigenPro3.0}   & $\sigma$ &      3   &  1     &    0.5   &   1  &    0.5   &  10    \\ \cline{2-8}
& Center\tnote{a} &     197    &  247     &    1203    &   3310 &   4700    &     6554   \\ \hline
\multirow{3}{*}{ResNet}   & lr &      0.001   &  0.001     &    0.001   &   0.001  &        0.001   &   0.001      \\ \cline{2-8}
        & Batch size &     32    &  32     &    128    &    128&   128    &      128   \\ \cline{2-8}
        & Structure (Width)\tnote{b}    &     2(500)    &   2(500)    &   1(1000)    &  1(1000)   &     1(500)     &   1(2000)    \\ \hline
\multirow{3}{*}{WNN}   & lr &      0.001   &  0.001     &    0.001   &   0.001  &        0.001   &   0.001      \\ \cline{2-8}
        & Batch size &     32    &  32     &    128    &    128&   128    &      128   \\ \cline{2-8}
        & Width    &     800    &   500    &   6000    &  1500   &     3000     &   9000 \\\bottomrule
    \end{tabular}
    \begin{tablenotes}
    \item [a] The center number of Nystr\"om approximation.
        \item [b] Structure 1:$M-W-100-100-100-1$, Structure 2: $M-W-50-1$.
    \end{tablenotes}
\end{threeparttable}
\end{table}

\begin{table}[tbp]\small
    \caption{Hyper-parameters of four regression methods for real datasets.}
    \label{tab: exp-setting-2}
    \centering
    \begin{threeparttable}
    \begin{tabular}{c|c|c|c|c}
        \toprule
        &  Hyper-parameters & TomsHardware  & Electrical & KC House \\ \midrule
        \multirow{3}{*}{LAB RBF} & lr &   0.001 &   0.001  &   0.001  \\ \cline{2-5}
        & Batch size &   256  & 256&   256      \\ \cline{2-5}
        & $\sigma_0$ &  0.1 &  0.1  &   1     \\ \hline
        \multirow{3}{*}{Falkon}   & $\lambda$ &     1e-6  &  1e-6    &    1e-6    \\ \cline{2-5}
        & Center &     3000    &  3000     &    5000    \\ \cline{2-5}
        & $\sigma$    &    2   &   10    &  5  \\ \hline
        \multirow{2}{*}{EigenPro3.0}   & $\sigma$ &    7  &  1    &    5   \\ \cline{2-5}
        & Center &     20000    &  8000    &    17291   \\ \hline
        \multirow{3}{*}{ResNet}   & lr &      0.001   &  0.001     &    0.001     \\ \cline{2-5}
        & Batch size &     128    &  256     &    256    \\ \cline{2-5}
        & Structure (Width)\tnote{a}    &     1(3000)   &   1(2000)    &  1(2000)   \\ \hline
\multirow{3}{*}{WNN}   & lr &      0.01   &  0.01     &    0.01 \\ \cline{2-5}
        & Batch size &     128    &  256     &    128   \\ \cline{2-5}
        & Width    &     3000    &   3000    &   5000   \\\bottomrule
    \end{tabular}
        \begin{tablenotes}
        \item [a] Structure 1:$M-W-100-100-100-1$.
    \end{tablenotes}
\end{threeparttable}
\end{table}

\section{Study on different selection strategy of initial support data} \label{apdx: sv}

The selection of support data has the significant influence on the performance of the proposed algorithm. In light of this, we have introduced a dynamic strategy aimed at mitigating the impact of initial support data selection in the manuscript.

In this section, we delve deeper into the effects of various methods for selecting initial support data and assess the efficacy of the introduced dynamic strategy.
We will explore three different approaches to initial data selection: two rational methods (Y-based and X-based) and one irrational method (Extreme Y).
\begin{itemize}
    \item Y-based (utilized in the manuscript): data is sorted based on their labels, and support data is uniformly selected.
    \item X-based: k-means is applied to the training data to identify cluster centers, followed by the selection of data points closest to these centers.
    \item Extreme Y: data is sorted based on their labels, and those with the largest Y values are selected.
\end{itemize}
Table~\ref{tab: ini-sv} presents the performance of Algorithm~\ref{alg: AKL} with these selection methods on Yacht and Parkinson datasets.
The results indicate that the poor selection method does have a detrimental impact on our performance, particularly evident in the case of Yacht where we struggle to fit the data. In contrast, the other two sensible methods demonstrate good and comparable performance.

In order to further improve, we introduce a dynamic strategy at the end of Section 3. In this strategy, we dynamically incorporate hard samples into the support dataset. We then integrate these approaches with the proposed dynamic strategy to evaluate its effectiveness, of which the results are presented in Table~\ref{tab: ini-sv-dyn}.
Based on these results, it is evident that the proposed dynamic strategy has a significantly positive impact on performance. It not only enhances accuracy but also reduces variance, resulting in more stable solutions. Even with the bad selection selection, the final performance is improved to a satisfactory level.
\begin{table}[]
\centering
\caption{Performance of Algorithm1 with different selection methods of initial support data.}\label{tab: ini-sv}
\begin{tabular}{@{}c|c|c|c|c|c|c@{}}
\toprule
Dataset            & Yacht     & Yacht   & Yacht   & Parkinson & Parkinson & Parkinson \\ \hline
Selection Approach & Extreme Y & Y-based & X-based & Extreme Y & Y-based   & X-based   \\ \hline
Mean of  $R^2$         & 0.0012    & 0.9957  & 0.9953  & 0.8115    & 0.9921    & 0.9928    \\ \hline
Std of  $R^2$          & 0.4805    & 0.0025  & 0.0032  & 0.0126    & 0.0015    & 0.0016   \\
\bottomrule
\end{tabular}
\end{table}

\begin{table}[]
\centering
\caption{Performance of Algorithm1 with dynamic strategy and different selection methods of initial support data.}\label{tab: ini-sv-dyn}
\begin{tabular}{@{}c|c|c|c|c|c|c@{}}
\toprule
Dataset            & Yacht     & Yacht   & Yacht   & Parkinson & Parkinson & Parkinson \\ \hline
Selection Approach & Extreme Y & Y-based & X-based & Extreme Y & Y-based   & X-based   \\ \hline
Mean of R2         & 0.9961    & 0.9982  & 0.9981  & 0.9712    & 0.9972    & 0.9966    \\ \hline
Std of R2          & 0.0126    & 0.0015  & 0.0016  & 0.0049    & 0.0007    & 0.0013   \\
\bottomrule
\end{tabular}
\end{table}

\vskip 0.2in
\bibliography{thesis}

\end{document}